\definecolor{polychrome_blue}{HTML}{045A8D}
\definecolor{lightgreen}{rgb}{0.88, 1, 0.88}
\definecolor{softgreen}{RGB}{224, 245, 210}  % A very light green
\definecolor{pastelyellow_full}{RGB}{250, 238, 135}
\colorlet{pastelyellow}{pastelyellow_full!70}
\definecolor{D}{HTML}{a0e7a0}          % Pastel Green
\newtcolorbox{mybox}[1][]{
    title=#1,
    fonttitle=\small,
    fontupper=\small,
    left=2mm,
    right=2mm,
    top=1mm,
    bottom=0mm,
}
\DeclareCiteCommand{\parencite}[\mkbibparens]
  {\usebibmacro{prenote}}
  {\usebibmacro{citeindex}%
   \printtext[bibhyperref]{\usebibmacro{cite}}}
  {\multicitedelim}
  {\usebibmacro{postnote}}
\global\boolfalse{cbx:parens}}
\def\eqref#1{equation~\ref{#1}}
\def\floor#1{\lfloor #1 \rfloor}
\def\1{\bm{1}}
\DeclareMathAlphabet{\mathsfit}{\encodingdefault}{\sfdefault}{m}{sl}
\SetMathAlphabet{\mathsfit}{bold}{\encodingdefault}{\sfdefault}{bx}{n}
\newtheorem{lem}{Lemma}[section]
\newtheorem{definition}[lem]{Definition}
\newtheorem{prop}[lem]{Proposition}
\definecolor{irisblue}{HTML}{134E6F}
\definecolor{iristeal}{HTML}{1AC0C6}
\definecolor{irisorange}{HTML}{FFA822}
\definecolor{irisgray}{HTML}{DEE0E6}
\definecolor{irisnavy}{HTML}{091A29}
\definecolor{polychrome_red}{HTML}{cb181d}
\definecolor{polychrome_blue}{HTML}{045a8d}
\newenvironment{proof}[1][Proof]{\par\noindent\textit{#1.} }{\hfill$\square$\par}
\newcommand{\setQ}[3]{Q^\sharp_{#1}(#2, #3)}
\newcommand{\setV}[2]{V^\sharp_{#1}(#2)}
\newcommand{\setA}[3]{A^\sharp_{#1}(#2, #3)}
\newcommand{\methodname}{polychromic PPO }
\title{
\Large \textbf{Polychromic Objectives for Reinforcement Learning}
\vspace{1.0em}
}
\author{\normalsize
Jubayer Ibn Hamid$^*$,
Ifdita Hasan Orney$^*$,
Ellen Xu,
Chelsea Finn,
Dorsa Sadigh
}
\date{
\vspace{1.0em}
\small{Stanford University} \\
\vspace{0.5em}
\small $^*$Equal contribution. Correspondence to \texttt{\{jubayer, ifdi1101\}@stanford.edu.}}
\begin{document}
\maketitle

\begin{abstract}
Reinforcement learning fine-tuning (RLFT) is a dominant paradigm for improving pretrained policies for downstream tasks. These pretrained policies, trained on large datasets, produce generations with a broad range of promising but unrefined behaviors. Often, a critical failure mode of RLFT arises when policies lose this diversity and collapse into a handful of easily exploitable outputs. This convergence hinders exploration, which is essential for expanding the capabilities of the pretrained policy and for amplifying the benefits of test-time compute scaling. To address this, we introduce an objective for policy gradient methods that explicitly enforces the exploration and refinement of diverse generations, which we call a polychromic objective. We then show how proximal policy optimization (PPO) can be adapted to optimize this objective. Our method (1) employs vine sampling to collect on-policy rollouts and (2) modifies the advantage function to reflect the advantage under our new objective. Experiments on BabyAI, Minigrid, and Algorithmic Creativity show that our method improves success rates by reliably solving a larger set of environment configurations and generalizes better under large perturbations. Moreover, when given multiple attempts in pass@$k$ experiments, the policy achieves substantially higher coverage, demonstrating its ability to maintain and exploit a diverse repertoire of strategies.
\end{abstract}

\section{Introduction}

Reinforcement learning fine-tuning (RLFT) is widely used to enhance the performance of pretrained models across diverse downstream domains. For instance, RLFT has been applied to steer large language models (LLMs) toward instruction following and complex reasoning~\citep{ouyang2022traininglanguagemodelsfollow, deepseekai2025deepseekr1incentivizingreasoningcapability, openai2024openaio1card, qwq32b}. A common thread across these settings is the availability of expressive generative models (i.e., pretrained distributions), trained on large and diverse datasets, that already exhibit a broad repertoire of strategies. RLFT then refines these distributions by reinforcing the strategies that yield higher reliability and performance.

However, exploration during RLFT remains a central challenge. Prior work \citep{cui2025entropymechanismreinforcementlearning, zhao2025echochamberrlposttraining} has documented entropy collapse: instead of expanding their repertoire, fine-tuned policies concentrate probability mass on a narrow set of high-reward behaviors already present in the pretrained distribution, effectively sacrificing entropy and diversity. This limits exploration and prevents the discovery of alternative strategies that could expand the base model’s capabilities. Empirically, this effect is captured by the pass@$k$ metric, which measures the probability that at least one out of $k$ independently sampled rollouts succeeds. When $k$ is large, RL-fine-tuned models often underperform their pretrained counterparts because the latter retain greater diversity \citep{yue2025doesreinforcementlearningreally, wu2025invisibleleashrlvrescape}. Such diversity is practically important; it supports generalization to new tasks \citep{kumar2020solutionneedfewshotextrapolation} and amplifies test-time compute scaling \citep{snell2024scalingllmtesttimecompute}.

The goal of this paper is to study how to induce policies to explore and refine a diverse repertoire of generations through RLFT. Our key insight is that algorithms should optimize objectives that explicitly encourage exploration and refinement of the diverse generations already embedded in the pretrained distribution. Standard regularization techniques, such as entropy bonuses, often induce local or token-level variation but fail to promote semantic or trajectory-level exploration and can be overshadowed by the RL objective. In contrast, we propose a unified formulation that directly optimizes for a diverse set of successful behaviors, encouraging the policy to generate broad, varied trajectories rather than collapsing onto a few high-reward ones. 

To this end, we propose set reinforcement learning, where the objective is defined over a set of trajectories sampled independently and evaluated by a multi-sample objective~\citep{tang2025optimizinglanguagemodelsinference}. Unlike standard RL, which maximizes the likelihood of a single optimal trajectory, set RL maximizes the likelihood of an optimal set of trajectories under a set-level objective. Within this framework, we introduce polychromic objectives which combine reward and diversity by scoring sets highly only if they contain both successful and diverse trajectories. Optimizing a policy with respect to this objective is a principled approach towards encouraging the policy to explore and search for a diverse set of generations that also maximize reward. We then instantiate one such objective and show how proximal policy optimization (PPO)~\citep{schulman2017proximalpolicyoptimizationalgorithms} can be adapted to optimize it effectively, yielding a practical algorithm we call polychromic PPO. We evaluate our method on BabyAI~\citep{chevalierboisvert2019babyaiplatformstudysample}, Minigrid~\citep{chevalierboisvert2019babyaiplatformstudysample}, and Algorithmic Creativity~\citep{nagarajan2025rolldicelook}. Our results show that polychromic PPO achieves higher rewards and success rates, generates diverse trajectories that substantially improve pass@$k$ coverage, and generalizes more robustly to perturbations in the initial state. 

\section{Preliminaries}

We consider a  Markov decision process defined by state space $\mathcal{S}$, action space $\mathcal{A}$, transition dynamics distribution $p(s_{t+1}\mid s_t,a_t)$, reward function $r: \mathcal{S}\times \mathcal{A} \rightarrow \mathbb{R}$,  initial state distribution $\rho_0$ and discount factor $\gamma \in (0, 1)$. In reinforcement learning (RL), the goal is to learn a policy that maximizes the value $V(\pi_\theta) = \mathbb{E}_{\tau \sim \pi_\theta}\left[\sum_{t=0}^\infty \gamma^t r(s_t, a_t)\right] = \mathbb{E}_{\tau \sim \pi_\theta}\left[R(\tau)\right]$ where $R(\tau)$ is the (discounted) sum of rewards in trajectory $\tau$. The following, known as the performance difference lemma, is a useful result~\citep{Kakade2002ApproximatelyOA}: 
\begin{align} \label{eq: performance_difference_lemma}V_{\pi_\theta}(s_0) - V_{\pi_\beta}(s_0) = \frac{1}{1 - \gamma}\mathbb{E}_{s \sim d^{\pi_\theta}(\cdot \mid s_0), a \sim \pi_\theta (\cdot \mid s)}\left[ A^{\pi_\beta}(s, a)\right].\end{align}
This shows that any update from policy $\pi_\beta$ to policy $\pi_\theta$ such that, at all states visited by $\pi_\theta$, the actions taken by $\pi_\theta$ yield positive advantage relative to $\pi_\beta$ (i.e., $A^{\pi_\beta}(s, a) > 0$) will ensure that $\pi_\theta$ achieves strictly better performance i.e., $V({\pi_\theta}) > V({\pi_\beta})$. One widely used RL algorithm is proximal policy optimization (PPO)~\citep{schulman2017proximalpolicyoptimizationalgorithms} which, iteratively, collects rollouts under a behavior policy $\pi_\beta$ and updates a policy $\pi_\theta$ by constraining the divergence between the two policies from growing too large: letting $r_t = {\pi_\theta(a_t \mid s_t)}/{\pi_{\beta}(a_t \mid s_t)}$, PPO optimizes 
\begin{align} \label{eq: ppo_objective} \mathbb{E}_{s_t \sim d^{\pi_{\beta}}(\cdot), a_t \sim \pi_{\beta}(\cdot \mid s_t)}\left[ \min \left( r_t\hat{A}(s_t, a_t), \mathrm{clip}(r_t, 1 - \epsilon, 1 + \epsilon)\hat{A}(s_t, a_t)\right)\right]\end{align}
where $d^{\pi_{\beta}}(s)$ is the stationary state-visitation distribution under policy $\pi_\beta$. Here, we use importance sampling to use actions sampled from $\pi_\beta$. To use states from the visitation distribution of $\pi_\beta$, we clip the ratios to keep the divergence small~\citep{schulman2017trustregionpolicyoptimization}.

\section{Reinforcing Exploration During RLFT}
\label{sec: method}

We aim to address the problem of entropy collapse during RLFT through a method that explicitly induces exploration by encouraging the generation of diverse trajectories. In \S\ref{subsec: set_reinforcement_learning}, we introduce a variant of RL that allows for objectives beyond reward maximization and observe its various properties. This framework will provide us with a way to optimize objectives that are beyond return maximization, such as objectives that also encourage exploration. In \S\ref{subsec: eg_polychromic_objective}, we specify the objective used within this framework for that purpose, which we call a polychromic objective. In \S\ref{subsec: polychromic_ppo}, we propose our practical algorithm for optimizing the objective. 

\subsection{Set Reinforcement Learning}
\label{subsec: set_reinforcement_learning}
We introduce a variant of the standard RL setup in which, given an objective function, we optimize over a \textit{set} of trajectories. We call this framework \textbf{set reinforcement learning} (set RL), where the goal is to solve the following optimization problem at a given state:

\begin{align}
\label{eq: set_RL}
    \max_\theta \mathbb{E}_{\tau_{1:n} \sim \pi_\theta(\cdot \mid s_0)}\left[ f(s_0, \tau_1,\cdots, \tau_n)\right].    
\end{align}
Here $f(s_0, \tau_1,\cdots, \tau_n)$ is some objective function over trajectories $\tau_{1:n} = \{ \tau_{1},\cdots, \tau_n\}$ sampled independently from the policy. This is in contrast to standard RL where the problem, $\max_\theta \mathbb{E}_{\tau \sim \pi_\theta(\cdot \mid s_0)}\left[ R(\tau)\right]$, uses an objective function, $R(\tau)$, defined over a single trajectory. Intuitively, algorithms optimizing \cref{eq: set_RL} are optimizing for a set of trajectories. The generality of set RL makes it a powerful tool for objectives beyond sum of rewards. The defining feature of this setup is that, when optimizing \cref{eq: set_RL} using policy gradient methods, all trajectories in the set $\tau_{1:n}$ must receive the same learning signal. Note that the objective can be optimized using policy gradient:  

\begin{align}
\label{eq: gradient_set_RL} 
& \nabla_\theta \mathbb{E}_{\tau_{1:n} \sim \pi_\theta(\cdot \mid s_0)}[ f(s_0, \tau_{1:n})] \nonumber \\
& = \mathbb{E}_{\tau_{1:n} \sim \pi_\theta(\cdot \mid s_0)} \Big[(f(s_0, \tau_{1:n}) - \hat{f}(s_0)) \sum_{i=1}^n \sum_{t=0}^T \nabla_\theta \log \pi_\theta (a_t^{(i)} \mid s_t^{(i)})\Big]
\end{align}

where $\hat f(s_0)$ is a baseline for variance reduction. By definition, $\hat f(s_0)$ must be independent of the particular trajectories $\tau_{1:n}$ sampled inside the expectation. In this paper, we use the baseline $\hat{f}(s_0) = \mathbb{E}_{\tau_{1:n} \sim \pi_\theta(\cdot \mid s_0)}\left[ f(s_0, \tau_{1:n})\right]$. The key feature of this estimator is that the advantage term $f(s_0,\tau_{1:n})- \hat{f}(s_0)$ is shared across all trajectories in the set $\tau_{1:n}$. Therefore, the log-probability gradient of all trajectories in a set must be multiplied by the same factor, ensuring that all trajectories in the set receive the same learning signal. Formally, set RL is defined as problems for which there is no function $g(s_0,\tau)$, independent of $\theta$, satisfying $\mathbb{E}_{\tau_{1:n}\sim \pi_\theta(\cdot\mid s_0)}[f(s_0,\tau_{1:n})] = \mathbb{E}_{\tau\sim \pi_\theta(\cdot\mid s_0)}[g(s_0,\tau)]$ for all policies $\pi_\theta$. 

This distinction is crucial. Note that an objective written over a set of trajectories does not necessarily define a new problem beyond standard RL. For example, if $f(s_0,\tau_{1:n})=\frac{1}{n}\sum_{i=1}^n R(\tau_i)$,
then $\mathbb{E}_{\tau_{1:n}\sim \pi_\theta(\cdot\mid s_0)}[f(s_0,\tau_{1:n})] = \mathbb{E}_{\tau\sim \pi_\theta(\cdot\mid s_0)}[R(\tau)]$, and the problem reduces exactly to standard RL. We use the term \emph{set reinforcement learning} to refer specifically to objectives that are not reducible to any single-trajectory objective, i.e. problems that cannot be written as $\mathbb{E}_{\tau\sim \pi_\theta(\cdot\mid s_0)}\left[ g(s_0, \tau)\right]$ for all $\theta$ where $g$ is independent of $\theta$. These are precisely the settings in which the value of a trajectory depends on the other trajectories, also sampled from $\pi_\theta$, with which it is evaluated, making set-level optimization essential. 

This setup contrasts with \citet{tang2025optimizinglanguagemodelsinference}, which employs trajectory-specific baselines leading to leave-one-out advantages of the form $f(s_0,\tau_{1:n})-f(s_0,\tau_{1:i-1},\tau_{i+1:n})$; the update for trajectory $\tau_i$ depends on a baseline computed from the remaining trajectories, yielding individualized credit assignment. In our case, a uniform baseline provides a common update signal to all trajectories in the set, enabling optimization with respect to the quality of the entire \emph{set} of trajectories. In other words, by definition, set RL does not distinguish between trajectories within a set, but instead optimizes the policy by comparing across sets as a whole. The framework allows for a broader class of objectives, such as inference-time objectives ~\citep{tang2025optimizinglanguagemodelsinference} and objectives that induce exploration as we will show in \S\ref{subsec: eg_polychromic_objective}. 

Before we move on to our proposed algorithm, it is helpful to construct a notion of value functions in the framework of set reinforcement learning i.e., the expected sum of rewards as specified by the objective. We do so in a simplified (but impractical) setting. Suppose that at every state $s$ encountered during an on-policy rollout, we can sample a set of $n$ actions, $a_{1:n} \sim \pi_\theta(\cdot \mid s)$, which lead to $n$ trajectories stemming out of every state that branch out at every timestep. Then, given this set of actions, $a_{1:n}$, taken from the state $s$, the policy gets the set reward $f(s, a_{1:n})$ with respect to the objective function $f$. Although such a setup is impractical for long-horizon tasks, analyzing it will help us better understand what set RL algorithms should aim to achieve.

Under this assumption, the data collection process naturally generates a state-visitation tree. Beginning at the root state $s_0$, each visited state $s$ branches into $n$ children, one for each sampled action. At depth $t$, the tree therefore contains $n^t$ states, denoted by $s_{t}^{(1)}, \dots, s_{t}^{(n^t)}$. For each state $s_{t}^{(i)}$, the corresponding set of $n$ sampled actions is written as $(a_t)_{1:n}^{(i)}$. Given this tree-structured rollout and assuming an infinite-horizon discounted return, we define the value functions associated with set reinforcement learning as follows:

\begin{definition}
    Given a policy $\pi$ generating a state-visitation tree and a set objective $f : \mathcal{S} \times \mathcal{A}^n \rightarrow \mathbb{R}$, the set value function $\setV{\pi}{s; f}$ and the set $Q$-function $\setQ{\pi}{s}{a_{1:n}; f}$ are defined as 
    {\allowdisplaybreaks
    \begin{align}
    \label{eq:set_value_functions}
        V^\sharp_\pi(s; f) &= \mathbb{E}_{\pi}\left[ \sum_{t=0}^\infty \sum_{i = 1}^{n^t} \gamma^t f(s_{t}^{(i)}, (a_t)_{1:n}^{(i)}) \Big| s_0 = s \right], \\
        Q^\sharp_\pi(s, a_{1:n}; f) &= \mathbb{E}_\pi \left[ \sum_{t=0}^\infty \sum_{i = 1}^{n^t} \gamma^t f(s_{t}^{(i)}, (a_t)_{1:n}^{(i)}) \Big| \substack{s_0 = s, \\ (a_0)_{1:n} =a_{1:n}} \right]
    \end{align}}
\end{definition}

% --- Wrapped bar chart on the right ---
\begin{wrapfigure}{r}{0.3\textwidth} % r=right, l=left; width includes caption
  % \vspace{-0.5em} % tuck up into preceding text if needed
  \centering
  \includegraphics[width=\linewidth]{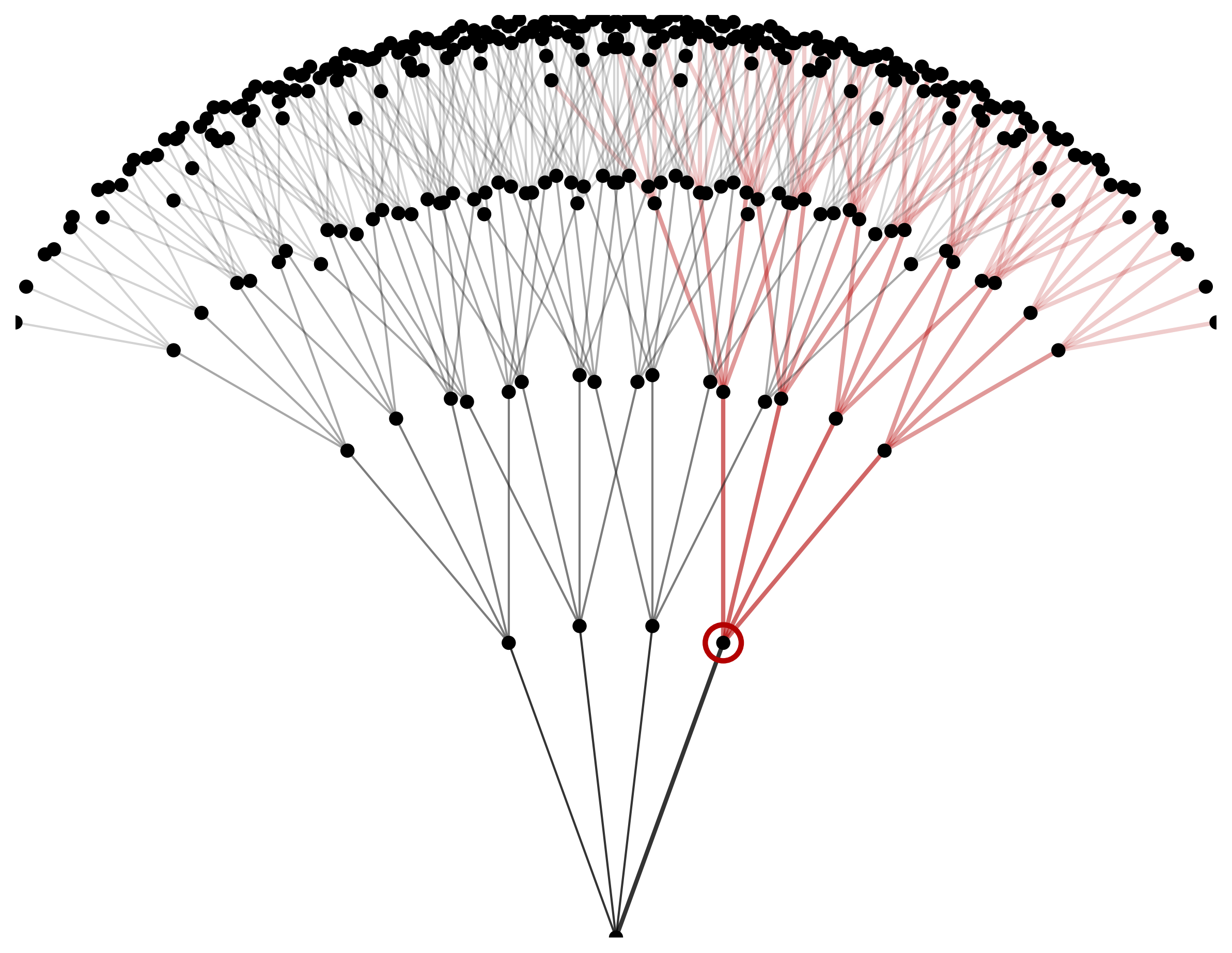}
  \caption{\small The set value of a state (circled) is the expected discounted return of the subtree (highlighted) rooted in this state.}
  \label{fig: set_value_tree}
  \vspace{-1em} % tighten space below the float
\end{wrapfigure}

We use the notation $V^\sharp$ and $Q^\sharp$ to distinguish it from the value function and $Q$-function in standard RL. Here, we assume that the discount factor $\gamma \in (0, \frac{1}{n})$ to ensure values remain bounded - the range is smaller since we add the expected sum of rewards from $n$ actions stemming out of each state. Intuitively, $V^\sharp_\pi(s)$ is the expected discounted return of the entire state tree rooted at $s$ (illustrated in \cref{fig: set_value_tree}), where the reward at each node is given by the set objective. In contrast, $Q^\sharp_\pi(s, a_{1:n})$ evaluates the expected return of the tree that begins at $s$ with the specific action set $a_{1:n}$. Note that, although we assumed sets at the action-level instead of at the trajectory-level, this setting is equivalent to the trajectory-level set RL as in \cref{eq: set_RL} under the objective function $F(s, \tau_{1:n^T}) := \sum_{t=0}^T \sum_{i=1}^{n^t} \gamma^t f(s_t^{(i)}, (a_t)_{1:n}^{(i)})$ in the finite-horizon setting (in the infinite horizon setting, we would have to take the limit $T \rightarrow \infty$).

These definitions help us better understand the objective of set reinforcement learning. In standard reinforcement learning, we want to learn the policy such that the expected return from a trajectory is maximized. In set reinforcement learning, we want to learn the policy that maximizes the expected return from a \textit{tree} generated using our policy is maximized. Given these definitions, we have the following result which is an extension of the performance difference lemma ~\citep{Kakade2002ApproximatelyOA} to the set reinforcement learning setting (see \cref{appendix: proof_poly_per_diff_lemma} for proof):

\begin{lem}
\label{lem:set_perf_diff_lemma}
Given any two policies $\pi_\theta$ and $\pi_\beta$ and a fixed initial state $s_0$, under any set objective function $f$, \begin{align*}
    \setV{\pi_\theta}{s_0; f} - \setV{\pi_\beta}{s_0; f} = \frac{1}{1 - \gamma n}\mathbb{E}_{s \sim d^\sharp_{\pi_\theta}(\cdot), a_{1:n} \sim \pi_\theta(\cdot \mid s)}\left[ \setA{\pi_\beta}{s}{a_{1:n}; f} \right].     
\end{align*}
\end{lem}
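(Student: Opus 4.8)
The plan is to mirror the classical proof of \cref{eq: performance_difference_lemma}, replacing ``the trajectory drawn from $\pi_\theta$'' by ``the state-visitation tree grown by $\pi_\theta$'' and the per-step reward $r(s_t,a_t)$ by the per-node set reward $f$. First I would record the Bellman-type identities that fall out of \cref{eq:set_value_functions} by unrolling the root and its first layer of $n$ children: for any policy $\pi$,
\[ \setV{\pi}{s; f} = \mathbb{E}_{a_{1:n}\sim\pi(\cdot\mid s)}\!\Big[\, f(s,a_{1:n}) + \gamma\sum_{j=1}^n \mathbb{E}_{s'\sim p(\cdot\mid s,a_j)}\!\big[\setV{\pi}{s'; f}\big]\Big], \]
\[ \setQ{\pi}{s}{a_{1:n}; f} = f(s,a_{1:n}) + \gamma\sum_{j=1}^n \mathbb{E}_{s'\sim p(\cdot\mid s,a_j)}\!\big[\setV{\pi}{s'; f}\big], \]
so that $\setA{\pi}{s}{a_{1:n};f}=\setQ{\pi}{s}{a_{1:n};f}-\setV{\pi}{s;f}$. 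I would also note that if $|f|\le f_{\max}$ then $|\setV{\pi}{\cdot;f}|\le f_{\max}\sum_{t\ge0}(\gamma n)^t=f_{\max}/(1-\gamma n)<\infty$; the hypothesis $\gamma<1/n$ is precisely what makes this bound (and the normalization below) finite.

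Next comes the telescoping step. Fix the root $s_0$ and let $s_t^{(1)},\dots,s_t^{(n^t)}$ be the depth-$t$ nodes of the tree grown by $\pi_\theta$, with $s_t^{(i)}$ having children $s_{t+1}^{(i,1)},\dots,s_{t+1}^{(i,n)}$ produced by the action set $(a_t)_{1:n}^{(i)}\sim\pi_\theta(\cdot\mid s_t^{(i)})$. Writing $A_t:=\sum_{i=1}^{n^t}\setV{\pi_\beta}{s_t^{(i)};f}$, the partial sums $\sum_{t=0}^{T}\big(\gamma^{t+1}A_{t+1}-\gamma^t A_t\big)=\gamma^{T+1}A_{T+1}-\setV{\pi_\beta}{s_0;f}$ telescope, and $|\gamma^{T+1}A_{T+1}|\le(\gamma n)^{T+1}f_{\max}/(1-\gamma n)\to0$, so $\sum_{t\ge0}(\gamma^{t+1}A_{t+1}-\gamma^tA_t)=-\setV{\pi_\beta}{s_0;f}$ pathwise. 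Adding this to $\setV{\pi_\theta}{s_0;f}=\mathbb{E}_{\pi_\theta}\big[\sum_{t\ge0}\gamma^t\sum_{i=1}^{n^t}f(s_t^{(i)},(a_t)_{1:n}^{(i)})\big]$, merging the two expectations (legitimate since both $\sum_t|{\cdot}|$ are dominated by a multiple of $\sum_t(\gamma n)^t$), and regrouping the depth-$(t+1)$ summands by their depth-$t$ parent gives
\[ \setV{\pi_\theta}{s_0;f}-\setV{\pi_\beta}{s_0;f}=\mathbb{E}_{\pi_\theta}\Big[\sum_{t\ge0}\gamma^t\sum_{i=1}^{n^t}\Big(f(s_t^{(i)},(a_t)_{1:n}^{(i)})+\gamma\sum_{j=1}^n\setV{\pi_\beta}{s_{t+1}^{(i,j)};f}-\setV{\pi_\beta}{s_t^{(i)};f}\Big)\Big]. \]
Conditioning the $i$-th inner summand on $s_t^{(i)}$ and averaging over $(a_t)_{1:n}^{(i)}\sim\pi_\theta(\cdot\mid s_t^{(i)})$ and the subsequent transitions, the Bellman identities collapse the bracket to $\mathbb{E}_{a_{1:n}\sim\pi_\theta(\cdot\mid s_t^{(i)})}\big[\setQ{\pi_\beta}{s_t^{(i)}}{a_{1:n};f}-\setV{\pi_\beta}{s_t^{(i)};f}\big]=\mathbb{E}_{a_{1:n}\sim\pi_\theta(\cdot\mid s_t^{(i)})}\big[\setA{\pi_\beta}{s_t^{(i)}}{a_{1:n};f}\big]$.

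Finally I would package the remaining sum as an expectation over a visitation distribution. Define $d^\sharp_{\pi_\theta}$ (rooted at $s_0$, suppressed in the notation) by $d^\sharp_{\pi_\theta}(B)=(1-\gamma n)\sum_{t\ge0}\gamma^t\,\mathbb{E}_{\pi_\theta}\big[\#\{i\le n^t : s_t^{(i)}\in B\}\big]$; this is a probability measure since its total mass is $(1-\gamma n)\sum_{t\ge0}(\gamma n)^t=1$, and for any $g$ it satisfies $\sum_{t\ge0}\gamma^t\,\mathbb{E}_{\pi_\theta}\big[\sum_{i=1}^{n^t}g(s_t^{(i)})\big]=\tfrac{1}{1-\gamma n}\,\mathbb{E}_{s\sim d^\sharp_{\pi_\theta}}[g(s)]$. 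Applying this with $g(s)=\mathbb{E}_{a_{1:n}\sim\pi_\theta(\cdot\mid s)}[\setA{\pi_\beta}{s}{a_{1:n};f}]$ turns the previous display into the claimed identity. The main obstacle is bookkeeping rather than a new idea: I must interchange the infinite tree-sum with the expectation (everything is dominated by a geometric series in $\gamma n<1$), regroup the depth-$(t+1)$ layer by parents so the Bellman identity applies node-by-node, and check that $\gamma<1/n$ is exactly the condition needed both for the telescope tail to vanish and for $d^\sharp_{\pi_\theta}$ to be normalizable; once these are in place the algebra is identical to the scalar case.
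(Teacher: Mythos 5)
Your proposal is correct and follows essentially the same route as the paper's proof: telescoping $\setV{\pi_\beta}{\cdot;f}$ over the tree grown by $\pi_\theta$, collapsing each node's bracket to $\setQ{\pi_\beta}{\cdot}{\cdot;f}-\setV{\pi_\beta}{\cdot;f}$ via the one-step Bellman identity, and then rewriting the discounted tree-sum as an expectation under the normalized visitation distribution $d^\sharp_{\pi_\theta}$ (the paper's auxiliary lemma). The only difference is presentational — you make the tail bound $(\gamma n)^{T+1}\to 0$ and the expectation/sum interchange explicit, which the paper glosses over — so no gap to report.
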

Here $d^\sharp_\pi(s)$ is the stationary state-visitation distribution induced by $\pi_\theta$. Similar to the standard reinforcement learning setting, this result says that if we update the policy $\pi_\beta$ to $\pi_\theta$ such that, at all states visited by $\pi_\theta$, the advantage $\setA{\pi_\beta}{s}{a_{1:n}; f} = \setQ{\pi_\beta}{s}{a_{1:n}; f} - \setV{\pi_\beta}{s}$ of a set of actions taken by our new policy $\pi_\theta$ is positive, then we will get a policy that has strictly higher performance (as measured by its value). This suggests that many of the principles underlying methods like PPO can be extended to the set reinforcement learning paradigm as well for policy improvement.

Having seen the generality of set reinforcement learning, we now construct an objective function to be used within this framework that will allow us to induce exploration.

\subsection{A Practical Polychromic Objective}
\label{subsec: eg_polychromic_objective}

Our central construction is the notion of polychromic objectives that are aimed towards training a policy to explore and learn a diverse set of behaviors. Intuitively, these are set objective functions ${f_\mathrm{poly}} : \mathcal{S} \times \mathcal{T}^n \rightarrow \mathbb{R}$ that jointly capture (1) the success of a set of trajectories in terms of reward and (2) the degree to which the set exhibits exploration or diversity. While we later generalize this construction in \S\ref{sec: theory}, in this section we focus on the specific instance used in our algorithm and experiments: \begin{align}
\label{eq: eg_poly_objective}
{f_\mathrm{poly}}(s, \tau_{1:n}) := \frac{1}{n} \sum_{i=1}^n R(\tau_i) d(s, \tau_{1:n}),
\end{align}
where $R(\tau_i)$ is the discounted sum of rewards in trajectory $\tau_i$, and $d(s, \tau_{1:n})$ is a function that quantifies the diversity of trajectories within the set. We require that both $R(\tau_i)$ and $d(s, \tau_{1:n})$ are normalized between 0 and 1. 

Because the set-RL gradient uses a shared advantage for all trajectories in a set, this objective increases the likelihood of successful behaviors \emph{and} diverse exploratory trajectories. Unlike prior approaches, the shared advantage term amplifies exploratory trajectories that do not (yet) yield high rewards, pushing the policy to discover diverse strategies. 

Various diversity metrics have been studied and incorporated in reward functions in prior works; examples include the Vendi Score ~\citep{friedman2023vendiscorediversityevaluation} and classifier-guided diversity ~\citep{zhang2025noveltybenchevaluatinglanguagemodels, li2025jointlyreinforcingdiversityquality}. Our algorithm is designed to be agnostic to the choice of metric: given any diversity function, we evaluate the success and diversity of a set of trajectories and optimize the policy to maximize both in sets.

\subsection{Polychromic PPO}
\label{subsec: polychromic_ppo}

In this section, we present an algorithm for optimizing \cref{eq: eg_poly_objective} by modifying PPO, which is motivated by the extension of the performance difference lemma to the set reinforcement learning framework (as shown in Lemma~\ref{lem:set_perf_diff_lemma}). Our approach differs from standard PPO in two key respects: the method using which we sample on-policy rollouts and the advantage function used in the update. 

A direct implementation of the definition of set advantage functions as in Lemma~\ref{lem:set_perf_diff_lemma} would require sampling $n$ actions from every visited state, leading to exponential data requirements. To avoid this, we instead rely on vine sampling~\citep{schulman2017trustregionpolicyoptimization, kazemnejad2025vinepporefiningcreditassignment} for on-policy data collection. In vine sampling, after collecting an initial set of rollouts, we select a subset $\{s_1, \ldots, s_p\}$ of the states visited, called rollout states. At each rollout state $s_i$, we generate $N$ additional rollouts starting from $s_i$. This procedure ensures that we obtain multiple states with independently sampled trajectory sets stemming out of them. The particular scheme we use is closely related to vine TRPO~\citep{schulman2017trustregionpolicyoptimization}; details are deferred to \S\ref{appendix: vine_sampling}. Our algorithm, however, is compatible with any vine-sampling method that guarantees sufficient vine coverage. Note that, since vine sampling requires the ability to reset the environment, our algorithm is only applicable to environments where such resets are possible. 

Given access to sets of trajectories from each rollout state, we can estimate the polychromic value functions and, in turn, construct a practical polychromic advantage estimator. At a rollout state $s_t$ from which we generated $N> n$ trajectories, we estimate the polychromic advantage as \begin{align*}
\setA{}{s_t}{a_t; {f_\mathrm{poly}}} = \frac{1}{n}\sum_{i=1}^n R(\tau_i)d(s_t, \tau_{1:n}) - \hat{V}^\sharp (s_t; {f_\mathrm{poly}}) \end{align*} where $a_t \in \tau_i$ for some $i \in {1,\ldots,n}$. Since PPO requires an advantage defined for individual actions, we assign to each action $a_t$ the advantage of the set $\tau_{1:n}$ that contains it. In other words, given a set of trajectories $\tau_{1:n}$, all actions taken from $s_t$ in this set receive the same update signal as desired in set reinforcement learning. We use the following Monte Carlo estimate of the value baseline: $\setV{}{s_t ; {f_\mathrm{poly}}} = \frac{1}{M}\sum_{i=1}^M {f_\mathrm{poly}}(s_t, \tau_{1:n}^{(i)}),$ where ${\tau_{1:n}^{(i)}}, {i} \in \{1,\cdots,M\}$ denotes the $M$ independently sampled sets of $n$ trajectories starting from $s_t$. This unbiased estimate was sufficient for our experiments, but one can trade off variance further by using biased estimates which we leave to future work. 

For non-rollout states, the update remains the same as standard PPO, using generalized advantage estimation (GAE)~\citep{schulman2018highdimensionalcontinuouscontrolusing}. As is often used in practical implementations of PPO, we also include a per-state KL penalty $D_\mathrm{KL}\big(\pi_\beta(\cdot\mid s)|\pi_\theta(\cdot\mid s)\big)$ at every state visited, which we found helpful for stability. The pseudocode is presented in \cref{alg:poly_ppo_short}, with modifications relative to PPO highlighted; extended pseudocode and implementation details are given in \cref{appendix: algorithm_implementation}. We report all hyperparameters in \S\ref{appendix: pseudocode}. Note that the size of sets $n$ used in set RL and the number of trajectories $N$ generated from rollout states are fixed hyperparameters (in all our experiments, we used $n = 4$ and $N = 8$). 

Note that vine sampling is not a necessary component of the algorithm. Consider an environment with a fixed initial state. In this case, if we sample rollouts independently without any vine sampling, we will naturally get multiple rollouts from the fixed initial state allowing us to apply the polychromic advantage to the initial state. Vine sampling simply enables us to apply the polychromic advantage to a larger number of states (i.e., all the rollout states) which helps encourage exploration at various stages of a rollout. 

\begin{algorithm}
\caption{Polychromic PPO}
\label{alg:poly_ppo_short}
\begin{algorithmic}[1]
\FOR{iteration $= 1,2,\dots$}
  \STATE Collect trajectories under $\pi_\beta$; \textcolor{polychrome_red}{rollout vines $\tau_{1:N}$ from rollout states}
  \IF{$s_t$ rollout state}
      \STATE \textcolor{polychrome_red}{Form sets, $g_1,\ldots,g_M$ of $n$ trajectories from $s_t$}
      \STATE \textcolor{polychrome_red}{Set {\footnotesize $\hat{A}(s_t,a_t) = {f_\mathrm{poly}}(s_t,g_i) - \tfrac{1}{M}\sum_{j=1}^M{f_\mathrm{poly}}(s_t,g_j)$ for $(s_t,a_t)\!\in g_i$}}
    \ELSE \STATE Compute $\hat{A}(s_t,a_t)$ via GAE
  \ENDIF
  \STATE Update $\pi_\theta$ for $K$ epochs on minibatches $\mathcal{B}$ by maximizing the PPO objective in \cref{eq: ppo_objective} 
  \STATE Set $\pi_\beta \gets \pi_\theta$
\ENDFOR
\end{algorithmic}
\end{algorithm}

\section{Experimental Evaluation}
Our experiments aim to answer two questions: (1) How does polychromic PPO, a set RL algorithm that explicitly encourages diverse trajectory generation,
affect performance? More specifically, does improving diversity come at a significant cost in accuracy and success rate? (2) Does polychromic PPO encourage the policy to explore and learn diverse behaviors? In particular, learning to solve a task through diverse generations should, ideally, increase the pass@$k$ performance i.e., the probability of succeeding at least once when given multiple attempts. (3) Does encouraging the policy to explore and maximize the diversity of generated trajectories help the policy to be more robust to perturbations in the state-visitation distribution? To address these questions, we evaluate on Minigrid~\citep{chevalierboisvert2019babyaiplatformstudysample}, BabyAI~\citep{chevalierboisvert2019babyaiplatformstudysample}, and Algorithmic Creativity~\citep{nagarajan2025rolldicelook}. 

Minigrid and BabyAI are grid-world platforms with multiple rooms connected by locked doors and populated with keys, balls, and distractors. Agents receive natural language goals ranging from simple (for example, \textit{go to the red ball}) to highly compositional (for example, \textit{open a red door and then go to the ball on your left after placing the grey ball next to a door}). We use a language-conditioned convolutional neural network (CNN) policy that is pretrained on expert demonstrations provided in the official datasets ~\citep{ chevalierboisvert2019babyaiplatformstudysample}. We then fine-tune and evaluate on 50 fixed configurations (each configuration specifies a grid layout and mission pair). 

Algorithmic Creativity is a set of tasks for quantifying model creativity ~\citep{nagarajan2025rolldicelook}. In the \textit{triangle discovery} task, an agent must autoregressively output sequences of triangles from undirected graphs that contain many triangles. Solving the task in each graph requires recall and combinations of past knowledge, as the graph is not revealed in-context, but learned during pretraining and through interactions in RLFT. We pretrain on data drawn from 10 graphs and fine-tune on 3 graphs. Here, the verifiable reward is whether the generated sequence is a valid triangle in the graph.

We compare \methodname (Poly-PPO) with REINFORCE with baseline~\citep{10.1007/BF00992696} and standard PPO~\citep{schulman2017proximalpolicyoptimizationalgorithms}. Furthermore, we compare with a UCB-style regularization ~\citep{azar2017minimaxregretboundsreinforcement} where we add $\lambda_\mathrm{UCB} \cdot \min \{ 1,N(s,a)^{-\frac{1}{2}}\}$ to every advantage $\hat{A}(s, a)$. Here, $N(s, a)$ is the number of times that action $a$ was sampled from state $s$ and $\lambda_\mathrm{UCB}$ is a hyperparameter we tune for each method. For polychromic PPO, we define the diversity function $d(s, \tau_{1:n})$ to be the fraction of semantically distinct trajectories in $\tau_{1:n}$; in Minigrid/BabyAI, two trajectories are called distinct if they visit different sets of rooms and, in Algorithmic Creativity, two trajectories are distinct if they visit different sets of nodes. In both cases, $d=0$ if all trajectories visit the same set of rooms or nodes. More details on the environment and our implementation can be found in \S\ref{appendix: algorithm_implementation}. Throughout all settings, we generate $N = 8$ trajectories at rollouts states for vine sampling and use sets of size $n = 4$ for poly-PPO (see \cref{alg:poly_ppo_short}).

All environments we evaluate on test on tasks that are long-horizon, often requiring sequential plans that change, and sparse reward. Consequently, fine-tuning requires an RL method that can both explore and exploit. Since the pretrained policy struggles to solve several tasks, the policy must strategically explore during RL fine-tuning, and to succeed across all evaluation tasks, the policy must avoid collapsing onto behaviors that solve only a subset while failing to generalize to the rest.

% --- Wrapped bar chart on the right ---
\begin{wrapfigure}{r}{0.35\textwidth} % r=right, l=left; width includes caption
  % \vspace{-0.5em} % tuck up into preceding text if needed
  \centering
  \includegraphics[width=\linewidth]{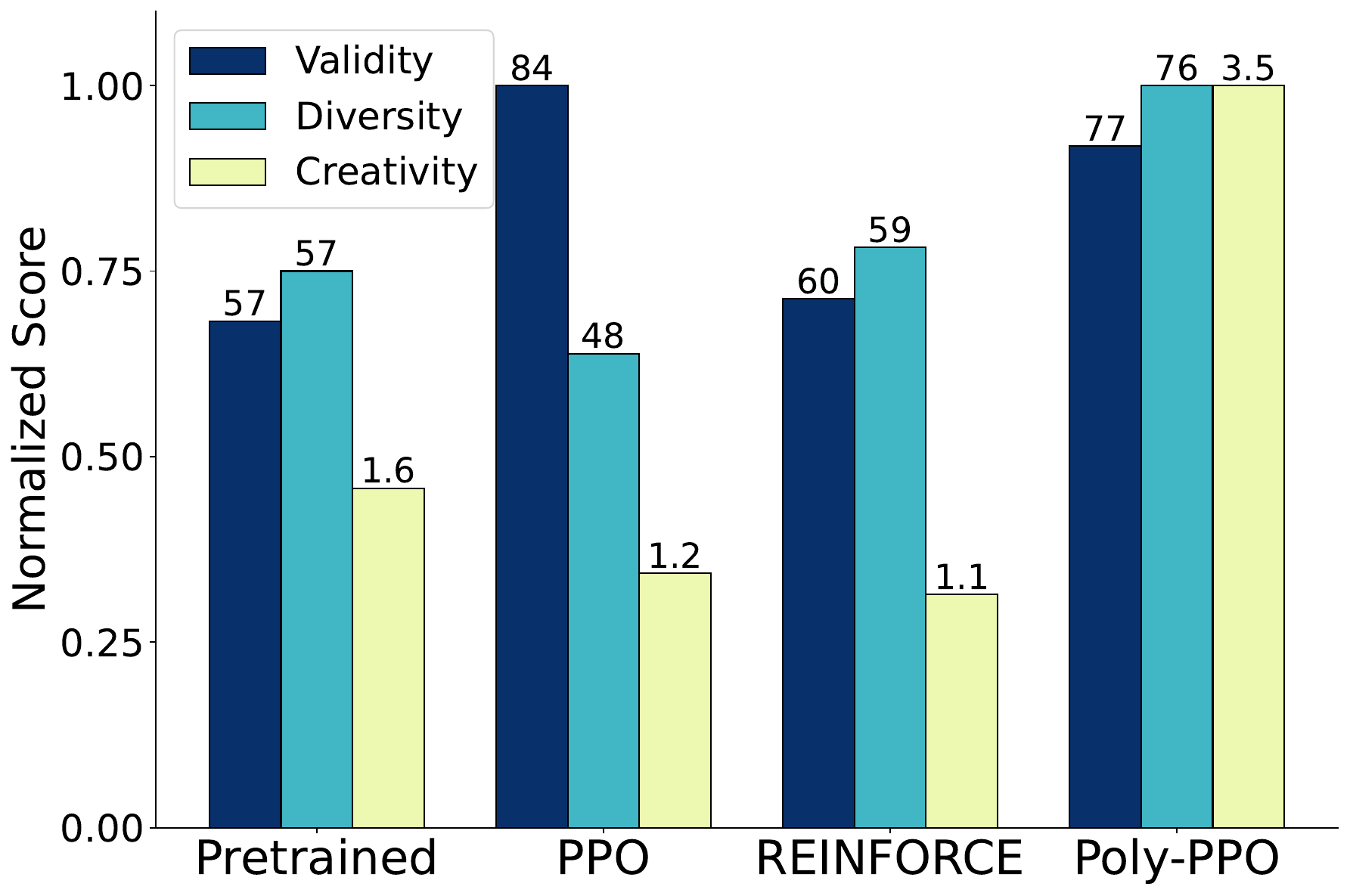}
  \caption{\small Results on Algorithmic Creativity. Bars show normalized values for each metric, with raw values above each bar.}
  \label{fig: roll_the_dice_results}
  % \vspace{-2em} % tighten space below the float
\end{wrapfigure}

\subsection{How Does Polychromic PPO Affect Performance?}

% \begin{table}[!htbp] 
\begin{table}[t] 
\centering
\resizebox{\linewidth}{!}{%
\begin{tabular}{lccccccc}
\toprule
Environment 
& \shortstack{Pretrained \\ policy}
& \shortstack{REINFORCE}
& \shortstack{REINFORCE w/ \\ UCB}
& PPO
& \shortstack{PPO w/ \\ UCB}
& \shortstack{Poly-PPO \\ (ours)}
& \shortstack{Poly-PPO \\ w/ UCB (ours)} \\
\midrule
Goto      & (0.246, 34.2) & (0.533, 73.0)  & {(0.538, 73.4)} & (0.406, 46.2)  & {(0.428, 47.4)} & \bfseries (0.575, 80.2) & \bfseries (0.561, 76.2) \\
Pickup    & (0.141, 21.4) & (0.259, 39.8) & {(0.391, 56.0)} & {(0.283, 33.4)} & {(0.243, 27.8)} & \bfseries (0.452, 63.2) & \bfseries (0.486, 65.6) \\
Synthseq  & (0.157, 20.2) & \bfseries (0.325, 45.4) & \bfseries (0.361, 47.8) & (0.277, 32.2) & (0.224, 26.2) & \bfseries (0.341, 47.0) & \bfseries (0.317, 43.2) \\
Bosslevel & (0.212, 20.6) & (0.266, 33.4) & {(0.286, 36.4)} &  (0.336, 38.8) & {(0.310, 35.8)} & \bfseries (0.378, 45.2) & \bfseries (0.379, 46.8) \\
\midrule
Four Rooms & (0.469, 70.4) & (0.639, 89.6) & \bfseries(0.672, 92.6) & \bfseries (0.618, 89.2) & (0.502, 78.6) & \bfseries (0.666, 92.4) & \bfseries (0.653, 91.8) \\
\bottomrule
\end{tabular}%
}
% \caption{(Average reward, Success rate (\%)) on BabyAI and Minigrid tasks. Each value is averaged over 100 rollouts across 50 configurations and 3 random seeds.}
\caption{Performance on BabyAI and MiniGrid tasks. Each entry is reported as $(x, y)$, where $x$ is the average episodic reward and $y$ is the success rate (in \%). Results are averaged over 100 rollouts across 50 configurations and 3 random seeds.}
\label{tab:babyai_Minigrid_reward_success_results}
\end{table}

The results summarizing performance, in terms of average reward and success rate, on Minigrid and BabyAI are provided in \cref{tab:babyai_Minigrid_reward_success_results}. Pretrained policies are noisy, achieving low success rates; policies that explore effectively end up maximizing both success rate and coverage of environments it can solve. We find that \methodname consistently matches or outperforms the best baseline in reward and success. Adding the UCB bonus helps the baselines, REINFORCE and PPO, improve performance in some environments. However, UCB is complementary to \methodname as well - the bonus enables the agent to achieve higher performance in \textit{Pickup} and \textit{Bosslevel}.

Results for Algorithmic Creativity are summarized in \cref{fig: roll_the_dice_results}. We rollout 100 times across the 3 graphs seen during RLFT. The creativity metric is defined as the percentage of generations that are unique, valid triangles that were not seen in the pretraining data~\citep{nagarajan2025rolldicelook}. We also report validity (number of valid triangles constructed), and diversity (unique number of valid triangles). PPO substantially increase validity compared to the pretrained policy, but lower creativity and diversity. On the other hand, \methodname achieves slightly lower validity than standard PPO, but the gap is modest and highlights the trade-off it strikes between success and exploration which we discuss in the next subsection.

\subsection{Does Polychromic PPO Encourage Diverse Trajectories?}

\begin{figure}[t]
    \centering
    % --- Row 1 ---
    \begin{subfigure}{0.24\textwidth}
        \centering
        \includegraphics[width=\linewidth]{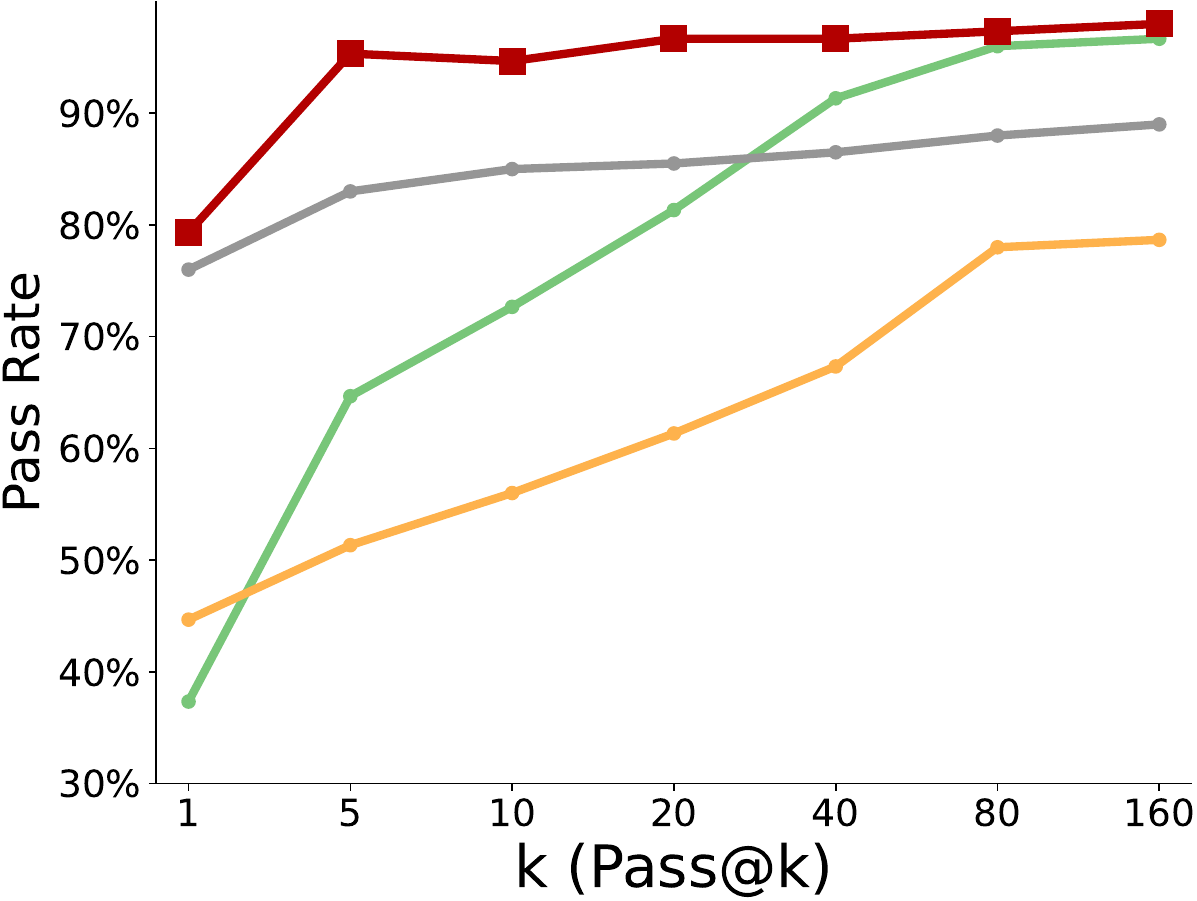}
        \label{fig:goto_passn_no_ucb}
    \end{subfigure}\hfill
    \begin{subfigure}{0.24\textwidth}
        \centering
        \includegraphics[width=\linewidth]{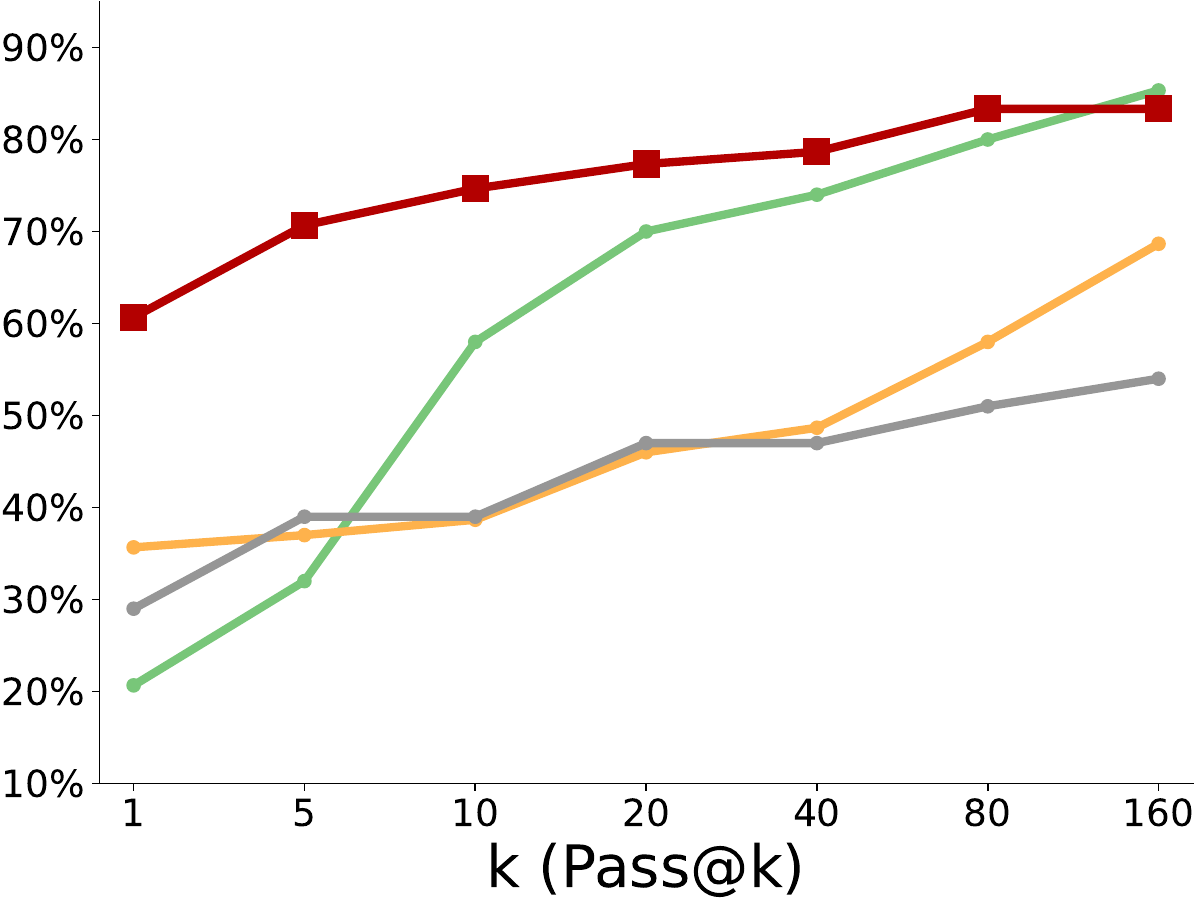}
        \label{fig:pickup_passn_no_ucb}
    \end{subfigure}\hfill
    \begin{subfigure}{0.24\textwidth}
        \centering
        \includegraphics[width=\linewidth]{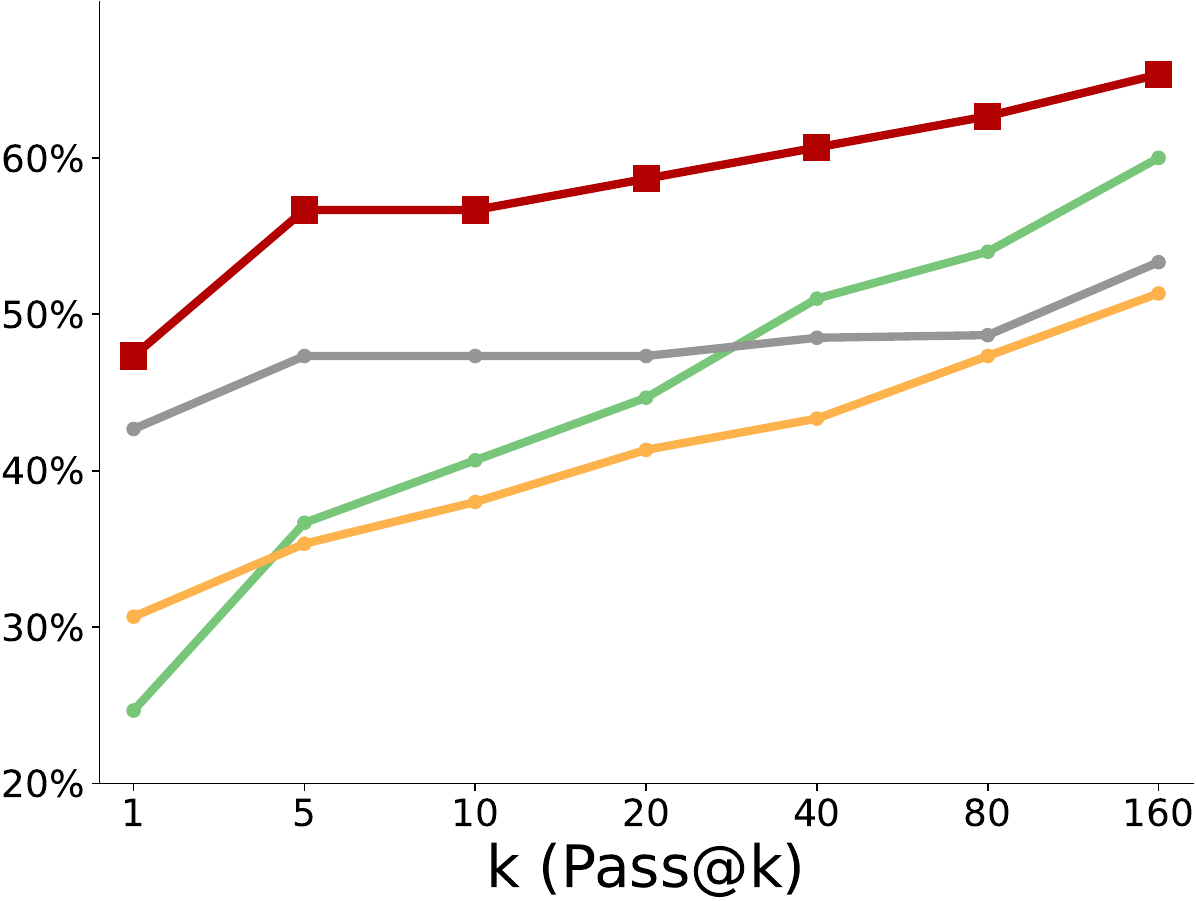}
        \label{fig:synthseq_passn_no_ucb}
    \end{subfigure}\hfill
    \begin{subfigure}{0.24\textwidth}
        \centering
        \includegraphics[width=\linewidth]{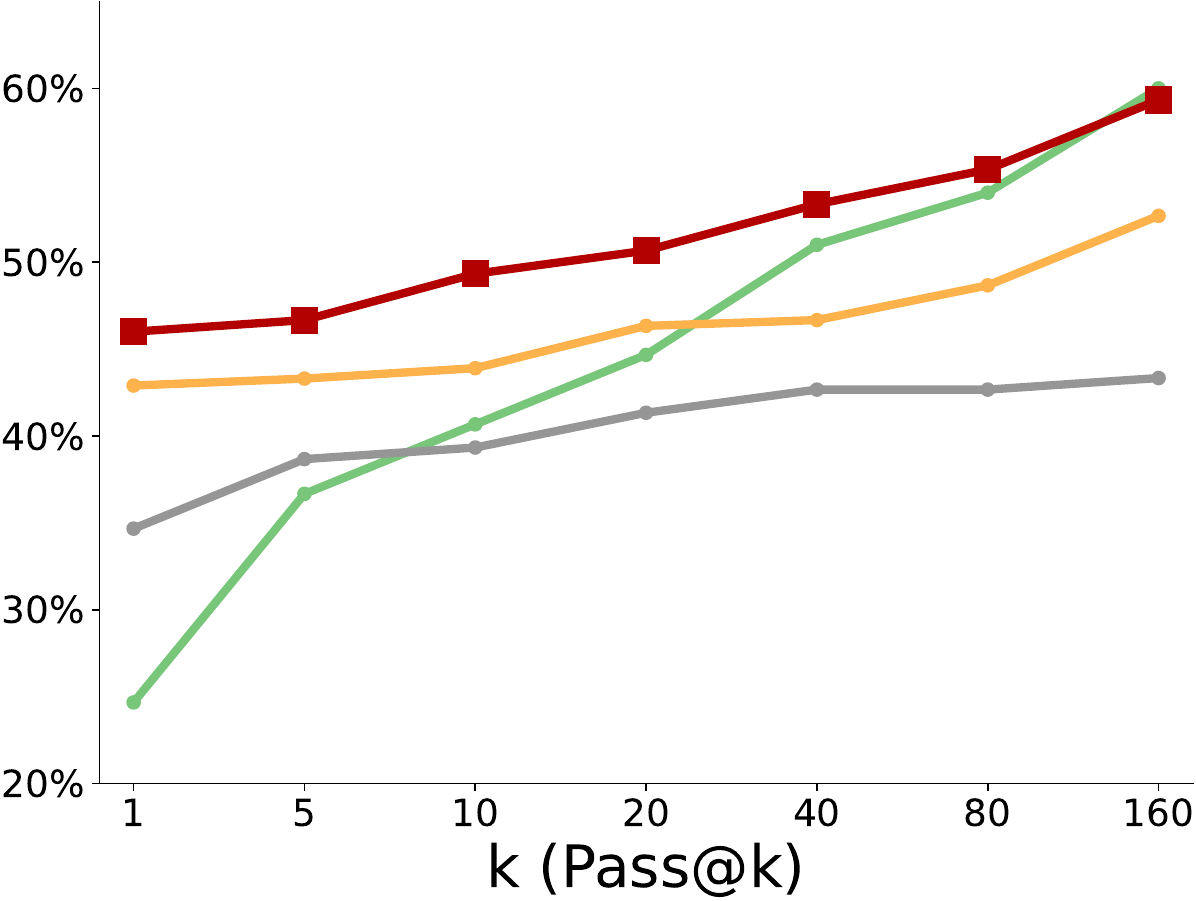}
        \label{fig:bosslevel_passn_no_ucb}
    \end{subfigure}

    % --- Row 2 ---
    \vspace{0.6em} % space between rows
    \begin{subfigure}{0.24\textwidth}
        \centering
        \includegraphics[width=\linewidth]{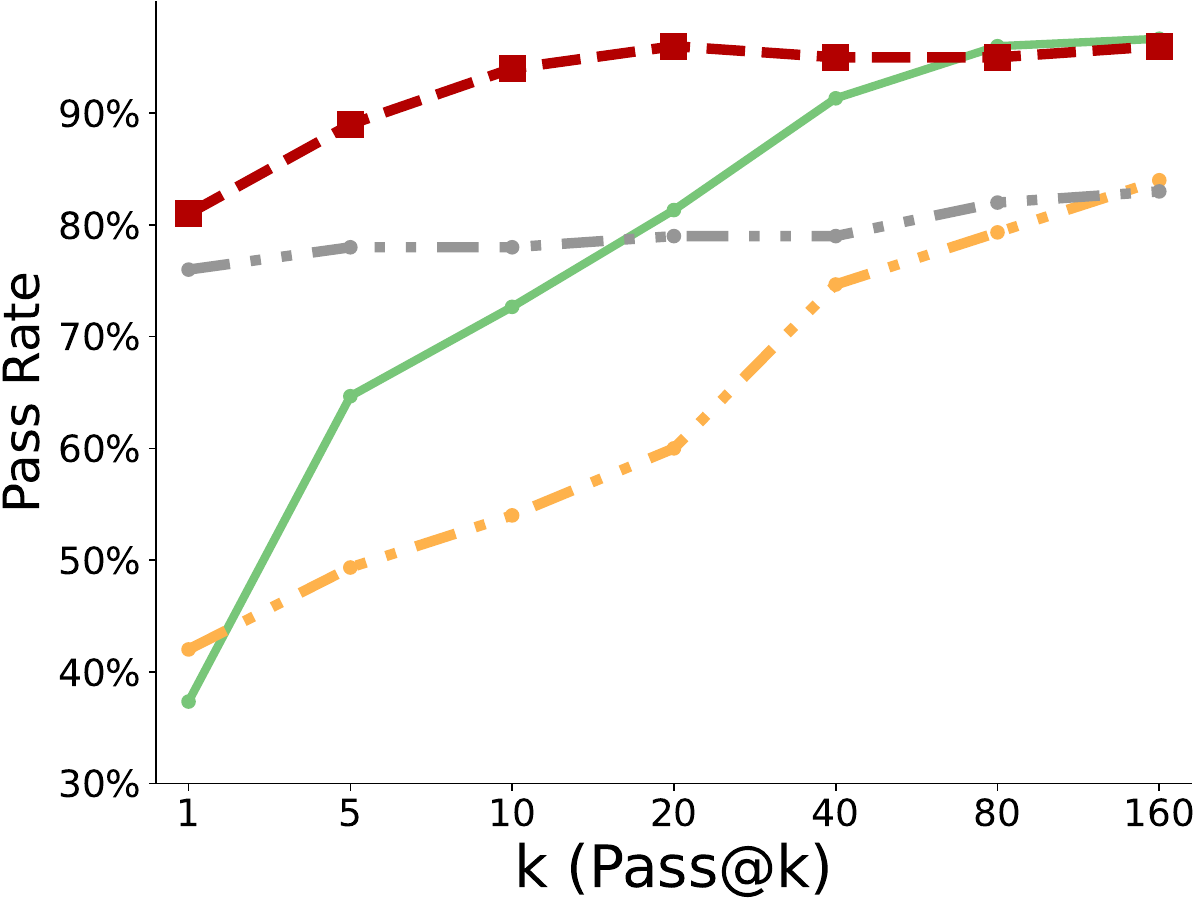}
        \label{fig:goto_passn_ucb}
    \end{subfigure}\hfill
    \begin{subfigure}{0.24\textwidth}
        \centering
        \includegraphics[width=\linewidth]{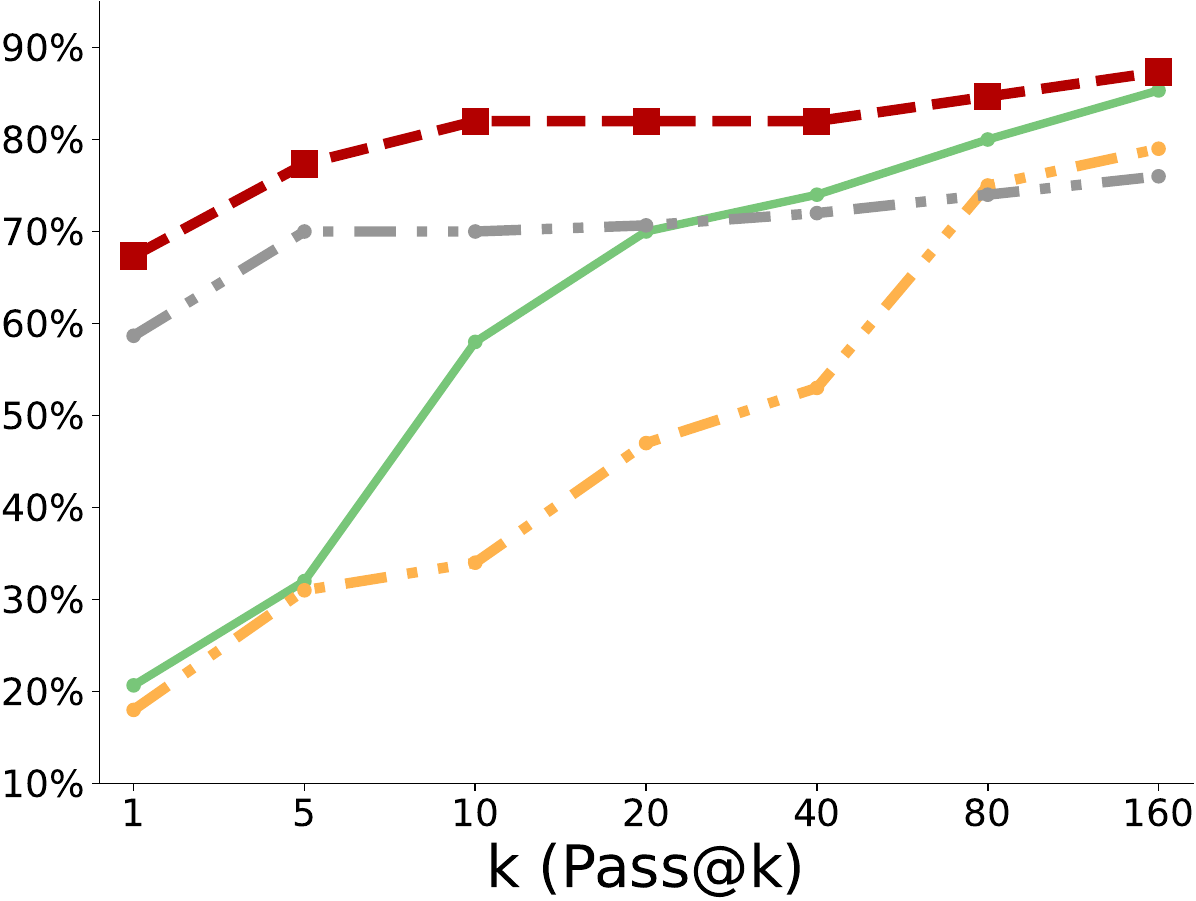}
        \label{fig:pickup_passn_ucb}
    \end{subfigure}\hfill
    \begin{subfigure}{0.24\textwidth}
        \centering
        \includegraphics[width=\linewidth]{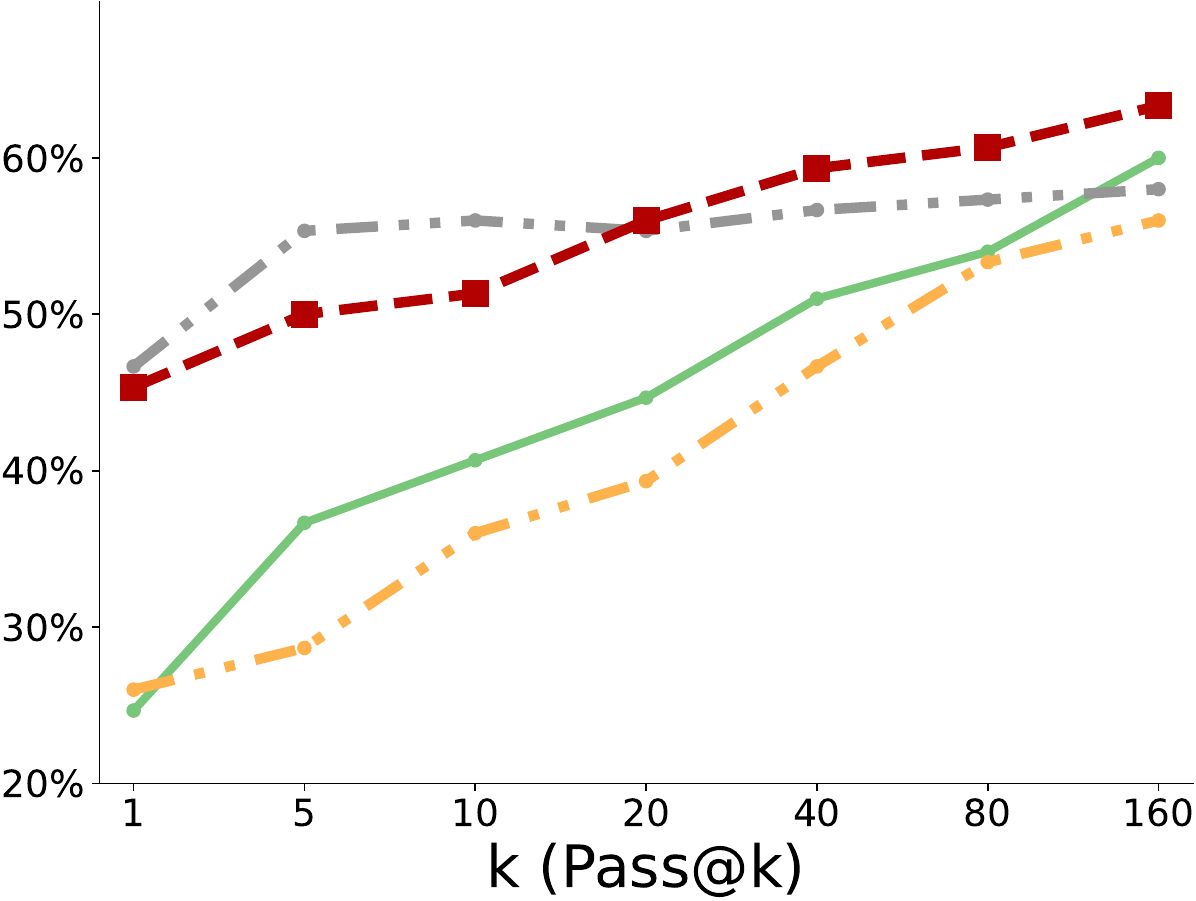}
        \label{fig:synthseq_passn_ucb}
    \end{subfigure}\hfill
    \begin{subfigure}{0.24\textwidth}
        \centering
        \includegraphics[width=\linewidth]{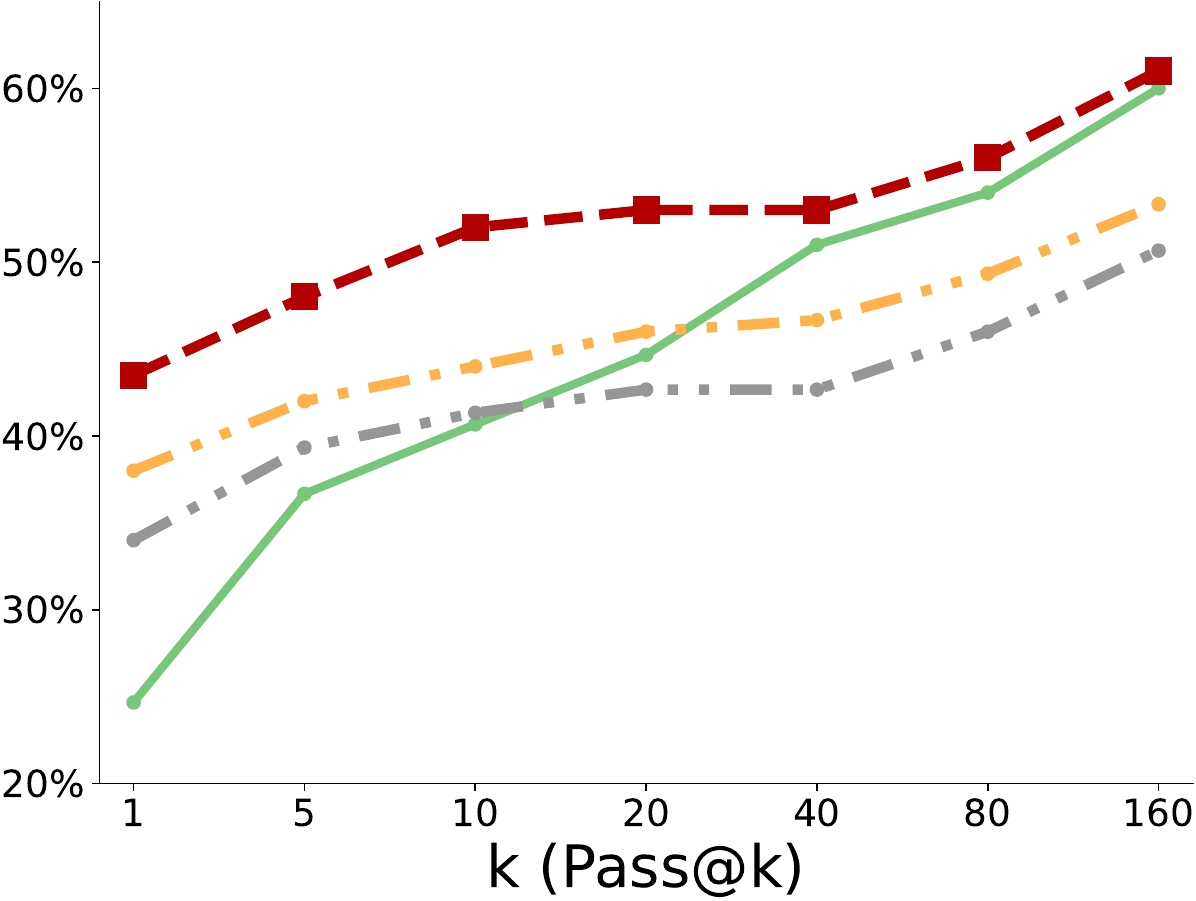}
        \label{fig:bosslevel_passn_ucb}
    \end{subfigure}
    % --- Legend ---
    \noindent\begin{minipage}{0.95\linewidth}
        \centering
        \vspace{-0.6em}
        \includegraphics[width=0.95\linewidth]{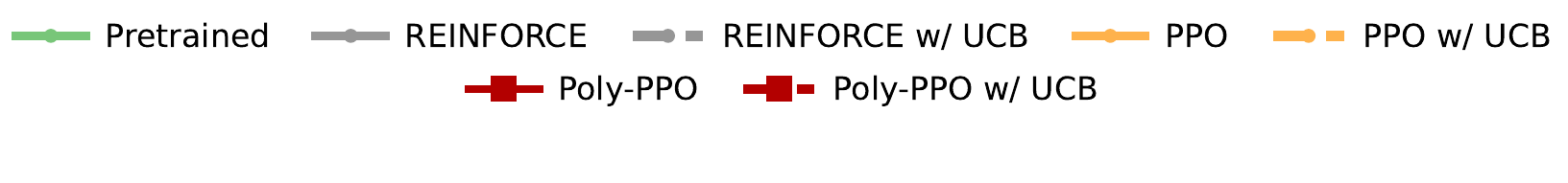}
        \vspace{-0.6em}
    \end{minipage}

    % --- Caption ---
    \captionsetup{skip=0pt} 
    \caption{Pass@$k$ on BabyAI tasks. Top: methods without UCB. Bottom: methods with UCB. Columns show Goto, Pickup, Synthseq, and Bosslevel. Each curve is pass rate vs. number of attempts.}
    \label{fig:passn_plots}
    % \vspace{-0.5em}
\end{figure}

Success rates do not adequately represent the coverage of tasks that the policy can solve. Since success rates are averaged across all configurations, a policy that overfits to a subset may achieve high reward there while failing elsewhere. To probe this, we examine pass@$k$ curves - for each configuration, we provide the policy $k$ attempts and find the fraction of configurations the policy can solve, which is called the pass rate. As $k$ increases, methods that generate diverse (but effective) trajectories should achieve higher coverage. Moreover, because all configurations were seen during training, every method should ideally surpass the pretrained policy at all $k$. When this fails to occur, it suggests that RL fine-tuning has caused the policy to forget behaviors useful for some configurations, overfitting instead to others. 

Pass@$k$ results on the BabyAI environments are shown in \cref{fig:passn_plots}. We first discuss all methods without the UCB bonus. We observe that REINFORCE does not improve pass rate sufficiently fast as the number of attempts increases: despite higher success rates overall, its coverage is lower than the pretrained policy at large $k$. PPO, on the other hand, starts off from a much lower pass rate than other methods at small $k$ but the pass rate increases as $k$ grows; however, it is still lower than the pretrained policy and significantly lower than Poly-PPO. This indicates that these baseline methods suffer from an inherent trade-off between diversity and accuracy in the generations. In comparison, Poly-PPO achieves substantially higher pass rate than all baselines. It also achieves equal or higher pass rate than the pretrained policy at almost all values of $k$. Another indicator for the higher diversity in generations is that the pass rate for Poly-PPO continues to rise until about $k = 80$, whereas the baselines saturate much earlier (around $k=20$). The effect is pronounced, especially, in \textit{Bosslevel} where Poly-PPO achieves around 15\% higher pass rate as $k$ grows to 160 whereas baselines see modest increase. 

We next examine the effect of adding the UCB bonus. For REINFORCE, the bonus improves pass rate at small $k$ in \textit{Pickup} and \textit{Synthseq}, but the gain saturates around $k=10$ and UCB has no effect in \textit{Bosslevel} or \textit{Goto}. For PPO, the bonus reduces pass rate at small $k$, and although it improves performance at larger $k$, the gap to the pretrained policy and Poly-PPO remains. By contrast, combining UCB with Poly-PPO yields equal or higher coverage across most $k$ (except small $k$ in \textit{Synthseq}), showing that Poly-PPO preserves and refines pretrained diversity rather than collapsing onto narrow behaviors.

In the \textit{Triangle Discovery} task, we find that \methodname achieves substantially higher diversity and creativity. In particular, Poly-PPO outperforms all baselines, including the pretrained policy, on both diversity and creativity metrics, as shown in \cref{fig: roll_the_dice_results}. Although it achieves slightly lower validity than PPO (significantly larger than REINFORCE though), Poly-PPO encourages broad exploration and the discovery of novel solutions. This trend is further reflected in the pass@$k$ evaluation (see \cref{fig: pass_n_roll_the_dice}). Specifically, validity pass@$k$ evaluates pass rate in $k$ attempts where the agent passes if it generates a valid triangle in $k$ attempts. On the other hand, creativity pass@$k$ evaluates pass rate where the agent passes if it generates at least one creative triangle (unique triangle not seen in pretraining data) in $k$ attempts. Finally, diff@$k$ quantifies the number of unique triangles obtained in $k$ attempts. We find that Poly-PPO outperforms baselines in both creativity and diversity metrics, while attaining greater validity performance than the pretrained policy. Notably, even though REINFORCE achieves high diversity, it comes at the significant cost in validity@1 where it is even below the pretrained policy.

\subsection{Does polychromic PPO generalize to state perturbations?}

We evaluate generalization under perturbed initial states in Minigrid and BabyAI. For each grid–mission configuration, we first find all the rooms visited by the pretrained policy under high-temperature sampling over 100 rollouts. Then, we select 10 states randomly from inside each room as our initial states. Effectively, this changes the initial state to a completely different room in such a manner that the task remains solvable from the new initial state. Note that this randomization substantially changes the task; as shown in \cref{fig:bosslevel_generalization}, with respect to the new initial state, a successful trajectory would require very different strategies. From the new start state, we evaluate using pass@1 for all states in each layout. As shown in \cref{tab:generalization_results}, consistently, \methodname generalizes more reliably than the baselines under these perturbations.

\begin{figure}[t]
    \centering
    % --- Row: 3 subfigures ---
    \begin{subfigure}{0.30\textwidth}
        \centering
        \includegraphics[width=\linewidth]{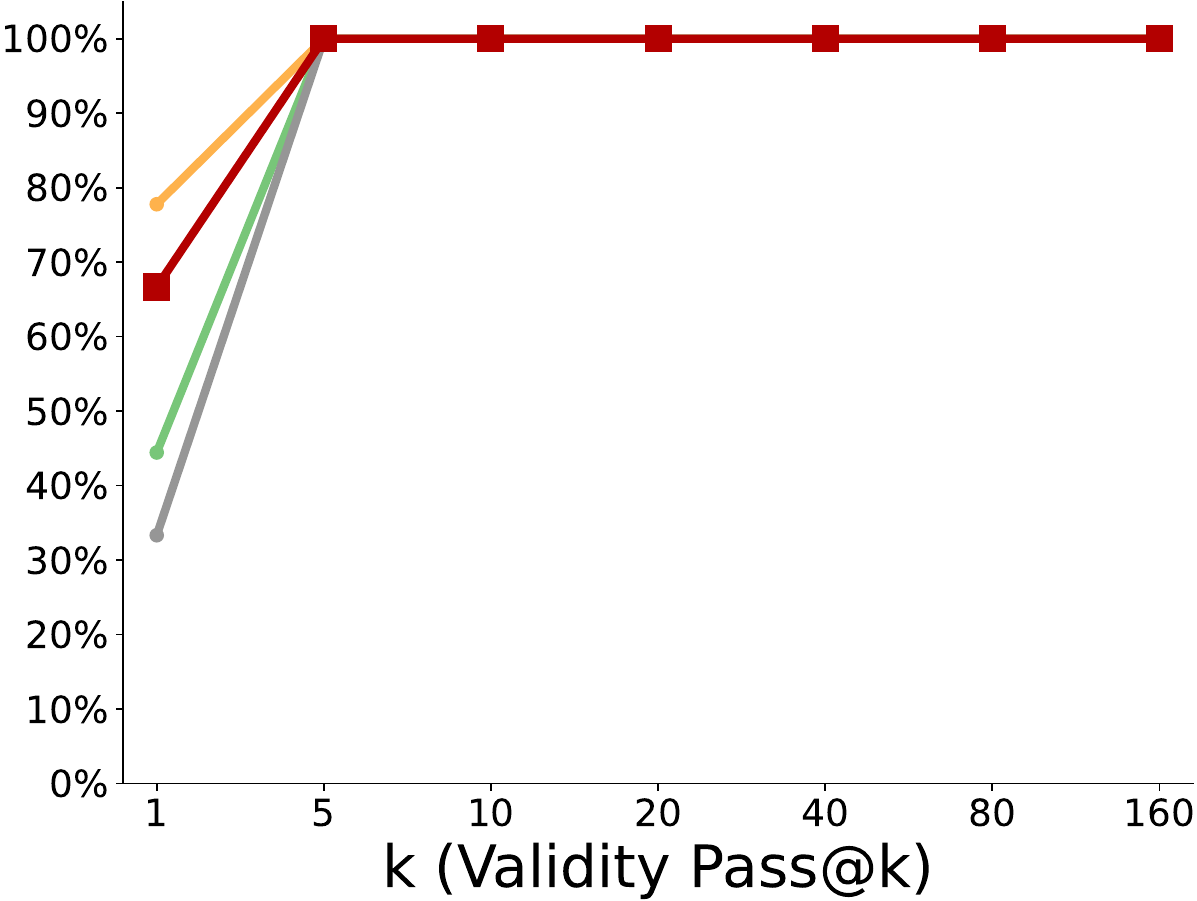}
        \label{fig:img1}
    \end{subfigure}\hspace{5mm}
    \begin{subfigure}{0.30\textwidth}
        \centering
        \includegraphics[width=\linewidth]{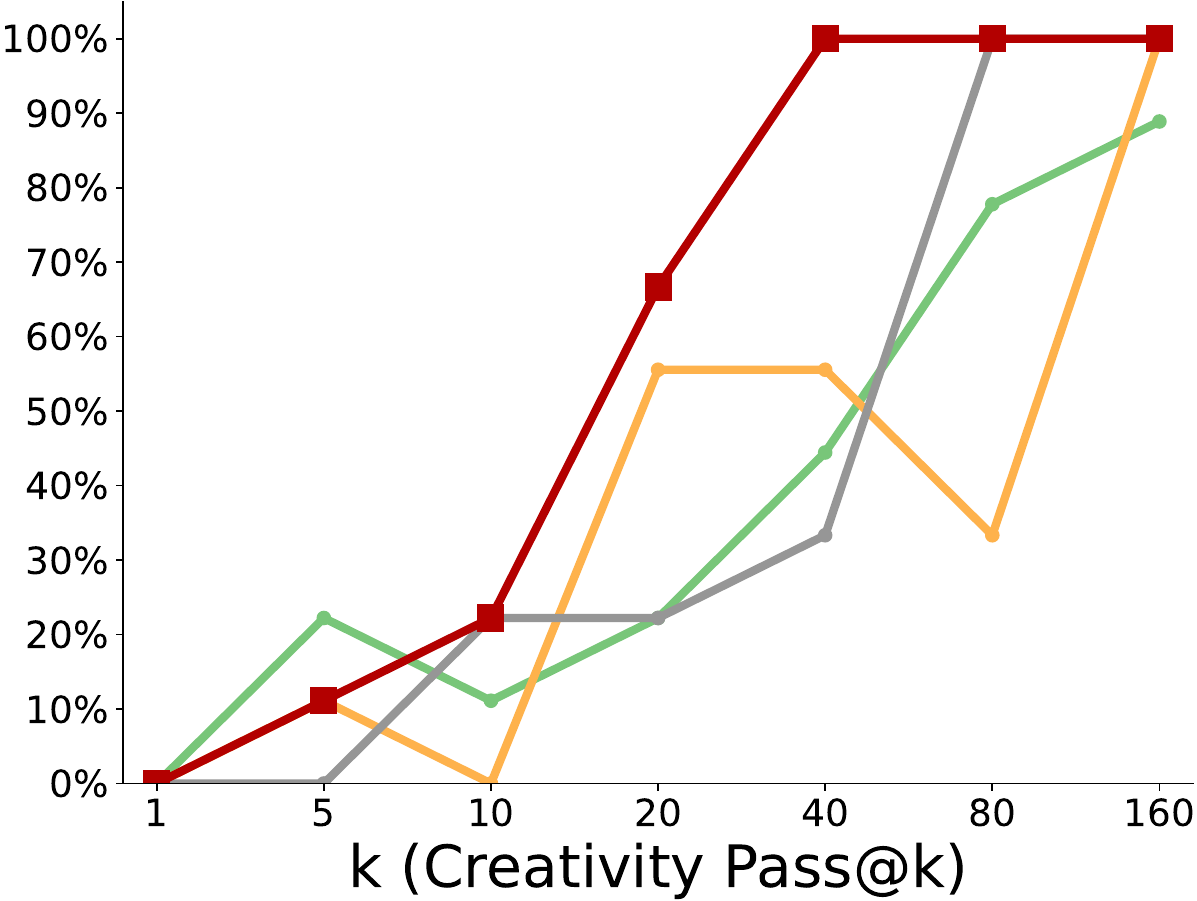}
        \label{fig:img2}
    \end{subfigure}\hspace{5mm}
    \begin{subfigure}{0.30\textwidth}
        \centering
        \includegraphics[width=\linewidth]{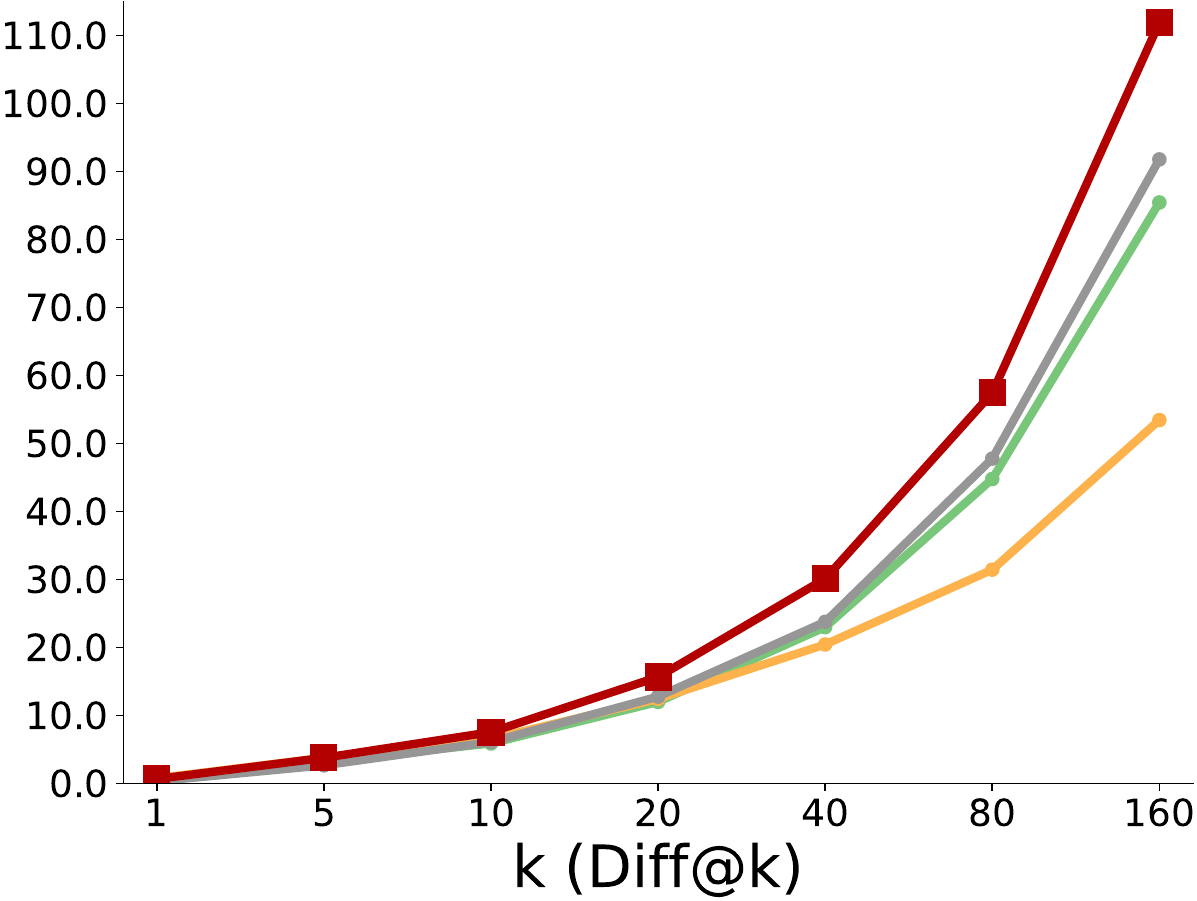}
        \label{fig:img3}
    \end{subfigure}
    % --- Legend ---
    \vspace{-0.4em}
    \noindent\begin{minipage}{0.70\linewidth}
        \centering
        \includegraphics[width=0.9\linewidth]{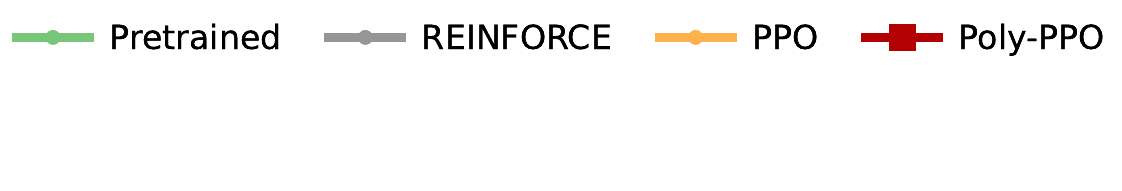}
    \end{minipage}
    % --- Caption ---
    \captionsetup{skip=0pt}
    \caption{Pass@$k$ results on Algorithmic Creativity. For validity pass@$k$ and creativity pass@$k$, the agent gets a pass if at least one of the $k$ attempts was a valid and creative triangle, respectively. In diff@$k$ evaluation, we evaluate the number of generations that were unique given $k$ attempts.}
    \label{fig: pass_n_roll_the_dice}
\end{figure}

\begin{table}[t]
  \centering
  \scriptsize 
  \begin{tabular}{c c c c c c c c}
    \toprule
    Environment 
    & \shortstack{Pretrained \\ policy} 
    & \shortstack{REINFORCE \\ w/ Baseline} 
    & \shortstack{REINFORCE \\ w/ UCB} 
    & PPO 
    & \shortstack{PPO \\ w/ UCB} 
    & \shortstack{Polychromic \\ PPO} 
    & \shortstack{Poly-PPO \\ w/ UCB} \\
    \midrule
    Goto      & 30.2 & 41.3 & 37.1 & 21.1 & 18.4 & \textbf{60.6} & 54.3 \\
    Pickup    & 15.2 & 22.0 & 20.5 & 12.5  & 8.87 & \textbf{33.4} & 28.0 \\
    Synthseq  & 20.0 & 19.3 & 26.2 & 16.6  & 11.5 & \textbf{30.6} & \textbf{32.1} \\
    Bosslevel & 23.8 & 22.5 & 27.6 & 26.6  & 28.2 & \textbf{34.3} & \bfseries 32.8 \\
    \midrule
    Four Rooms & 65.0 & 82.7 & 81.5 & 15.3 & 14.2 & \textbf{88.7} &  \bfseries 87.2 \\
    \bottomrule
  \end{tabular}
  \caption{Average pass rate (\%) in one attempt on BabyAI and Minigrid tasks under large initial-state perturbations.}
  \label{tab:generalization_results}
\end{table}

\section{Entropy Analysis}
\label{sec: theory}

In this section, we analyze how the entropy of a policy evolves when trained to optimize the polychromic objective in \cref{eq: eg_poly_objective}. Our guiding question is: Under the polychromic objective, on which actions is a policy most likely to collapse its probability mass? In our analysis, we restrict our attention to the setting with time horizon $H = 1$, with binary rewards. We assume a discrete action space and that the diversity function $d(s, a_{1:n})$ equals the fraction of actions in $a_{1:n}$ that are distinct ($d = 0$ if the set is a singleton). We assume that our policy has a softmax parameterization. Before turning to the polychromic objective itself, we extend the entropy analysis of \citet{cui2025entropymechanismreinforcementlearning} to the set RL setting. Given any set objective $f: \mathcal{S} \times \mathcal{A}^n \rightarrow \mathbb{R}$, we ask: how does the entropy of the policy change after one-step update (from $\pi_\theta^k$ to $\pi_\theta^{k+1}$) when learning under this set-RL framework? The following proposition characterizes the first-order change:

\begin{prop}
\label{prop:n_action_entropy_collapse_in_n_sample_rl}
Consider the set RL setup in state $s$. After one update to the policy, the change in entropy, $\Delta = \mathcal{H}\left( \pi_\theta^{k+1} \mid s \right) - \mathcal{H}\left( \pi_\theta^{k} \mid s \right)$, is given by 
\begin{align*}
    \Delta \approx - \alpha \mathrm{Cov}_{{a}_{1:n}} \left( \frac{1}{n} \sum_{i=1}^n \log \pi_\theta^k({a}_i \mid s), \mathrm{Cov}_{a'_{1:n
    }} \left( f(s, a'_{1:n}), \sum_{i, j=1}^n {1}\{ a_i = {a}_j'\}\right)\right),
\end{align*}
where both covariances are taken with respect to $\pi_\theta^k(\cdot \mid s)$ and $\alpha$ is the learning rate.
\end{prop}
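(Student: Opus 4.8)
The plan is to reduce the entropy change to a first-order Taylor expansion in the \emph{logit} update, compute that update explicitly for the set-RL policy gradient at horizon $H=1$, and then recast the answer into the nested-covariance form of the statement. The analysis is carried out in logit space, following \citet{cui2025entropymechanismreinforcementlearning}: writing the softmax policy as $\pi_\theta(a\mid s)=\softmax(z)_a$ with $z$ the logits, one gradient step changes the logit of action $a$ by $\Delta z_a = \alpha\,\partial_{z_a}J$, where $J$ is the set-RL objective of \cref{eq: set_RL}. Nothing about the particular form of $f$ is used, so the $H=1$ reduction and softmax assumption are all that is needed here (the binary-reward and specific-diversity assumptions of the section are only invoked later).

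First I would establish the logit-space entropy identity. Using $\partial p_a/\partial z_b = p_a(\1\{a=b\}-p_b)$ one gets $dp_a = p_a\big(dz_a - \mathbb{E}_{a'\sim\pi_\theta}[dz_{a'}]\big)$, and since $\sum_a dp_a = 0$ we have $d\mathcal{H}(\pi_\theta\mid s) = -\sum_a dp_a\log p_a$. Substituting and simplifying yields $d\mathcal{H}(\pi_\theta\mid s) = -\mathrm{Cov}_{a\sim\pi_\theta}\!\big(\log\pi_\theta(a\mid s),\, dz_a\big)$. Hence to first order in $\alpha$,
\[
\Delta \;=\; \mathcal{H}(\pi_\theta^{k+1}\mid s) - \mathcal{H}(\pi_\theta^{k}\mid s)\;\approx\; -\,\mathrm{Cov}_{a\sim\pi_\theta^{k}}\!\big(\log\pi_\theta^{k}(a\mid s),\, \Delta z_a\big),
\]
with the discarded terms of size $O(\alpha^2)$; this is the set-RL analogue of the key lemma of \citet{cui2025entropymechanismreinforcementlearning} and is the source of the ``$\approx$'' in the statement.

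Next I would compute $\Delta z_a$. At $H=1$ a trajectory is a single action, so the set-RL gradient of \cref{eq: gradient_set_RL} with the baseline $\hat f(s)=\mathbb{E}_{a_{1:n}\sim\pi_\theta^{k}}[f(s,a_{1:n})]$, together with $\partial_{z_a}\log\pi_\theta(a_i\mid s)=\1\{a_i=a\}-\pi_\theta(a\mid s)$, gives
\[
\partial_{z_a}J \;=\; \mathbb{E}_{a_{1:n}\sim\pi_\theta^{k}}\Big[\big(f(s,a_{1:n})-\hat f(s)\big)\big(N_a(a_{1:n}) - n\,\pi_\theta^{k}(a\mid s)\big)\Big],
\]
where $N_a(a_{1:n})=\sum_{i=1}^n\1\{a_i=a\}$ is the multiplicity of $a$ in the set. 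The $n\pi_\theta^{k}(a\mid s)$ term drops since $\mathbb{E}[f-\hat f]=0$, leaving $\Delta z_a = \alpha\,\mathrm{Cov}_{a'_{1:n}}\!\big(f(s,a'_{1:n}),\, N_a(a'_{1:n})\big)=:\alpha\,g_a$, and plugging this into the identity above gives $\Delta \approx -\alpha\,\mathrm{Cov}_{a\sim\pi_\theta^{k}}\!\big(\log\pi_\theta^{k}(a\mid s),\, g_a\big)$.

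Finally I would lift this single-action covariance to the set level. For fixed $a_{1:n}$ one has the combinatorial identity $\sum_{i,j=1}^n\1\{a_i=a'_j\}=\sum_{i=1}^n N_{a_i}(a'_{1:n})$, so by linearity of covariance in its second argument, $\mathrm{Cov}_{a'_{1:n}}\!\big(f(s,a'_{1:n}),\sum_{i,j}\1\{a_i=a'_j\}\big)=\sum_{i=1}^n g_{a_i}$. Then, since $a_1,\dots,a_n$ are i.i.d.\ from $\pi_\theta^{k}$, all off-diagonal terms vanish and $\mathrm{Cov}_{a_{1:n}}\!\big(\tfrac1n\sum_{k=1}^n\log\pi_\theta^{k}(a_k\mid s),\,\sum_{i=1}^n g_{a_i}\big)=\tfrac1n\cdot n\cdot \mathrm{Cov}_{a\sim\pi_\theta^{k}}\!\big(\log\pi_\theta^{k}(a\mid s),g_a\big)$ --- the $1/n$ factor is precisely what cancels the $n$ diagonal contributions. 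Chaining the last two displays recovers the claimed formula. The main obstacle is bookkeeping rather than depth: one must be careful that in the nested covariance the inner one treats $a_{1:n}$ as frozen parameters while the outer one collapses via independence, and one must be explicit that the derivation lives in logit space (so that the clean $\Delta z_a=\alpha\,\partial_{z_a}J$ holds) rather than for an arbitrary parameterization, where a Jacobian Gram factor would appear.
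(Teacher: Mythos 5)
Your proposal is correct and follows essentially the same route as the paper: the first-order identity $\Delta \approx -\alpha\,\mathrm{Cov}_{a}\!\left(\log\pi_\theta^k(a\mid s),\,\partial_{z_{sa}}J\right)$ from \citet{cui2025entropymechanismreinforcementlearning}, the explicit logit-update computation for the set-RL gradient, and the final lift to the nested set-level covariance via linearity and independence of the $a_i$. The only cosmetic difference is that you fold the baseline in up front so the update is immediately $\alpha\,\mathrm{Cov}_{a'_{1:n}}(f, N_a)$, whereas the paper derives the update without a baseline (its Lemma A.2) and performs the equivalent cancellation of the $n\pi_\theta^k(a\mid s)\mathbb{E}[f]$ term later.
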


The proof can be found in \S\ref{appendix: proofs}. This result provides a lens for understanding when and where entropy collapse occurs. Intuitively, suppose that for some reference set ${a}_{1:n}$ there is a strong covariance between (i) the overlap of ${a}_{1:n}$ with the sampled sets and (ii) the value of the objective $f$. As the policy concentrates more probability mass on such sets $a_{1:n}$, the entropy decreases. Conversely, when the covariance is strongly negative, the policy reallocates probability mass to sets with higher value under $f$, which increases entropy. Importantly, although set RL evaluates entire sets without attributing credit to individual trajectories, entropy collapse around a particular set still requires that its constituent trajectories, in expectation, contribute to raising the value of sampled sets overall. 

Thus, the central question becomes: Which sets ${a}_{1:n}$ are most prone to entropy collapse under the polychromic objective? Our analysis proceeds by introducing and studying the following key object:

\begin{definition}
    The scaffold value of a set of actions, $a_{1:n}$, under a policy $\pi$ and a set-RL objective $f : \mathcal{S} \times \mathcal{A}^n \rightarrow \mathbb{R}$ is defined to be $$\Lambda_f(a_{1:n}; \pi) :=  \mathrm{Cov}_{a_{1:n}' \sim \pi(\cdot \mid s)} ( {f}(s, a_{1:n}'), \frac{1}{I( a_{1:n})}\sum_{i,j=1}^n 1\{ a_i' = a_j\} )$$ where $I(a_{1:n})$ is the maximum size of the intersection of $a_{1:n}$ with any other set $a'_{1:n}$. We refer to the space $\Lambda_f(\pi) = \{(a_{1:n}, \Lambda_f(a_{1:n}; \pi) \mid a_{1:n} \in \mathcal{A}^n\}$ as the scaffold of the policy $\pi$.
\end{definition}

The scaffold represents, for every action set $a_{1:n}$, a measure of the policy’s propensity to collapse its entropy around that set. We illustrate this further through the following lemma which shows us how the scaffold value of an action set affects the change in the probability of a policy sampling the set (proof in \S\ref{proof: policy_update_on_action_set_using_scaffold}):  

\begin{lem}
    \label{lem: policy_update_on_action_set_using_scaffold}    
    Consider any set of actions $a_{1:n} \in \mathcal{A}^n$. The change in the log probability of sampling this set of actions after one policy update using set RL can be written as the following first-order approximation: $$\log \pi_\theta^{k+1}(a_{1:n} \mid s) \approx \log \pi_\theta^k(a_{1:n} \mid s) + \lambda \Lambda_{f}(a_{1:n}; \pi_\theta^k) - \lambda' C(\theta^k)$$ where $C(\theta^k)$ is a function independent of $a_{1:n}$, and $\lambda$ and $\lambda'$ are constants.     
\end{lem}

With this apparatus in hand, we next show that the polychromic objective rules out collapse onto homogeneous sets of actions (proof in \S\ref{proof: homogeneous_scaffold_negative}): 

\begin{prop}
\label{prop: homogeneous_scaffold_negative}
    Consider the polychromic objective in \cref{eq: eg_poly_objective}.  
    For any homogeneous set $a_{1:n} = \{ a \}$ where $r(s, a) = 1$, there exists $\epsilon \in (0, 1)$ such that $\Lambda_{{f_\mathrm{poly}}}(a) < 0$ when $\pi_\theta(a \mid s) > \epsilon$. Furthermore, the scaffold values of these homogeneous sets satisfy the bound $\Lambda_{f_\mathrm{poly}}(a) \leq \sqrt{\frac{p(1-p)}{n}}$. 
\end{prop}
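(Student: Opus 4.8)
The plan is to collapse the scaffold value to a one–dimensional covariance against a binomial count, and then read off both statements from it. Write $p:=\pi_\theta(a\mid s)$, which lies strictly in $(0,1)$ because the policy is softmax–parameterized, and set $q:=1-p$. Draw $a'_{1:n}\sim\pi_\theta(\cdot\mid s)$ i.i.d.\ and let $M:=\#\{i:a'_i=a\}\sim\mathrm{Binomial}(n,p)$ be the number of copies of $a$ in the sampled set. First I would do the bookkeeping on the overlap statistic: for the homogeneous reference set $a_{1:n}=\{a\}$ one has $\sum_{i,j=1}^n\1\{a'_i=a_j\}=nM$, whose maximum over all sampled sets is $n^2$, so $I(a_{1:n})=n^2$ and the overlap term appearing in the definition of $\Lambda_{{f_\mathrm{poly}}}(a)$ is exactly $M/n$. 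Using $\Cov(X,M)=\Cov(\E[X\mid M],M)$, this gives $\Lambda_{{f_\mathrm{poly}}}(a)=\tfrac1n\Cov(g(M),M)$, where $g(m):=\E[{f_\mathrm{poly}}(s,a'_{1:n})\mid M=m]$.

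For the negativity claim I would use only the behaviour of $g$ near its top index, via two facts. (i) $g(n)=0$, since when $M=n$ the sampled set is homogeneous, so $d(s,a'_{1:n})=0$ and ${f_\mathrm{poly}}=0$. (ii) $g(n-1)\ge c_n$ for an explicit $c_n=c_n(n)>0$: when $M=n-1$ the $n-1$ copies of $a$ each carry reward $r(s,a)=1$ — this is exactly where that hypothesis is used — and the resulting two–value set has a fixed positive diversity, so $g(n-1)\ge (n-1)/n^2$ up to the precise form of $d$. Then I would write $\Cov(g(M),M)=\E[g(M)(M-np)]$ and split the expectation into $m=n$ (contributes $0$), $m=n-1$ (contributes $g(n-1)(nq-1)\,np^{n-1}q$, which for $q<1/n$ is strictly negative and of order $\Theta(q)$ as $q\to0$), and $m\le n-2$ (which has probability at most $\binom n2 q^2$, hence contributes $O(q^2)$ since $g\in[0,1]$ and $|m-np|\le n$). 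Thus there is an $n$–dependent threshold $q_0$ below which the $m=n-1$ term dominates and $\Cov(g(M),M)<0$; taking $\epsilon:=1-q_0\in(0,1)$ yields $\Lambda_{{f_\mathrm{poly}}}(a)<0$ whenever $\pi_\theta(a\mid s)>\epsilon$, with $p=1$ excluded by softmax so the degenerate $\Var=0$ point never arises.

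For the bound I would apply Cauchy–Schwarz to the covariance: $\Lambda_{{f_\mathrm{poly}}}(a)=\Cov({f_\mathrm{poly}},M/n)\le\sqrt{\Var({f_\mathrm{poly}})}\cdot\sqrt{\Var(M/n)}$. Since $R$ and $d$ are normalized to $[0,1]$ we have ${f_\mathrm{poly}}\in[0,1]$, hence $\Var({f_\mathrm{poly}})\le\E[{f_\mathrm{poly}}^2]\le1$; and $\Var(M/n)=p(1-p)/n$ for the binomial $M$. Multiplying gives $\Lambda_{{f_\mathrm{poly}}}(a)\le\sqrt{p(1-p)/n}$, and this step goes through verbatim for every homogeneous set, not just those with $r(s,a)=1$.

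\textbf{Main obstacle.} The delicate part is the negativity claim. The tempting shortcut — arguing $g$ is nonincreasing in $m$ and invoking a correlation inequality to get $\Cov(g(M),M)\le0$ unconditionally — fails, because for large $n$ peeling off a second distinct action can raise the diversity factor enough to outweigh the reward lost, so $g$ need not be monotone below $M=n$. The argument must therefore localize at the extreme index: $g$ vanishes exactly at $M=n$ yet stays bounded away from $0$ at $M=n-1$, and a near–degenerate binomial places $\Theta(q)$ mass at $M=n-1$ against only $O(q^2)$ mass below it. Turning this heuristic into the clean three–way split above, and checking that the cross–over threshold $q_0$ depends only on $n$ (so that $\epsilon<1$ is a genuine constant rather than something drifting to $1$), is where the real work lies.
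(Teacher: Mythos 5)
Your proposal is correct, and it diverges from the paper's argument in how the negativity threshold is obtained. Both proofs share the same reduction — for the homogeneous reference set the overlap statistic is $M/n$ with $M\sim\mathrm{Binomial}(n,p)$, and both exploit that ${f_\mathrm{poly}}$ vanishes on the fully homogeneous draw — but the mechanisms then differ. The paper bounds ${f_\mathrm{poly}}$ conditionally on the overlap size $j$ over the whole range ($\tfrac{2j}{n^2}\le f\le\tfrac{n-j+1}{n}$ for $0<j<n$, $f=0$ at $j=n$), splits the covariance at $\lfloor np\rfloor$, and observes that once $p>\tfrac{n-1}{n}$ the positive-coefficient indices $j\in\{\lfloor np\rfloor+1,\dots,n-1\}$ are simply absent, so every surviving term is nonpositive; this yields the explicit and rather generous threshold $\epsilon=\tfrac{n-1}{n}$. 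You instead pass to $g(M)=\E[{f_\mathrm{poly}}\mid M]$ via the tower property and run a perturbative domination argument near $q=1-p\to0$: the $m=n$ term is zero, the $m=n-1$ term is strictly negative of order $\Theta(q)$ (this is where $r(s,a)=1$ enters, exactly as you flag), and the tail $m\le n-2$ is $O(q^2)$ by a union bound. This is sound and needs no conditional upper bound on $f$, but it only delivers an implicit, much tighter threshold (roughly $\epsilon\approx 1-c/n^3$ rather than $1-1/n$); since the proposition only asserts existence of some $\epsilon\in(0,1)$, that is enough. Your caution about the failed monotone-correlation shortcut is well taken, and both arguments implicitly require $n\ge2$ and $p<1$ (softmax), as in the paper. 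For the second claim your route is actually cleaner: you apply Cauchy–Schwarz directly to $\Cov({f_\mathrm{poly}},M/n)$ using $\Var({f_\mathrm{poly}})\le1$ and $\Var(M/n)=p(1-p)/n$, whereas the paper first replaces the covariance by the piecewise envelope $B_a(n)=\E[(\tfrac{X}{n}-p)C_n(X)]$ and then applies Cauchy–Schwarz there; your version also shows, as you note, that this bound holds for any homogeneous set without the $r(s,a)=1$ hypothesis.
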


This result shows that once a successful action $a$ accumulates sufficient probability mass, the polychromic objective automatically prevents further entropy collapse onto sets that only contain this action and, when we use larger sets in the set RL framework, the upper bound on the scaffold value of a homogeneous set decreases. This is desirable since it suggests that our policy learns to generate this action without incessantly collapsing probability mass on such homogeneous sets. 

Next, we establish that the scaffold values of heterogeneous sets with successful actions are attractors of probability mass (proof in \S\ref{proof: heterogeneous_scaffold_positive}):

\begin{prop}
\label{prop: heterogeneous_scaffold_positive}
Suppose $a_{1:n}$ is heterogeneous where each $a_i$ is unique with probability $\frac{p}{n}$ where $p \in (0, \frac{1}{n})$. Suppose exactly $q$ of the $n$ actions satisfy $r(s,a_i) = 1$, and that any other action $a' \notin a_{1:n}$ with $\pi_\theta(a' \mid s) > 0$ yields $r(s,a') = 0$. Then, the scaffold value of $a_{1:n}$ satisfies $\Lambda_{f_\mathrm{poly}}(a_{1:n}) > \frac{qp^n(1-p)}{n} \left( 1 - \left( 1 - \frac{1}{n}\right)^n\right)$. 
\end{prop}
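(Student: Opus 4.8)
The plan is to convert the scaffold value into an ordinary covariance between the polychromic score of a sampled action set and that set's overlap with $a_{1:n}$, and then to extract a positive lower bound by isolating the sampled sets that reproduce $a_{1:n}$ up to a permutation, which carry probability of order $p^{n}$.

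\textbf{Reduction.} Since $a_{1:n}$ is heterogeneous with $n$ distinct entries, $\sum_{i,j=1}^{n}\1\{a'_i=a_j\}=N(a'_{1:n})$ for every tuple $a'_{1:n}$, where $N(a'_{1:n}):=\#\{i:a'_i\in A\}$ and $A:=\{a_1,\dots,a_n\}$; this is $\le n$, with equality exactly when $a'_{1:n}$ permutes $a_{1:n}$, so $I(a_{1:n})=n$ and $\Lambda_{f_\mathrm{poly}}(a_{1:n})=\tfrac1n\mathrm{Cov}_{a'_{1:n}\sim\pi(\cdot\mid s)}(f_\mathrm{poly}(s,a'_{1:n}),N(a'_{1:n}))$. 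Writing $R'=\#\{i:r(s,a'_i)=1\}$ and $D'=\#\{\text{distinct entries of }a'_{1:n}\}$, we have $f_\mathrm{poly}(s,a'_{1:n})=\tfrac{R'D'}{n^{2}}\1\{D'\ge2\}$; and since every positive-probability action outside $A$ has zero reward, $r(s,a'_i)=1\Rightarrow a'_i\in A$, so $R'\le N(a'_{1:n})$, giving the pointwise bounds $0\le f_\mathrm{poly}\le N/n$ and the implication that $f_\mathrm{poly}>0$ forces $R'\ge1$ and $D'\ge2$.

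\textbf{Main estimate.} Expand $\mathrm{Cov}(f_\mathrm{poly},N)=\mathbb E[f_\mathrm{poly}(N-\mathbb EN)]$ with $\mathbb EN=n^{2}p$ (each $a'_i$ lands in $A$ with probability $np$), and split over $\mathcal P:=\{a'_{1:n}\text{ permutes }a_{1:n}\}$, which has $\Pr[\mathcal P]=n!\,p^{n}$. On $\mathcal P$: $N=n$, $D'=n$, $R'=q$, so the integrand equals $\tfrac qn\cdot n(1-np)$, contributing $n!\,p^{n}\tfrac qn n(1-np)$. On $\mathcal P^{c}$, split into $\{N\ge\mathbb EN\}$ (integrand $\ge0$) and $\{N<\mathbb EN\}$, on which $f_\mathrm{poly}\le N/n$ lower-bounds the integrand by $\tfrac1nN(N-\mathbb EN)$, whose expectation is controlled by the lower tail of the binomial-type count $N$. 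Dividing by $n$, the $\Theta(p^{n})$ mass from $\mathcal P$ is meant to dominate the correction and give $\Lambda_{f_\mathrm{poly}}(a_{1:n})>\tfrac{q\,p^{n}(1-p)}{n}$.

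\textbf{Main obstacle.} The crux is the $\mathcal P^{c}\cap\{N<\mathbb EN\}$ term: a union-bound estimate of $\mathbb E[f_\mathrm{poly}(N-\mathbb EN)\1\{N<\mathbb EN\}]$ is only $O(p^{2})$ in magnitude, which for $n\ge3$ dwarfs $p^{n}$, so one must use the structure rather than crude bounds. I would exploit that $f_\mathrm{poly}>0$ there forces $N\ge1$ (shrinking the negative factor) and run a switching argument: send each sampled set with a non-rewarding entry to the set obtained by overwriting that entry with a fixed rewarding action $a^{\star}\in A$, which changes $R'$ by $+1$ and $D'$ by at most $1$; tracking the resulting change in $N$ and the mass re-weighting (rate $\pi(\cdot)\!\leftrightarrow\! p$, each image with boundedly many preimages) should show the $\mathcal P^{c}$ contribution is of order lower than $p^{n}$. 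Making this bookkeeping tight, and reconciling the $1-np$ that appears naturally with the $1-p$ in the statement, is where I expect the real work to lie.
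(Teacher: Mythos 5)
There is a genuine gap, and you have located it yourself: the estimate on $\mathcal{P}^{c}\cap\{N<\mathbb{E}N\}$ is never carried out, and the switching argument you sketch cannot be made to close it within your setup. The trouble is structural rather than bookkeeping. By centering $N$ at $\mathbb{E}N=n^{2}p$ you create a negative region $\{1\le N<n^{2}p\}$ whose probability is of order $p$, while the positive mass you retain (the permutation event) is only of order $p^{n}$; for $n\ge2$ the negative part genuinely dominates. In fact, under your normalization the statement itself can fail: take $n=2$, $q=1$, two distinct actions of probability $p$ each (one rewarding), the remaining mass on zero-reward actions. A direct computation gives $\Lambda_{f_\mathrm{poly}}(a_{1:2})=\tfrac12\,p(1-2p)^{2}$, which drops below the claimed bound $\tfrac12\,p^{2}(1-p)$ for $p$ near $\tfrac12$ (e.g.\ $p=0.4$). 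So no refinement of the switching argument can rescue the route as you framed it; the $1-np$ versus $1-p$ tension you flagged at the end is the symptom of this deeper mismatch, not a cosmetic loose end.

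The paper's proof takes a different and much shorter path. It stratifies the covariance over the overlap size $j=|a'_{1:n}\cap a_{1:n}|$, treats $j$ as $\mathrm{Bin}(n,p)$ (per-coordinate hit probability $p$, so the centered factor in stratum $j$ is $\tfrac{j}{n}-p$ after dividing by $I(a_{1:n})=n$), and then uses the hypothesis $p<\tfrac1n$ exactly where you could not: it forces $\lfloor np\rfloor=0$, so the \emph{only} stratum with a negative factor is $j=0$, and there $f_\mathrm{poly}$ vanishes identically because every positive-probability action outside $a_{1:n}$ has zero reward. All strata $1\le j\le n-1$ have nonnegative contributions and are simply dropped, and the stratum $j=n$ (where $a'_{1:n}$ reproduces $a_{1:n}$, so the diversity factor is $1$ and the mean reward is $q/n$) alone contributes $p^{n}(1-p)\tfrac{q}{n}$, which is the stated bound. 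Your reduction instead makes the per-coordinate hit probability $np$ and $\mathbb{E}N=n^{2}p$, under which the negative region is nonempty and, as the example above shows, cannot be beaten by the $\Theta(p^{n})$ permutation mass; to complete a proof you must adopt the paper's reading of the overlap distribution (or equivalently its centering at $p$), at which point the whole argument collapses to the three observations above.
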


Note that the set in this proposition includes unsuccessful actions as well that contribute diversity. As such, there are, likely, several such heterogeneous sets (provided that the size of sets $n$ is large enough) with positive scaffold values that attract more probability mass than homogeneous sets. Furthermore, the lower bound guarantee increases as the number of successful actions in the set increases. The polychromic objective therefore channels entropy collapse toward those sets that balance success and exploration, rather than permitting collapse onto homogeneous behaviors. 

These results motivate our general construction of polychromic objectives. Broadly, such objectives reward both (i) the returns achieved by a set of trajectories and (ii) the diversity of trajectories in the set.

% \begin{definition}
%     A polychromic objective is defined to be $${\varphi} : \mathcal{S} \times \mathcal{T}^n \rightarrow \mathbb{R}$$ such that it factors into ${\varphi}(s, \tau_{1:n}) = {\varphi}^{(R)}(s, \tau_{1:n}){\varphi}^{(d)}(s, \tau_{1:n})$ where ${\varphi}^{(R)}$ and ${\varphi}^{(d)}$ are scalar-valued functions satisfying 
%     \begin{enumerate}
%         \item $\mathrm{Cov}_{\tau_{1:n} \sim \pi_\theta(\cdot \mid s)}( {\varphi}^{(R)}(s, \tau_{1:n}), \sum_{i=1}^n R(\tau_i)) > 0$, 
%         \item $\mathrm{Cov}_{\tau_{1:n} \sim \pi_\theta(\cdot \mid s)}( {\varphi}^{(d)}(s, \tau_{1:n}), \sum_{i=1}^n 1\{\tau_i = \tau \}) < 0$ for any $\tau$ and  
%         \item $\inf {\varphi}^{(R)}(s, \cdot) =  \inf {\varphi}^{(d)}(s, \cdot)$ and $\sup {\varphi}^{(R)}(s, \cdot) =  \sup {\varphi}^{(d)}(s, \cdot)$ where the infimum and supremum are taken over all possible trajectory sets $\tau_{1:n}$. 
%     \end{enumerate}
% \end{definition}

\begin{definition}
A {polychromic objective} is a function 
\[\varphi : \mathcal{S} \times \mathcal{T}^n \to \mathbb{R}\]
that factors as 
\[\varphi(s, \tau_{1:n}) = \varphi^{(R)}(s, \tau_{1:n}) \varphi^{(d)}(s, \tau_{1:n}),\]
where $\varphi^{(R)}$ and $\varphi^{(d)}$ are scalar-valued functions satisfying:
\begin{enumerate}
    \item $\mathrm{Cov}_{\tau_{1:n} \sim \pi_\theta(\cdot \mid s)}( \varphi^{(R)}(s, \tau_{1:n}), \sum_{i=1}^n R(\tau_i)) > 0$,
    \item $\mathrm{Cov}_{\tau_{1:n} \sim \pi_\theta(\cdot \mid s)}\Big( \varphi^{(d)}(s, \tau_{1:n}), \sum_{i=1}^n {1}\{\tau_i = \tau \}\Big) < 0$ for any $\tau$, and
    \item $\varphi^{(R)}(s,\cdot)$ and $\varphi^{(d)}(s,\cdot)$ share the same range, i.e.,
    \[\inf \varphi^{(R)}(s, \cdot) = \inf \varphi^{(d)}(s, \cdot), \quad \sup \varphi^{(R)}(s, \cdot) = \sup \varphi^{(d)}(s, \cdot),\]
    where the infimum and supremum are taken over all sets $\tau_{1:n}$.
\end{enumerate}
\end{definition}

In other words, a polychromic objective factors into a reward component with positive covariance to return and a diversity component with negative covariance to homogeneity such that both components have equal range. Their product enforces that both success and diversity are indispensable: entropy collapse is guided away from trivial memorization and toward exploratory policies.

\section{Related Work}

Policy gradient methods ~\citep{10.5555/3009657.3009806, 10.5555/3312046, NIPS2001_4b86abe4} are a widely used family of algorithms for online reinforcement learning. Numerous variants have been developed to reduce variance and improve sample efficiency ~\citep{BHATNAGAR20092471, degris2013offpolicyactorcritic, DBLP:journals/corr/LillicrapHPHETS15, wang2017sampleefficientactorcriticexperience, schulman2017trustregionpolicyoptimization, schulman2017proximalpolicyoptimizationalgorithms, roux2025taperedoffpolicyreinforcestable}. More recently, in the context of LLM fine-tuning, several works have replaced the learned critic with empirical estimates ~\citep{deepseekai2025deepseekr1incentivizingreasoningcapability, yu2025dapoopensourcellmreinforcement, zheng2025groupsequencepolicyoptimization, liu2025understandingr1zeroliketrainingcritical}. The availability of test-time compute has motivated training objectives aligned with inference-time metrics ~\citep{tang2025optimizinglanguagemodelsinference, chow2024inferenceawarefinetuningbestofnsampling} and algorithms bridging reinforcement learning with maximum likelihood estimation~\citep{tajwar2026maximumlikelihoodreinforcementlearning}.

RL fine-tuning often suffers from entropy collapse, where policies concentrate on a few high-reward behaviors and lose coverage of the pretrained distribution~\citep{cui2025entropymechanismreinforcementlearning, wu2025invisibleleashrlvrescape, yue2025doesreinforcementlearningreally}. Entropy bonuses~\citep{haarnoja2017reinforcementlearningdeepenergybased, haarnoja2018softactorcriticoffpolicymaximum, schulman2017proximalpolicyoptimizationalgorithms, seo2021stateentropymaximizationrandom, islam2019marginalizedstatedistributionentropy} mitigate this locally but may struggle to induce semantic or trajectory-level exploration. In contrast, UCB-style bonus ~\citep{he2025rewardingunlikelyliftinggrpo, lanchantin2025diversepreferenceoptimization} can be very effective in encouraging exploration especially earlier on during RLFT; however, the objective does not explicitly encourage preserving this exploratory behavior or diversity throughout training as the UCB bonus decays over time. Covariance controls~\citep{cui2025entropymechanismreinforcementlearning} can slow down entropy collapse but it is not clear that these methods necessarily encourage the policy to explore diverse generations. Our analysis builds on top of the insights \citet{cui2025entropymechanismreinforcementlearning} propose and we show that our method also exercises covariance controls on homogeneous behaviors. Intrinsic curiosity~\citep{pmlr-v70-pathak17a} also encourage exploration but it is not clear that these methods either scale to high-dimensional spaces or solve the local versus semantic exploration problem.
Other approaches, more similar to our work, explicitly reward diversity, e.g., diversity-weighted objectives~\citep{li2025jointlyreinforcingdiversityquality}, outcome-based UCB bonus~\citep{song2025outcomebasedexplorationllmreasoning} or train explicitly for exploration using synthetic data ~\citep{tajwar2025traininggenerallycuriousagent}. The crucial distinction our work makes is the set reinforcement learning framework that enables learning a set of behaviors wherein some trajectories receive positive gradient updates for exploration despite not contributing rewards to the set.  

\section{Conclusion}
We framed the problem of learning a diverse set of successful behaviors in terms of set reinforcement learning and proposed optimizing the polychromic objective, which evaluates sets of actions using both reward and diversity. We then designed our algorithm polychromic PPO, a variant of PPO that incorporates vine sampling and a modified advantage estimator to optimize this objective. There are several limitations of our approach. Our approach requires the ability to reset the environment to any set to enable vine sampling. Otherwise, we require the MDP to have sufficiently deterministic transitions such that replaying actions can lead us approximately to the reset state. Moreover, ensuring sufficient vine coverage can be challenging, especially for very long-horizon tasks. The experiments in this paper are in settings where we could easily design a diversity function; however, designing or learning diversity functions can be difficult in various settings like for continuous control. Throughout this paper, we relied on Monte Carlo estimates to compute the advantage which could be high variance in other settings. Future work could develop more efficient estimators, or adopt curriculum and annealing schemes that balance exploration early in training with exploitation later.

\section{Acknowledgments}

We would like to thank Amber Xie, Joey Hejna, Suvir Mirchandani, Hung-Chieh Fang and Andy Tang for feedback on an earlier version of the paper. We thank Ayush Chakravarthy for helpful comments on the experimental evaluations. We thank Yoonho Lee for several valuable comments on formalizing diversity. We thank Hamidur Rahman and Pervin Akhter for helpful guidance on the formalization of the problem and feedback on the project in general. We thank Zabed Hasan and Fazilat Afreen for their insightful feedback on experimental evaluations. This work is supported by an NSF CAREER award, NSF award \# 1941722, ONR YIP, DARPA YFA award \# W911NF2210214, and Schmidt Sciences.

% \bibliography{references}
% \bibliographystyle{polychromic}

\printbibliography

@misc{tang2025optimizinglanguagemodelsinference,
      title={Optimizing Language Models for Inference Time Objectives using Reinforcement Learning}, 
      author={Yunhao Tang and Kunhao Zheng and Gabriel Synnaeve and Rémi Munos},
      year={2025},
      eprint={2503.19595},
      archivePrefix={arXiv},
      primaryClass={cs.LG},
      url={https://arxiv.org/abs/2503.19595}, 
}

@misc{cui2025entropymechanismreinforcementlearning,
      title={The Entropy Mechanism of Reinforcement Learning for Reasoning Language Models}, 
      author={Ganqu Cui and Yuchen Zhang and Jiacheng Chen and Lifan Yuan and Zhi Wang and Yuxin Zuo and Haozhan Li and Yuchen Fan and Huayu Chen and Weize Chen and Zhiyuan Liu and Hao Peng and Lei Bai and Wanli Ouyang and Yu Cheng and Bowen Zhou and Ning Ding},
      year={2025},
      eprint={2505.22617},
      archivePrefix={arXiv},
      primaryClass={cs.LG},
      url={https://arxiv.org/abs/2505.22617}, 
}

@misc{deepseekai2025deepseekr1incentivizingreasoningcapability,
      title={DeepSeek-R1: Incentivizing Reasoning Capability in LLMs via Reinforcement Learning}, 
      author={DeepSeek-AI and Daya Guo and Dejian Yang and Haowei Zhang and Junxiao Song and Ruoyu Zhang and Runxin Xu and Qihao Zhu and Shirong Ma and Peiyi Wang and Xiao Bi and Xiaokang Zhang and Xingkai Yu and Yu Wu and Z. F. Wu and Zhibin Gou and Zhihong Shao and Zhuoshu Li and Ziyi Gao and Aixin Liu and Bing Xue and Bingxuan Wang and Bochao Wu and Bei Feng and Chengda Lu and Chenggang Zhao and Chengqi Deng and Chenyu Zhang and Chong Ruan and Damai Dai and Deli Chen and Dongjie Ji and Erhang Li and Fangyun Lin and Fucong Dai and Fuli Luo and Guangbo Hao and Guanting Chen and Guowei Li and H. Zhang and Han Bao and Hanwei Xu and Haocheng Wang and Honghui Ding and Huajian Xin and Huazuo Gao and Hui Qu and Hui Li and Jianzhong Guo and Jiashi Li and Jiawei Wang and Jingchang Chen and Jingyang Yuan and Junjie Qiu and Junlong Li and J. L. Cai and Jiaqi Ni and Jian Liang and Jin Chen and Kai Dong and Kai Hu and Kaige Gao and Kang Guan and Kexin Huang and Kuai Yu and Lean Wang and Lecong Zhang and Liang Zhao and Litong Wang and Liyue Zhang and Lei Xu and Leyi Xia and Mingchuan Zhang and Minghua Zhang and Minghui Tang and Meng Li and Miaojun Wang and Mingming Li and Ning Tian and Panpan Huang and Peng Zhang and Qiancheng Wang and Qinyu Chen and Qiushi Du and Ruiqi Ge and Ruisong Zhang and Ruizhe Pan and Runji Wang and R. J. Chen and R. L. Jin and Ruyi Chen and Shanghao Lu and Shangyan Zhou and Shanhuang Chen and Shengfeng Ye and Shiyu Wang and Shuiping Yu and Shunfeng Zhou and Shuting Pan and S. S. Li and Shuang Zhou and Shaoqing Wu and Shengfeng Ye and Tao Yun and Tian Pei and Tianyu Sun and T. Wang and Wangding Zeng and Wanjia Zhao and Wen Liu and Wenfeng Liang and Wenjun Gao and Wenqin Yu and Wentao Zhang and W. L. Xiao and Wei An and Xiaodong Liu and Xiaohan Wang and Xiaokang Chen and Xiaotao Nie and Xin Cheng and Xin Liu and Xin Xie and Xingchao Liu and Xinyu Yang and Xinyuan Li and Xuecheng Su and Xuheng Lin and X. Q. Li and Xiangyue Jin and Xiaojin Shen and Xiaosha Chen and Xiaowen Sun and Xiaoxiang Wang and Xinnan Song and Xinyi Zhou and Xianzu Wang and Xinxia Shan and Y. K. Li and Y. Q. Wang and Y. X. Wei and Yang Zhang and Yanhong Xu and Yao Li and Yao Zhao and Yaofeng Sun and Yaohui Wang and Yi Yu and Yichao Zhang and Yifan Shi and Yiliang Xiong and Ying He and Yishi Piao and Yisong Wang and Yixuan Tan and Yiyang Ma and Yiyuan Liu and Yongqiang Guo and Yuan Ou and Yuduan Wang and Yue Gong and Yuheng Zou and Yujia He and Yunfan Xiong and Yuxiang Luo and Yuxiang You and Yuxuan Liu and Yuyang Zhou and Y. X. Zhu and Yanhong Xu and Yanping Huang and Yaohui Li and Yi Zheng and Yuchen Zhu and Yunxian Ma and Ying Tang and Yukun Zha and Yuting Yan and Z. Z. Ren and Zehui Ren and Zhangli Sha and Zhe Fu and Zhean Xu and Zhenda Xie and Zhengyan Zhang and Zhewen Hao and Zhicheng Ma and Zhigang Yan and Zhiyu Wu and Zihui Gu and Zijia Zhu and Zijun Liu and Zilin Li and Ziwei Xie and Ziyang Song and Zizheng Pan and Zhen Huang and Zhipeng Xu and Zhongyu Zhang and Zhen Zhang},
      year={2025},
      eprint={2501.12948},
      archivePrefix={arXiv},
      primaryClass={cs.CL},
      url={https://arxiv.org/abs/2501.12948}, 
}

@misc{openai2024openaio1card,
      title={OpenAI o1 System Card}, 
      author={OpenAI and : and Aaron Jaech and Adam Kalai and Adam Lerer and Adam Richardson and Ahmed El-Kishky and Aiden Low and Alec Helyar and Aleksander Madry and Alex Beutel and Alex Carney and Alex Iftimie and Alex Karpenko and Alex Tachard Passos and Alexander Neitz and Alexander Prokofiev and Alexander Wei and Allison Tam and Ally Bennett and Ananya Kumar and Andre Saraiva and Andrea Vallone and Andrew Duberstein and Andrew Kondrich and Andrey Mishchenko and Andy Applebaum and Angela Jiang and Ashvin Nair and Barret Zoph and Behrooz Ghorbani and Ben Rossen and Benjamin Sokolowsky and Boaz Barak and Bob McGrew and Borys Minaiev and Botao Hao and Bowen Baker and Brandon Houghton and Brandon McKinzie and Brydon Eastman and Camillo Lugaresi and Cary Bassin and Cary Hudson and Chak Ming Li and Charles de Bourcy and Chelsea Voss and Chen Shen and Chong Zhang and Chris Koch and Chris Orsinger and Christopher Hesse and Claudia Fischer and Clive Chan and Dan Roberts and Daniel Kappler and Daniel Levy and Daniel Selsam and David Dohan and David Farhi and David Mely and David Robinson and Dimitris Tsipras and Doug Li and Dragos Oprica and Eben Freeman and Eddie Zhang and Edmund Wong and Elizabeth Proehl and Enoch Cheung and Eric Mitchell and Eric Wallace and Erik Ritter and Evan Mays and Fan Wang and Felipe Petroski Such and Filippo Raso and Florencia Leoni and Foivos Tsimpourlas and Francis Song and Fred von Lohmann and Freddie Sulit and Geoff Salmon and Giambattista Parascandolo and Gildas Chabot and Grace Zhao and Greg Brockman and Guillaume Leclerc and Hadi Salman and Haiming Bao and Hao Sheng and Hart Andrin and Hessam Bagherinezhad and Hongyu Ren and Hunter Lightman and Hyung Won Chung and Ian Kivlichan and Ian O'Connell and Ian Osband and Ignasi Clavera Gilaberte and Ilge Akkaya and Ilya Kostrikov and Ilya Sutskever and Irina Kofman and Jakub Pachocki and James Lennon and Jason Wei and Jean Harb and Jerry Twore and Jiacheng Feng and Jiahui Yu and Jiayi Weng and Jie Tang and Jieqi Yu and Joaquin Quiñonero Candela and Joe Palermo and Joel Parish and Johannes Heidecke and John Hallman and John Rizzo and Jonathan Gordon and Jonathan Uesato and Jonathan Ward and Joost Huizinga and Julie Wang and Kai Chen and Kai Xiao and Karan Singhal and Karina Nguyen and Karl Cobbe and Katy Shi and Kayla Wood and Kendra Rimbach and Keren Gu-Lemberg and Kevin Liu and Kevin Lu and Kevin Stone and Kevin Yu and Lama Ahmad and Lauren Yang and Leo Liu and Leon Maksin and Leyton Ho and Liam Fedus and Lilian Weng and Linden Li and Lindsay McCallum and Lindsey Held and Lorenz Kuhn and Lukas Kondraciuk and Lukasz Kaiser and Luke Metz and Madelaine Boyd and Maja Trebacz and Manas Joglekar and Mark Chen and Marko Tintor and Mason Meyer and Matt Jones and Matt Kaufer and Max Schwarzer and Meghan Shah and Mehmet Yatbaz and Melody Y. Guan and Mengyuan Xu and Mengyuan Yan and Mia Glaese and Mianna Chen and Michael Lampe and Michael Malek and Michele Wang and Michelle Fradin and Mike McClay and Mikhail Pavlov and Miles Wang and Mingxuan Wang and Mira Murati and Mo Bavarian and Mostafa Rohaninejad and Nat McAleese and Neil Chowdhury and Neil Chowdhury and Nick Ryder and Nikolas Tezak and Noam Brown and Ofir Nachum and Oleg Boiko and Oleg Murk and Olivia Watkins and Patrick Chao and Paul Ashbourne and Pavel Izmailov and Peter Zhokhov and Rachel Dias and Rahul Arora and Randall Lin and Rapha Gontijo Lopes and Raz Gaon and Reah Miyara and Reimar Leike and Renny Hwang and Rhythm Garg and Robin Brown and Roshan James and Rui Shu and Ryan Cheu and Ryan Greene and Saachi Jain and Sam Altman and Sam Toizer and Sam Toyer and Samuel Miserendino and Sandhini Agarwal and Santiago Hernandez and Sasha Baker and Scott McKinney and Scottie Yan and Shengjia Zhao and Shengli Hu and Shibani Santurkar and Shraman Ray Chaudhuri and Shuyuan Zhang and Siyuan Fu and Spencer Papay and Steph Lin and Suchir Balaji and Suvansh Sanjeev and Szymon Sidor and Tal Broda and Aidan Clark and Tao Wang and Taylor Gordon and Ted Sanders and Tejal Patwardhan and Thibault Sottiaux and Thomas Degry and Thomas Dimson and Tianhao Zheng and Timur Garipov and Tom Stasi and Trapit Bansal and Trevor Creech and Troy Peterson and Tyna Eloundou and Valerie Qi and Vineet Kosaraju and Vinnie Monaco and Vitchyr Pong and Vlad Fomenko and Weiyi Zheng and Wenda Zhou and Wes McCabe and Wojciech Zaremba and Yann Dubois and Yinghai Lu and Yining Chen and Young Cha and Yu Bai and Yuchen He and Yuchen Zhang and Yunyun Wang and Zheng Shao and Zhuohan Li},
      year={2024},
      eprint={2412.16720},
      archivePrefix={arXiv},
      primaryClass={cs.AI},
      url={https://arxiv.org/abs/2412.16720}, 
}

@misc{ouyang2022traininglanguagemodelsfollow,
      title={Training language models to follow instructions with human feedback}, 
      author={Long Ouyang and Jeff Wu and Xu Jiang and Diogo Almeida and Carroll L. Wainwright and Pamela Mishkin and Chong Zhang and Sandhini Agarwal and Katarina Slama and Alex Ray and John Schulman and Jacob Hilton and Fraser Kelton and Luke Miller and Maddie Simens and Amanda Askell and Peter Welinder and Paul Christiano and Jan Leike and Ryan Lowe},
      year={2022},
      eprint={2203.02155},
      archivePrefix={arXiv},
      primaryClass={cs.CL},
      url={https://arxiv.org/abs/2203.02155}, 
}

@misc{qwq32b,
    title = {QwQ-32B: Embracing the Power of Reinforcement Learning},
    url = {https://qwenlm.github.io/blog/qwq-32b/},
    author = {Qwen},
    year = {2025}
}

@misc{wu2025invisibleleashrlvrescape,
      title={The Invisible Leash: Why RLVR May Not Escape Its Origin}, 
      author={Fang Wu and Weihao Xuan and Ximing Lu and Zaid Harchaoui and Yejin Choi},
      year={2025},
      eprint={2507.14843},
      archivePrefix={arXiv},
      primaryClass={cs.LG},
      url={https://arxiv.org/abs/2507.14843}, 
}

@misc{yue2025doesreinforcementlearningreally,
      title={Does Reinforcement Learning Really Incentivize Reasoning Capacity in LLMs Beyond the Base Model?}, 
      author={Yang Yue and Zhiqi Chen and Rui Lu and Andrew Zhao and Zhaokai Wang and Yang Yue and Shiji Song and Gao Huang},
      year={2025},
      eprint={2504.13837},
      archivePrefix={arXiv},
      primaryClass={cs.AI},
      url={https://arxiv.org/abs/2504.13837}, 
}

@misc{zhao2025echochamberrlposttraining,
      title={Echo Chamber: RL Post-training Amplifies Behaviors Learned in Pretraining}, 
      author={Rosie Zhao and Alexandru Meterez and Sham Kakade and Cengiz Pehlevan and Samy Jelassi and Eran Malach},
      year={2025},
      eprint={2504.07912},
      archivePrefix={arXiv},
      primaryClass={cs.LG},
      url={https://arxiv.org/abs/2504.07912}, 
}

@misc{schulman2017proximalpolicyoptimizationalgorithms,
      title={Proximal Policy Optimization Algorithms}, 
      author={John Schulman and Filip Wolski and Prafulla Dhariwal and Alec Radford and Oleg Klimov},
      year={2017},
      eprint={1707.06347},
      archivePrefix={arXiv},
      primaryClass={cs.LG},
      url={https://arxiv.org/abs/1707.06347}, 
}

@misc{friedman2023vendiscorediversityevaluation,
      title={The Vendi Score: A Diversity Evaluation Metric for Machine Learning}, 
      author={Dan Friedman and Adji Bousso Dieng},
      year={2023},
      eprint={2210.02410},
      archivePrefix={arXiv},
      primaryClass={cs.LG},
      url={https://arxiv.org/abs/2210.02410}, 
}

@inproceedings{10.5555/3009657.3009806,
author = {Sutton, Richard S. and McAllester, David and Singh, Satinder and Mansour, Yishay},
title = {Policy gradient methods for reinforcement learning with function approximation},
year = {1999},
publisher = {MIT Press},
address = {Cambridge, MA, USA},
abstract = {Function approximation is essential to reinforcement learning, but the standard approach of approximating a value function and determining a policy from it has so far proven theoretically intractable. In this paper we explore an alternative approach in which the policy is explicitly represented by its own function approximator, independent of the value function, and is updated according to the gradient of expected reward with respect to the policy parameters. Williams's REINFORCE method and actor-critic methods are examples of this approach. Our main new result is to show that the gradient can be written in a form suitable for estimation from experience aided by an approximate action-value or advantage function. Using this result, we prove for the first time that a version of policy iteration with arbitrary differentiable function approximation is convergent to a locally optimal policy.},
booktitle = {Proceedings of the 13th International Conference on Neural Information Processing Systems},
pages = {1057–1063},
numpages = {7},
location = {Denver, CO},
series = {NIPS'99}
}

@misc{schulman2018highdimensionalcontinuouscontrolusing,
      title={High-Dimensional Continuous Control Using Generalized Advantage Estimation}, 
      author={John Schulman and Philipp Moritz and Sergey Levine and Michael Jordan and Pieter Abbeel},
      year={2018},
      eprint={1506.02438},
      archivePrefix={arXiv},
      primaryClass={cs.LG},
      url={https://arxiv.org/abs/1506.02438}, 
}

@inproceedings{Kakade2002ApproximatelyOA,
  title={Approximately Optimal Approximate Reinforcement Learning},
  author={Sham M. Kakade and John Langford},
  booktitle={International Conference on Machine Learning},
  year={2002},
  url={https://api.semanticscholar.org/CorpusID:31442909}
}

@misc{schulman2017trustregionpolicyoptimization,
      title={Trust Region Policy Optimization}, 
      author={John Schulman and Sergey Levine and Philipp Moritz and Michael I. Jordan and Pieter Abbeel},
      year={2017},
      eprint={1502.05477},
      archivePrefix={arXiv},
      primaryClass={cs.LG},
      url={https://arxiv.org/abs/1502.05477}, 
}

@misc{snell2024scalingllmtesttimecompute,
      title={Scaling LLM Test-Time Compute Optimally can be More Effective than Scaling Model Parameters}, 
      author={Charlie Snell and Jaehoon Lee and Kelvin Xu and Aviral Kumar},
      year={2024},
      eprint={2408.03314},
      archivePrefix={arXiv},
      primaryClass={cs.LG},
      url={https://arxiv.org/abs/2408.03314}, 
}

@misc{roux2025taperedoffpolicyreinforcestable,
      title={Tapered Off-Policy REINFORCE: Stable and efficient reinforcement learning for LLMs}, 
      author={Nicolas Le Roux and Marc G. Bellemare and Jonathan Lebensold and Arnaud Bergeron and Joshua Greaves and Alex Fréchette and Carolyne Pelletier and Eric Thibodeau-Laufer and Sándor Toth and Sam Work},
      year={2025},
      eprint={2503.14286},
      archivePrefix={arXiv},
      primaryClass={cs.LG},
      url={https://arxiv.org/abs/2503.14286}, 
}

@misc{kazemnejad2025vinepporefiningcreditassignment,
      title={VinePPO: Refining Credit Assignment in RL Training of LLMs}, 
      author={Amirhossein Kazemnejad and Milad Aghajohari and Eva Portelance and Alessandro Sordoni and Siva Reddy and Aaron Courville and Nicolas Le Roux},
      year={2025},
      eprint={2410.01679},
      archivePrefix={arXiv},
      primaryClass={cs.LG},
      url={https://arxiv.org/abs/2410.01679}, 
}

@misc{chevalierboisvert2019babyaiplatformstudysample,
      title={BabyAI: A Platform to Study the Sample Efficiency of Grounded Language Learning}, 
      author={Maxime Chevalier-Boisvert and Dzmitry Bahdanau and Salem Lahlou and Lucas Willems and Chitwan Saharia and Thien Huu Nguyen and Yoshua Bengio},
      year={2019},
      eprint={1810.08272},
      archivePrefix={arXiv},
      primaryClass={cs.AI},
      url={https://arxiv.org/abs/1810.08272}, 
}

@misc{nagarajan2025rolldicelook,
      title={Roll the dice and look before you leap: Going beyond the creative limits of next-token prediction}, 
      author={Vaishnavh Nagarajan and Chen Henry Wu and Charles Ding and Aditi Raghunathan},
      year={2025},
      eprint={2504.15266},
      archivePrefix={arXiv},
      primaryClass={cs.LG},
      url={https://arxiv.org/abs/2504.15266}, 
}

@misc{wang2017sampleefficientactorcriticexperience,
      title={Sample Efficient Actor-Critic with Experience Replay}, 
      author={Ziyu Wang and Victor Bapst and Nicolas Heess and Volodymyr Mnih and Remi Munos and Koray Kavukcuoglu and Nando de Freitas},
      year={2017},
      eprint={1611.01224},
      archivePrefix={arXiv},
      primaryClass={cs.LG},
      url={https://arxiv.org/abs/1611.01224}, 
}

@misc{yu2025dapoopensourcellmreinforcement,
      title={DAPO: An Open-Source LLM Reinforcement Learning System at Scale}, 
      author={Qiying Yu and Zheng Zhang and Ruofei Zhu and Yufeng Yuan and Xiaochen Zuo and Yu Yue and Weinan Dai and Tiantian Fan and Gaohong Liu and Lingjun Liu and Xin Liu and Haibin Lin and Zhiqi Lin and Bole Ma and Guangming Sheng and Yuxuan Tong and Chi Zhang and Mofan Zhang and Wang Zhang and Hang Zhu and Jinhua Zhu and Jiaze Chen and Jiangjie Chen and Chengyi Wang and Hongli Yu and Yuxuan Song and Xiangpeng Wei and Hao Zhou and Jingjing Liu and Wei-Ying Ma and Ya-Qin Zhang and Lin Yan and Mu Qiao and Yonghui Wu and Mingxuan Wang},
      year={2025},
      eprint={2503.14476},
      archivePrefix={arXiv},
      primaryClass={cs.LG},
      url={https://arxiv.org/abs/2503.14476}, 
}

@misc{degris2013offpolicyactorcritic,
      title={Off-Policy Actor-Critic}, 
      author={Thomas Degris and Martha White and Richard S. Sutton},
      year={2013},
      eprint={1205.4839},
      archivePrefix={arXiv},
      primaryClass={cs.LG},
      url={https://arxiv.org/abs/1205.4839}, 
}

@book{10.5555/3312046,
    author = {Sutton, Richard S. and Barto, Andrew G.},
    title = {Reinforcement Learning: An Introduction},
    year = {2018},
    isbn = {0262039249},
    publisher = {A Bradford Book},
    address = {Cambridge, MA, USA},
    abstract = {The significantly expanded and updated new edition of a widely used text on reinforcement learning, one of the most active research areas in artificial intelligence. Reinforcement learning, one of the most active research areas in artificial intelligence, is a computational approach to learning whereby an agent tries to maximize the total amount of reward it receives while interacting with a complex, uncertain environment. In Reinforcement Learning, Richard Sutton and Andrew Barto provide a clear and simple account of the field's key ideas and algorithms. This second edition has been significantly expanded and updated, presenting new topics and updating coverage of other topics. Like the first edition, this second edition focuses on core online learning algorithms, with the more mathematical material set off in shaded boxes. Part I covers as much of reinforcement learning as possible without going beyond the tabular case for which exact solutions can be found. Many algorithms presented in this part are new to the second edition, including UCB, Expected Sarsa, and Double Learning. Part II extends these ideas to function approximation, with new sections on such topics as artificial neural networks and the Fourier basis, and offers expanded treatment of off-policy learning and policy-gradient methods. Part III has new chapters on reinforcement learning's relationships to psychology and neuroscience, as well as an updated case-studies chapter including AlphaGo and AlphaGo Zero, Atari game playing, and IBM Watson's wagering strategy. The final chapter discusses the future societal impacts of reinforcement learning.}
}

@article{BHATNAGAR20092471,
    title = {Natural actor–critic algorithms},
    journal = {Automatica},
    volume = {45},
    number = {11},
    pages = {2471-2482},
    year = {2009},
    issn = {0005-1098},
    doi = {https://doi.org/10.1016/j.automatica.2009.07.008},
    url = {https://www.sciencedirect.com/science/article/pii/S0005109809003549},
    author = {Shalabh Bhatnagar and Richard S. Sutton and Mohammad Ghavamzadeh and Mark Lee},
    keywords = {Actor–critic reinforcement learning algorithms, Policy-gradient methods, Approximate dynamic programming, Function approximation, Two-timescale stochastic approximation, Temporal difference learning, Natural gradient},
    abstract = {We present four new reinforcement learning algorithms based on actor–critic, natural-gradient and function-approximation ideas, and we provide their convergence proofs. Actor–critic reinforcement learning methods are online approximations to policy iteration in which the value-function parameters are estimated using temporal difference learning and the policy parameters are updated by stochastic gradient descent. Methods based on policy gradients in this way are of special interest because of their compatibility with function-approximation methods, which are needed to handle large or infinite state spaces. The use of temporal difference learning in this way is of special interest because in many applications it dramatically reduces the variance of the gradient estimates. The use of the natural gradient is of interest because it can produce better conditioned parameterizations and has been shown to further reduce variance in some cases. Our results extend prior two-timescale convergence results for actor–critic methods by Konda and Tsitsiklis by using temporal difference learning in the actor and by incorporating natural gradients. Our results extend prior empirical studies of natural actor–critic methods by Peters, Vijayakumar and Schaal by providing the first convergence proofs and the first fully incremental algorithms.}
}

@inproceedings{NIPS2001_4b86abe4,
     author = {Kakade, Sham M},
     booktitle = {Advances in Neural Information Processing Systems},
     editor = {T. Dietterich and S. Becker and Z. Ghahramani},
     pages = {},
     publisher = {MIT Press},
     title = {A Natural Policy Gradient},
     url = {https://proceedings.neurips.cc/paper_files/paper/2001/file/4b86abe48d358ecf194c56c69108433e-Paper.pdf},
     volume = {14},
     year = {2001}
}

@misc{zheng2025groupsequencepolicyoptimization,
      title={Group Sequence Policy Optimization}, 
      author={Chujie Zheng and Shixuan Liu and Mingze Li and Xiong-Hui Chen and Bowen Yu and Chang Gao and Kai Dang and Yuqiong Liu and Rui Men and An Yang and Jingren Zhou and Junyang Lin},
      year={2025},
      eprint={2507.18071},
      archivePrefix={arXiv},
      primaryClass={cs.LG},
      url={https://arxiv.org/abs/2507.18071}, 
}

@misc{tajwar2026maximumlikelihoodreinforcementlearning,
      title={Maximum Likelihood Reinforcement Learning}, 
      author={Fahim Tajwar and Guanning Zeng and Yueer Zhou and Yuda Song and Daman Arora and Yiding Jiang and Jeff Schneider and Ruslan Salakhutdinov and Haiwen Feng and Andrea Zanette},
      year={2026},
      eprint={2602.02710},
      archivePrefix={arXiv},
      primaryClass={cs.LG},
      url={https://arxiv.org/abs/2602.02710}, 
}

@inproceedings{DBLP:journals/corr/LillicrapHPHETS15,
  author       = {Timothy P. Lillicrap and
                  Jonathan J. Hunt and
                  Alexander Pritzel and
                  Nicolas Heess and
                  Tom Erez and
                  Yuval Tassa and
                  David Silver and
                  Daan Wierstra},
  editor       = {Yoshua Bengio and
                  Yann LeCun},
  title        = {Continuous control with deep reinforcement learning},
  booktitle    = {4th International Conference on Learning Representations, {ICLR} 2016,
                  San Juan, Puerto Rico, May 2-4, 2016, Conference Track Proceedings},
  year         = {2016},
  url          = {http://arxiv.org/abs/1509.02971},
  timestamp    = {Thu, 25 Jul 2019 14:25:37 +0200},
  biburl       = {https://dblp.org/rec/journals/corr/LillicrapHPHETS15.bib},
  bibsource    = {dblp computer science bibliography, https://dblp.org}
}

@misc{liu2025understandingr1zeroliketrainingcritical,
      title={Understanding R1-Zero-Like Training: A Critical Perspective}, 
      author={Zichen Liu and Changyu Chen and Wenjun Li and Penghui Qi and Tianyu Pang and Chao Du and Wee Sun Lee and Min Lin},
      year={2025},
      eprint={2503.20783},
      archivePrefix={arXiv},
      primaryClass={cs.LG},
      url={https://arxiv.org/abs/2503.20783}, 
}

@misc{haarnoja2017reinforcementlearningdeepenergybased,
      title={Reinforcement Learning with Deep Energy-Based Policies}, 
      author={Tuomas Haarnoja and Haoran Tang and Pieter Abbeel and Sergey Levine},
      year={2017},
      eprint={1702.08165},
      archivePrefix={arXiv},
      primaryClass={cs.LG},
      url={https://arxiv.org/abs/1702.08165}, 
}

@misc{haarnoja2018softactorcriticoffpolicymaximum,
      title={Soft Actor-Critic: Off-Policy Maximum Entropy Deep Reinforcement Learning with a Stochastic Actor}, 
      author={Tuomas Haarnoja and Aurick Zhou and Pieter Abbeel and Sergey Levine},
      year={2018},
      eprint={1801.01290},
      archivePrefix={arXiv},
      primaryClass={cs.LG},
      url={https://arxiv.org/abs/1801.01290}, 
}

@article{10.1007/BF00992696,
author = {Williams, Ronald J.},
title = {Simple Statistical Gradient-Following Algorithms for Connectionist Reinforcement Learning},
year = {1992},
issue_date = {May 1992},
publisher = {Kluwer Academic Publishers},
address = {USA},
volume = {8},
number = {3–4},
issn = {0885-6125},
url = {https://doi.org/10.1007/BF00992696},
doi = {10.1007/BF00992696},
abstract = {This article presents a general class of associative reinforcement learning algorithms for connectionist networks containing stochastic units. These algorithms, called REINFORCE algorithms, are shown to make weight adjustments in a direction that lies along the gradient of expected reinforcement in both immediate-reinforcement tasks and certain limited forms of delayed-reinforcement tasks, and they do this without explicitly computing gradient estimates or even storing information from which such estimates could be computed. Specific examples of such algorithms are presented, some of which bear a close relationship to certain existing algorithms while others are novel but potentially interesting in their own right. Also given are results that show how such algorithms can be naturally integrated with backpropagation. We close with a brief discussion of a number of additional issues surrounding the use of such algorithms, including what is known about their limiting behaviors as well as further considerations that might be used to help develop similar but potentially more powerful reinforcement learning algorithms.},
journal = {Mach. Learn.},
month = may,
pages = {229–256},
numpages = {28},
keywords = {mathematical analysis, gradient descent, connectionist networks, Reinforcement learning}
}

@misc{zhang2025noveltybenchevaluatinglanguagemodels,
      title={NoveltyBench: Evaluating Language Models for Humanlike Diversity}, 
      author={Yiming Zhang and Harshita Diddee and Susan Holm and Hanchen Liu and Xinyue Liu and Vinay Samuel and Barry Wang and Daphne Ippolito},
      year={2025},
      eprint={2504.05228},
      archivePrefix={arXiv},
      primaryClass={cs.CL},
      url={https://arxiv.org/abs/2504.05228}, 
}

@misc{li2025jointlyreinforcingdiversityquality,
      title={Jointly Reinforcing Diversity and Quality in Language Model Generations}, 
      author={Tianjian Li and Yiming Zhang and Ping Yu and Swarnadeep Saha and Daniel Khashabi and Jason Weston and Jack Lanchantin and Tianlu Wang},
      year={2025},
      eprint={2509.02534},
      archivePrefix={arXiv},
      primaryClass={cs.CL},
      url={https://arxiv.org/abs/2509.02534}, 
}

@misc{kumar2020solutionneedfewshotextrapolation,
      title={One Solution is Not All You Need: Few-Shot Extrapolation via Structured MaxEnt RL}, 
      author={Saurabh Kumar and Aviral Kumar and Sergey Levine and Chelsea Finn},
      year={2020},
      eprint={2010.14484},
      archivePrefix={arXiv},
      primaryClass={cs.LG},
      url={https://arxiv.org/abs/2010.14484}, 
}

@InProceedings{pmlr-v70-pathak17a,
  title = 	 {Curiosity-driven Exploration by Self-supervised Prediction},
  author =       {Deepak Pathak and Pulkit Agrawal and Alexei A. Efros and Trevor Darrell},
  booktitle = 	 {Proceedings of the 34th International Conference on Machine Learning},
  pages = 	 {2778--2787},
  year = 	 {2017},
  editor = 	 {Precup, Doina and Teh, Yee Whye},
  volume = 	 {70},
  series = 	 {Proceedings of Machine Learning Research},
  publisher =    {PMLR},
  pdf = 	 {http://proceedings.mlr.press/v70/pathak17a/pathak17a.pdf},
  url = 	 {https://proceedings.mlr.press/v70/pathak17a.html},
  abstract = 	 {In many real-world scenarios, rewards extrinsic to the agent are extremely sparse, or absent altogether. In such cases, curiosity can serve as an intrinsic reward signal to enable the agent to explore its environment and learn skills that might be useful later in its life. We formulate curiosity as the error in an agent’s ability to predict the consequence of its own actions in a visual feature space learned by a self-supervised inverse dynamics model. Our formulation scales to high-dimensional continuous state spaces like images, bypasses the difficulties of directly predicting pixels, and, critically, ignores the aspects of the environment that cannot affect the agent. The proposed approach is evaluated in two environments: VizDoom and Super Mario Bros. Three broad settings are investigated: 1) sparse extrinsic reward, where curiosity allows for far fewer interactions with the environment to reach the goal; 2) exploration with no extrinsic reward, where curiosity pushes the agent to explore more efficiently; and 3) generalization to unseen scenarios (e.g. new levels of the same game) where the knowledge gained from earlier experience helps the agent explore new places much faster than starting from scratch.}
}

@misc{seo2021stateentropymaximizationrandom,
      title={State Entropy Maximization with Random Encoders for Efficient Exploration}, 
      author={Younggyo Seo and Lili Chen and Jinwoo Shin and Honglak Lee and Pieter Abbeel and Kimin Lee},
      year={2021},
      eprint={2102.09430},
      archivePrefix={arXiv},
      primaryClass={cs.LG},
      url={https://arxiv.org/abs/2102.09430}, 
}

@misc{islam2019marginalizedstatedistributionentropy,
      title={Marginalized State Distribution Entropy Regularization in Policy Optimization}, 
      author={Riashat Islam and Zafarali Ahmed and Doina Precup},
      year={2019},
      eprint={1912.05128},
      archivePrefix={arXiv},
      primaryClass={cs.LG},
      url={https://arxiv.org/abs/1912.05128}, 
}

@misc{he2025rewardingunlikelyliftinggrpo,
      title={Rewarding the Unlikely: Lifting GRPO Beyond Distribution Sharpening}, 
      author={Andre He and Daniel Fried and Sean Welleck},
      year={2025},
      eprint={2506.02355},
      archivePrefix={arXiv},
      primaryClass={cs.LG},
      url={https://arxiv.org/abs/2506.02355}, 
}

@misc{lanchantin2025diversepreferenceoptimization,
      title={Diverse Preference Optimization}, 
      author={Jack Lanchantin and Angelica Chen and Shehzaad Dhuliawala and Ping Yu and Jason Weston and Sainbayar Sukhbaatar and Ilia Kulikov},
      year={2025},
      eprint={2501.18101},
      archivePrefix={arXiv},
      primaryClass={cs.CL},
      url={https://arxiv.org/abs/2501.18101}, 
}

@misc{song2025outcomebasedexplorationllmreasoning,
      title={Outcome-based Exploration for LLM Reasoning}, 
      author={Yuda Song and Julia Kempe and Remi Munos},
      year={2025},
      eprint={2509.06941},
      archivePrefix={arXiv},
      primaryClass={cs.LG},
      url={https://arxiv.org/abs/2509.06941}, 
}

@misc{chow2024inferenceawarefinetuningbestofnsampling,
      title={Inference-Aware Fine-Tuning for Best-of-N Sampling in Large Language Models}, 
      author={Yinlam Chow and Guy Tennenholtz and Izzeddin Gur and Vincent Zhuang and Bo Dai and Sridhar Thiagarajan and Craig Boutilier and Rishabh Agarwal and Aviral Kumar and Aleksandra Faust},
      year={2024},
      eprint={2412.15287},
      archivePrefix={arXiv},
      primaryClass={cs.CL},
      url={https://arxiv.org/abs/2412.15287}, 
}

@misc{azar2017minimaxregretboundsreinforcement,
      title={Minimax Regret Bounds for Reinforcement Learning}, 
      author={Mohammad Gheshlaghi Azar and Ian Osband and Rémi Munos},
      year={2017},
      eprint={1703.05449},
      archivePrefix={arXiv},
      primaryClass={stat.ML},
      url={https://arxiv.org/abs/1703.05449}, 
}

@software{minari,
	author = {Younis, Omar G. and Perez-Vicente, Rodrigo and Balis, John U. and Dudley, Will and Davey, Alex and Terry, Jordan K},
	doi = {10.5281/zenodo.13767625},
	month = sep,
	publisher = {Zenodo},
	title = {Minari},
	url = {https://github.com/Farama-Foundation/Minari},
	version = {0.5.0},
	year = 2024,
	bdsk-url-1 = {https://doi.org/10.5281/zenodo.13767625}
}

@misc{tajwar2025traininggenerallycuriousagent,
      title={Training a Generally Curious Agent}, 
      author={Fahim Tajwar and Yiding Jiang and Abitha Thankaraj and Sumaita Sadia Rahman and J Zico Kolter and Jeff Schneider and Ruslan Salakhutdinov},
      year={2025},
      eprint={2502.17543},
      archivePrefix={arXiv},
      primaryClass={cs.LG},
      url={https://arxiv.org/abs/2502.17543}, 
}

\appendix

\section{Implementation Details}
\label{appendix: algorithm_implementation}

\paragraph{BabyAI and MiniGrid.}
For BabyAI tasks, the policy conditions on the grid image, the agent's direction embedding, and the mission text (encoded by a GRU); the action space is the standard BabyAI/MiniGrid discrete set \emph{left, right, forward, pickup, drop, toggle, done}. We provide example configurations for each task in \cref{fig:babyai_envs}. We train a CNN–GRU policy that outputs action logits. In \textit{MiniGrid–FourRooms}, the action space is identical, but the observation excludes a mission (as the goal specification is fixed), so we use a compact MLP that produces action logits conditioned on a flattened image observation. During pretraining, we use an 80/20 train–test split of expert demonstrations for each task, except for \textit{Synthseq} and \textit{BossLevel}, where we found that the full dataset was required to obtain a reasonably strong base policy. We used the dataset from \citet{minari}. We pretrain by minimizing the cross-entropy loss with an entropy regularizer.

During RLFT, we fine-tune on $50$ configurations. For all tasks except \textit{BossLevel} and \textit{Synthseq}, these configurations are drawn from the pretraining test set; for \textit{BossLevel} and \textit{Synthseq}, we did not carve out a separate test set. If the agent reaches the goal at time $t$ (episode horizon $H{=}100$), it receives the reward $1 - 0.5 \cdot \tfrac{t}{H}$; otherwise $r = 0$. (BabyAI/MiniGrid defaults to $1 - 0.9 \cdot \tfrac{t}{H}$; we found that lowering the time penalty improved diversity during RLFT, especially for REINFORCE, by reducing the disadvantage of longer but successful trajectories.)

\paragraph{Algorithmic Creativity (Triangle Discovery).} In \textit{Triangle Discovery}, an agent must output a sequence of edge tokens that form a valid triangle in an \emph{unobserved} undirected graph. The input sequence comprises a graph index plus a prefix prompt and the agent’s past outputs. We set the maximum sequence length to be $11$. The action space is a discrete vocabulary of size $1017$. We pretrain a decoder-only Transformer with masked cross-entropy. The pretraining dataset, from \citet{nagarajan2025rolldicelook}, contains $15{,}000$ samples per graph, with the same being a triangle with probability $\tfrac{1}{3}$ and an edge with probability $\tfrac{2}{3}$ edges. Each graph has $999$ nodes. For RLFT, we fine-tune on 3 fixed graphs. The reward is sparse: $+1$ for a valid triangle, $0$ otherwise. 

\begin{figure}[t]
    \centering
    % --- Row 1 ---
    \begin{subfigure}[t]{0.35\textwidth}
        \centering
        \includegraphics[width=\linewidth]{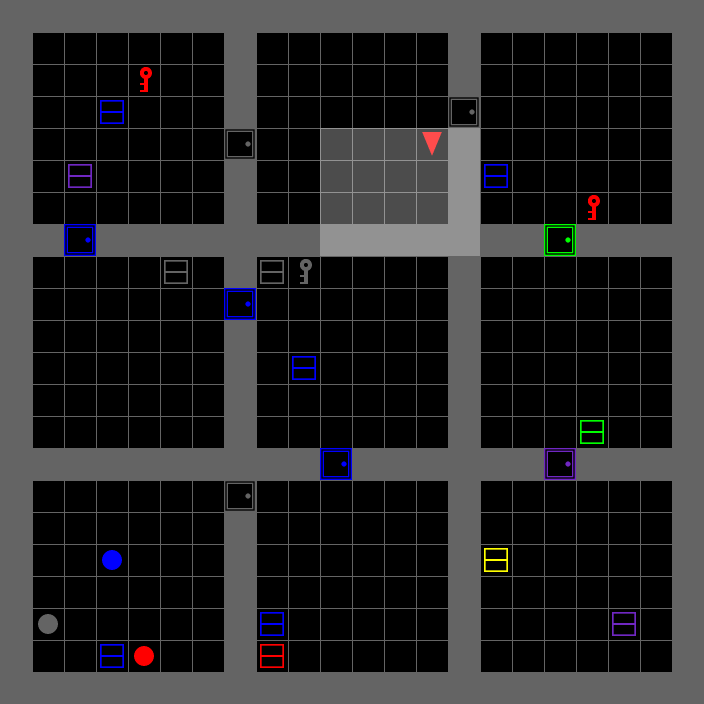}
        \caption{\textit{GoTo:} mission “go to purple box.”}
        \label{fig:goto_env}
    \end{subfigure}\hspace{5mm}
    \begin{subfigure}[t]{0.35\textwidth}
        \centering
        \includegraphics[width=\linewidth]{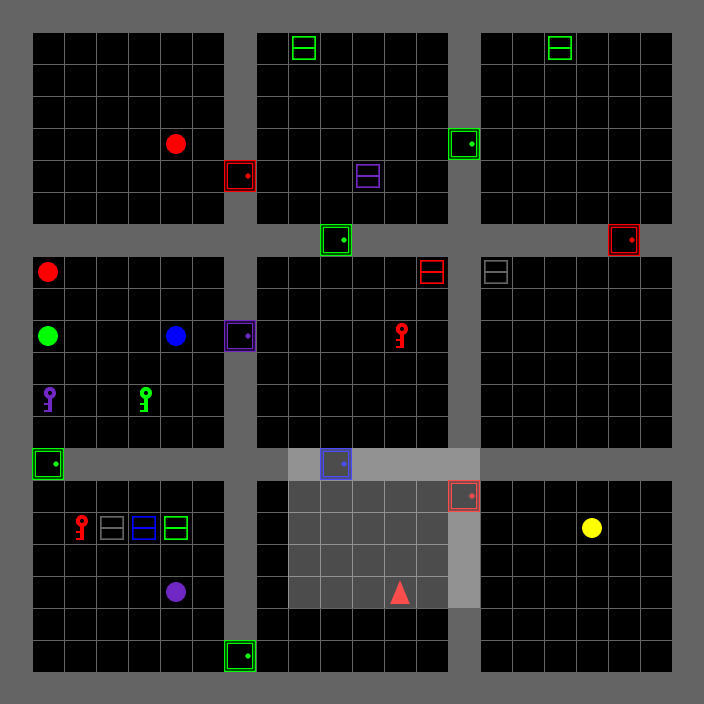}
        \caption{\textit{Pickup:} mission “pick up a green ball.”}
        \label{fig:pickup_env}
    \end{subfigure}

    \vspace{0.6em}

    % --- Row 2 ---
    \begin{subfigure}[t]{0.35\textwidth}
        \centering
        \includegraphics[width=\linewidth]{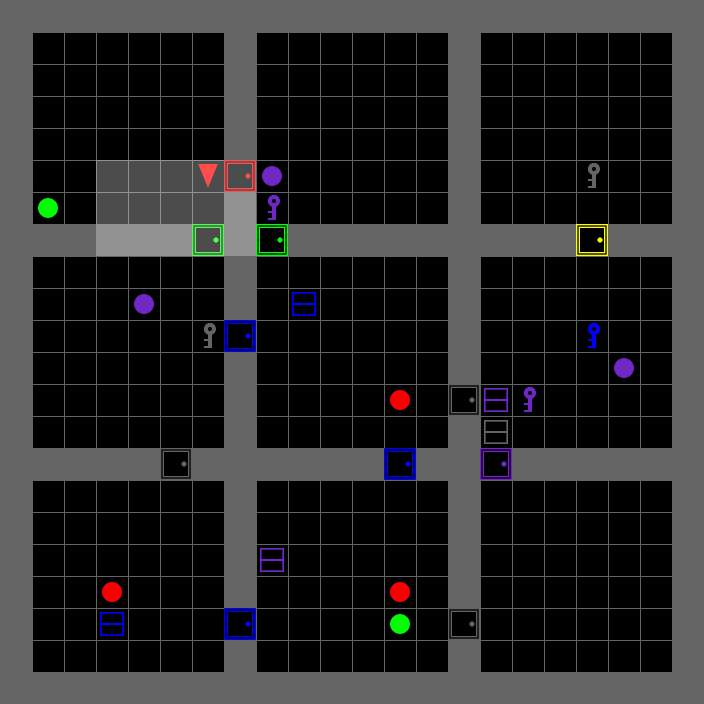}
        \caption{\textit{Synthseq:} mission “put the ball on your right next to a red ball and pick up a purple ball after you open a grey door.”}
        \label{fig:synthseq_env}
    \end{subfigure}\hspace{5mm}
    \begin{subfigure}[t]{0.35\textwidth}
        \centering
        \includegraphics[width=\linewidth]{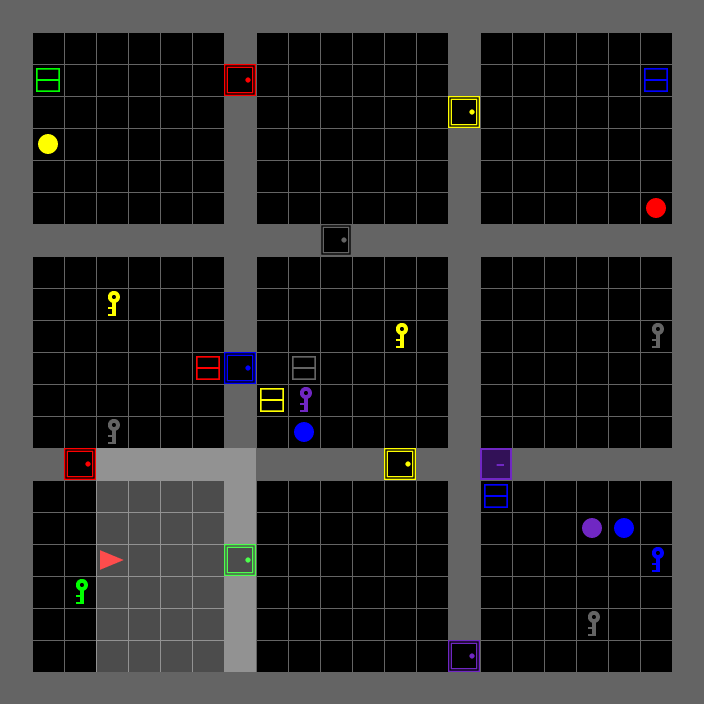}
        \caption{\textit{Bosslevel:} mission “put a yellow key next to the blue key and pick up a ball after you pick up a yellow key.”}
        \label{fig:bosslevel_env}
    \end{subfigure}

    \captionsetup{skip=4pt}
    \caption{Example BabyAI environments and their missions.}
    \label{fig:babyai_envs}
\end{figure}

\subsection{Polychromic PPO}

We summarize implementation details for \methodname, beginning with the \emph{vine sampling} scheme used for on-policy data collection, then additional stability techniques and full pseudocode in \cref{alg:poly_ppo_elaborate}.

\subsubsection{Vine Sampling}
\label{appendix: vine_sampling}

In this section, we describe the vine sampling scheme ~\citep{schulman2017trustregionpolicyoptimization} we used for \methodname. In vine sampling, we first generate a number of trajectories. Then, we select a subset of the states visited, denoted by $\{s_1, \ldots, s_p\}$ - this is called the \textit{rollout set}. For each state $s_i$ in the rollout set (we call this a \textit{rollout state}), we generate a set of trajectories $\tau_{1:N} \sim \pi_\beta(\cdot \mid s_i)$. To avoid notational overload and to make sample accounting explicit, we distinguish:
\begin{itemize}
\item $n$: the set size used by the set-RL objective (number of trajectories per set)
\item $N$: the number of \emph{rollouts} collected from each rollout state/vine state
\item $p$: the number of \emph{rollout states} selected along each seed rollout
\item $B$: the trajectory budget i.e., maximum number of trajectories we can collect.
\end{itemize}

We now discuss the method we used to select the set of rollout states. Our goal is to identify multiple states from which we want to generate vines so that we can use the polychromic advantage to update the policy at a large number of states. Suppose, we have a trajectory budget $B$, i.e., we are allowed to generate at most $B$ trajectories during the on-policy data collection. We first sample $N$ rollouts independently where $N > n$. Now, for each of these $N$ trajectories, we identify $p$ rollout states according to some criterion. Some examples of rollout state criterion are: (1) Top $p$ states with the highest entropy; sample more trajectories from states where the policy is more uncertain, (2) Top $p$ states with the highest critic losses; sample more trajectories where our critic is wrong/biased, and (3) $p$ equally spaced out states. Suppose the main trajectory is $T$ timesteps long. We select the states at timestep $\frac{T}{p+1}, \frac{2T}{p+1}, \cdots, \frac{pT}{p+1}$.

In this paper, we used the third criterion. Note that, in this scheme, we generate, in total, $N + N^2(p - 1)$ trajectories. Since we are allowed to sample at most $B$ trajectories, we must select $N$ and $p$ such that $N > n$ and $N + N^2(p -1) \leq B$. Across all environments, we set a trajectory budget $B = 136$ for all methods. As such, we use sets of size $n = 4$ for set RL, $N = 8$ vines at each rollout state, and $p = 2$ rollout states per trajectory. Note that at each of these rollout states, given we generate $N$ rollouts, we can find $\binom{N}{n}$ sets for our set RL algorithm. In fact, this combinatorial construction of sets allows us to compute a baseline that is unbiased. In our experiments, we set the number of sets, $M$, to be 4. For our chosen hyperparameters, this was sufficiently large number of sets from each rollout state allowing us to use several sets to compute the baseline.

\subsubsection{Other Implementation Details and Pseudocode}
\label{appendix: pseudocode}

Now, we will discuss other implementation details for \methodname. We provide the pseudocode for the complete algorithm in \cref{alg:poly_ppo_elaborate} and the hyperparameters in \cref{tab:polyppo-hparams}. 

We found that adding the KL penalty from the behavior policy was helpful for stability. In the absence of it, in some tasks, the model's performance collapses after a certain number of training epochs. This is likely because our method explicitly encourages exploration and the KL penalty provides an anchor that prevents the model from drifting too far. 

In practical implementation, we found that adding a window within which we update all the advantages to the polychromic advantage to be useful. As shown in \cref{alg:poly_ppo_elaborate}, we do not only set the advantage at the rollout state $s_t$ to be the polychromic advantage - instead, we set the advantages at states $s_t,s_{t+1},\cdots,s_{t+W}$ to be the polychromic advantage even though we do not generate vines from $s_{t+1},\cdots, s_{t+W}$. This ensures that the exploratory behavior that the polychromic advantage encourages is induced at all states throughout the window. Otherwise, although the policy might be exploratory at $s_t$, it might revert to being purely exploitative at all subsequent states due to the updates using the standard PPO objective; this would cause the policy to not explore. This issue becomes pronounced in environments where, despite being exploratory at $s_t$, the exploitative behavior in all subsequent states may override the exploration in $s_t$. This is why this implementation trick was important in BabyAI and Minigrid while, in Algorithmic Creativity, we set $W = 0$ since once the policy visits a diverse set of nodes from $s_t$, it is not easy to merge paths.

\begin{algorithm}
\caption{Polychromic PPO}
\label{alg:poly_ppo_elaborate}
\begin{algorithmic}[1]
\REQUIRE pretrained Policy $\pi_\beta$, value function $V_\phi$, discount $\gamma$, GAE parameter $\lambda$, clipping $\epsilon$, policy epochs $K$, value coef $c_v$, KL target coef $\beta_{\mathrm{KL}}$ \\
(Poly-PPO specific:) number of sets $N$, set size $n$, number of vine states $p$, number of vines $N$, window length $W$. 
\STATE Initialize $\pi_\theta \gets \pi_\beta$
\FOR{iteration $= 1,2,\dots$}
  \STATE \textbf{Collect on-policy data using vine sampling:} 
    \begin{ALC@g}
      \STATE Initialize $\texttt{rollout\_states} \gets \{\}$ \hfill $\triangleright$ dictionary: state $\mapsto$ list of trajectories
      \STATE Roll out $N$ trajectories using $\pi_\beta$
      \FOR{each trajectory $\tau$}
        \STATE select $p$ vine states from $\tau$
        \FOR{each vine state $s$}
          \STATE Roll out $N$ trajectories from $s$ using $\pi_\beta$
          \STATE Append the new trajectories to $\texttt{rollout\_states}[s]$
        \ENDFOR
      \ENDFOR
    \end{ALC@g}
  \STATE \textbf{Compute advantages:}
  \IF{$s \not\in$ $\texttt{rollout\_states}$}
     \STATE $\delta_t \gets r_t + \gamma V_\phi(s_{t+1}) - V_\phi(s_t)$
     \STATE $A_t \gets \mathrm{GAE}(\delta_{t:T}, \gamma, \lambda)$ \hfill $\triangleright$ generalized advantage estimation
     \STATE $\hat{R}_t \gets A_t + V_\phi(s_t)$
  \ELSIF{$s \in$ $\texttt{rollout\_states}$}
     \STATE Create groups $g_1, g_1,\cdots,g_M$ of $n$ trajectories from $\texttt{rollout\_states}[s]$. 
     \STATE Compute set scores $\mathrm{score}(g_i) = {f_\mathrm{poly}}(s,\tau_{1:n})$ for $\tau_{1:n} \in g_i$(\cref{eq: eg_poly_objective})
     \STATE Compute baseline $\hat{f}(s) = \frac{1}{M}\sum_{i=1}^M \mathrm{score}(g_i)$. 
     \STATE Define polychromic advantage of pairs $(s_t, a_t),\ldots,(s_{t+W}, a_{t+W}) \in \tau$ for each $\tau \in g_i$: \\
     $$A^{\mathrm{poly}}(s_{t'}, a_{t'}) \gets \mathrm{score}(g_i) - \hat{f}(s) \quad \quad \quad \quad \quad \triangleright \text{ polychromic advantage}$$
  \ENDIF
  \STATE Normalize $\{A_t\}$
  \FOR{epoch $=1,\dots,K$}
    \FOR{minibatch $\mathcal{B}$}
      \STATE Compute ratios $r_t(\theta) \gets \frac{\pi_\theta(a_t \mid s_t)}{\pi_\beta(a_t \mid s_t)}$
      \STATE Policy loss:
      \[
      \mathcal{L}_{\pi}(\theta) \;=\; -\frac{1}{|\mathcal{B}|}\sum_{t\in\mathcal{B}} \min\!\Big(r_t(\theta) A_t, \; \mathrm{clip}(r_t(\theta), 1-\epsilon, 1+\epsilon)\,A_t\Big)
      \]
      \STATE Value loss: \(\mathcal{L}_V(\phi) = \frac{1}{|\mathcal{B}|}\sum_{t\in\mathcal{B}} \big(V_\phi(s_t)-\hat{R}_t\big)^2\)
      \STATE KL penalty: \(\mathcal{L}_{\mathrm{KL}}(\theta) = \frac{1}{|\mathcal{B}|}\sum_{t\in\mathcal{B}} \mathrm{KL}\!\big(\pi_{\beta}(\cdot|s_t)\,\|\,\pi_\theta(\cdot|s_t)\big)\)
      \STATE Total loss:
      \[
      \mathcal{L}(\theta,\phi) \;=\; \mathcal{L}_{\pi}(\theta) \;+\; c_v\,\mathcal{L}_V(\phi) \;+\; \beta_{\mathrm{KL}}\,\mathcal{L}_{\mathrm{KL}}(\theta)
      \]
      \STATE Take gradient step on $\theta,\phi$ to minimize $\mathcal{L}$
      \STATE Update $\pi_\beta \gets \pi_\theta$
    \ENDFOR
  \ENDFOR
\ENDFOR
\end{algorithmic}
\end{algorithm}

\begin{table}[h!]
\centering
\begin{tabular}{l c}
\toprule
\textbf{Hyperparameter} & \textbf{Value} \\
\midrule
PPO epochs & 2 \\
Minibatch size & 64 \\
Discount ($\gamma$) & 1.0 \\
GAE parameter ($\lambda$) & 0.95 \\
Clipping parameter ($\epsilon$) & 0.2 \\
Actor learning rate & $1\times10^{-5}$ \\
Critic learning rate & $1\times10^{-4}$ \\
Value loss coefficient ($c_v$) & 0.5 \\
KL coefficient ($\beta_\mathrm{KL}$) & \{0.005, 0.01, 0.05, 0.1\} \\
Max grad norm & 0.5 \\
Temperature & 1.0 \\
Num. vines at state ($N$)& 8 \\
Size of sets ($n$)& 4 \\
Num. sets ($M$)& 4 \\
Num. rollout states ($p$) & 2 \\
Polychrome window ($W$)& 5 \text{ for BabyAI/Minigrid}, 0 \text{ for Algorithmic Creativity}\\
\bottomrule
\end{tabular}
\caption{Polychromic PPO hyperparameters. All hyperparameters are fixed apart from the KL coefficient $\beta_\mathrm{KL}$ for which we provide the set we sweep over.}
\label{tab:polyppo-hparams}
\end{table}

\newpage

\section{Generalization Experiment}
To evaluate the agent's ability to generalize to initial state perturbations, we designed the following experiment. First, for each environment seed, we perform 100 rollouts with the pretrained policy to identify all reachable rooms. Then, for each unique room visited, we evaluate the target policy (fine-tuned using RL) by placing the agent at 10 randomly selected starting positions within that room. Performance is measured using the pass@1 metric, which calculates the success rate on the first attempt from these novel starting points. This randomization presents a significant challenge, as a successful trajectory from a new initial state often requires substantially different strategies than those effective from the original start state as shown in \cref{fig:bosslevel_generalization}.

\begin{figure}[t]
    \centering
    \includegraphics[width=0.45\linewidth]{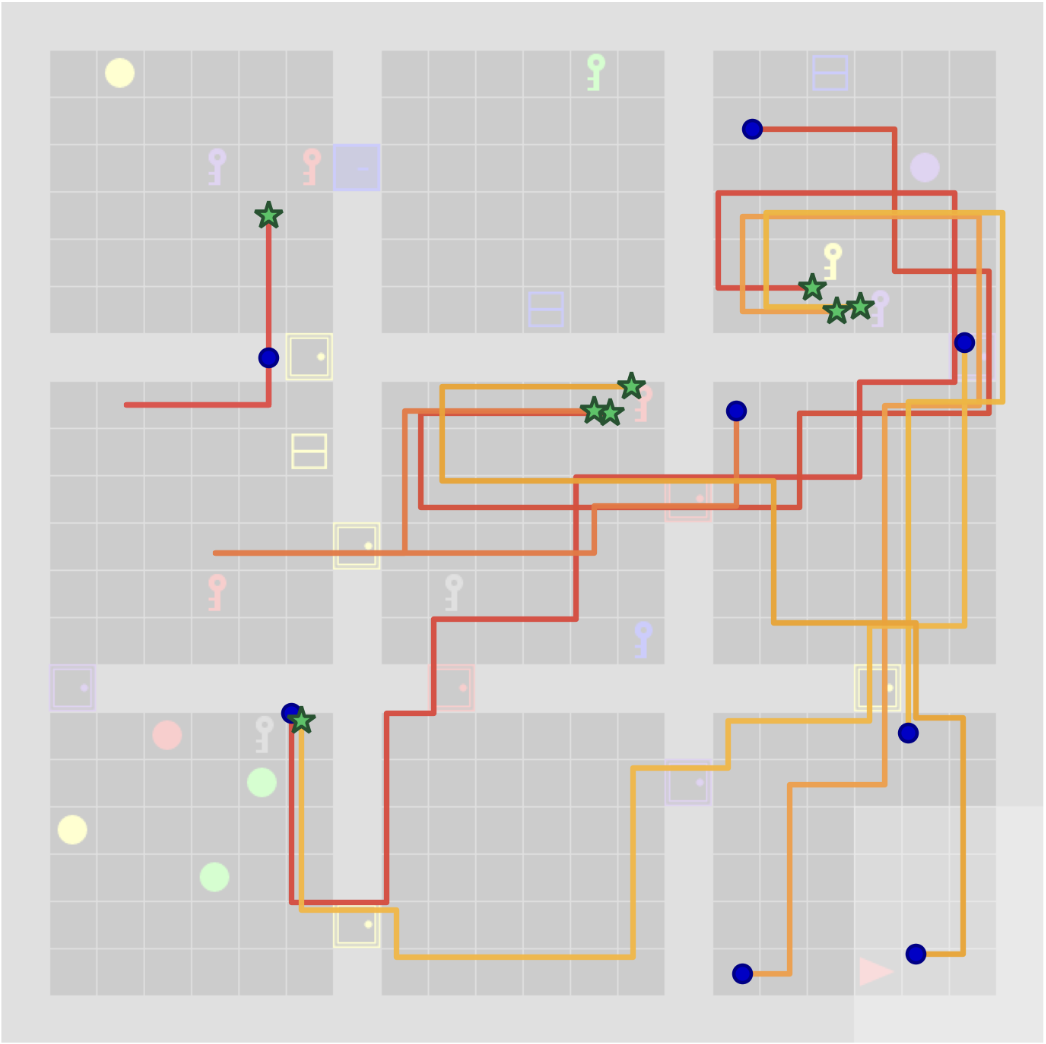}
    \caption{Generalization under state perturbations in BabyAI BossLevel environment. The mission here is ``\textit{Pickup a key}". The figure shows successful rollouts across perturbed initial states (blue circles), highlighting diverse strategies learned by the agent.
    }
    \label{fig:bosslevel_generalization}
\end{figure}

\newpage

\section{Proofs}
\label{appendix: proofs}

\subsection{Proof of Lemma~\ref{lem:set_perf_diff_lemma}}
\label{appendix: proof_poly_per_diff_lemma}

Recall our setup where, from each state $s$ visited during on-policy rollouts, we sample $n$ actions, $a_{1:n}$. As such, from each state, we generate a tree where each node (corresponding to a state) branches out into $n$ children. We denote all the nodes at depth $t$ of this tree by $s_{t}^{(1)},\cdots,s_{t}^{(n^t)}$. Furthermore, if, from a state $s$, the agent samples the set of actions $a_{1:n}$, the agent gets a reward ${f}(s, a_{1:n})$ where $f$ is the our objective function to be used in set RL. We will assume that $f$ is normalized between 0 and 1. For simplicity, we assume that the initial state is fixed. Note that we can construct the distribution of states visited by policy $\pi_\theta$ at time $t$ in this tree as follows: for $j \in \{1,\cdots,n^t\}$, the probability of reaching a state at the $t$-th timestep and in the $j$-th node at that level is:
\begin{align*}
    & \mathbb{P}(s_0 \rightarrow s_t^{(j)} = s, t, \pi_\theta) \\
    &= \sum_{ (a_0)_{1:n} } \pi_\theta( (a_0)_{1:n} \mid s_0) \sum_{s_{1}^{(1:n)}} P(s_{1}^{(1:n)}) \mid s_0, (a_0)_{1:n}  ) \cdots \\
    & \times \sum_{(a_{t-1})_{1:n}^{(1)},\cdots,(a_{t-1})_{1:n}^{(n^{t-1})}} \pi_\theta( (a_{t-1})_{1:n}^{(1)},\cdots,(a_{t-1})_{1:n}^{(n^{t-1})} \mid s_{t-1}^{(1)}, \cdots, s_{t-1}^{(n^{t-1})}) \\
    & \times P(s_{t}^{(j)} \mid s_{t-1}^{(1:n^{t-1})}, (a_{t-1})_{1:n}^{(1)},\cdots,(a_{t-1})_{1:n}^{(n^{t-1})} ) \\ 
\end{align*}

Here, each $\pi_\theta(a_{1:n} \mid s) = \prod_{i=1}^n \pi_\theta(a_i \mid s)$. Similarly, we use the following shorthand: $$\pi_\theta((a_{t-1})_{1:n}^{(1)},\cdots,(a_{t-1})_{1:n}^{(n^{t-1})} \mid s_{t-1}^{(1)}, \cdots, s_{t-1}^{(n^{t-1})})= \prod_{i=1}^{n^{t-1}} \pi_\theta((a_{t-1})_{1:n}^{(i)} \mid s_{t-1}^{(i)}).$$ 

Before we prove the lemma, we make the following observation: suppose the environment's state transition dynamics is deterministic (i.e., the distribution $P(s_{t+1} \mid s_t, a_t)$ is a Dirac delta distribution), then the set $Q$-function can be written using the set value function as follows:

\begin{align*}
    \setQ{\pi}{s}{a_{1:n}} = f(s, a_{1:n}) + \gamma \sum_{i=1}^n \setV{\pi}{s_{1}^{(i)}}
\end{align*}

where $s_{1}^{(1)},\cdots, s_{1}^{(n)}$ are the states reached from $s$ after taking actions $a_{1:n}$ independently. One can verify this by using the definition set $Q$-functoin and noting that: 

{\allowdisplaybreaks
\begin{align*}
    & \mathbb{E}\left[  \sum_{t=1}^\infty \gamma^t \sum_{i=1}^{n^t } {f}(s_{t}^{(i)}, (a_t)_{1:n}^{(i)}) \mid s_0 = s, a_{1:n}(s_0) = a_{1:n} \right] \\
    & = \mathbb{E}\left[\gamma \sum_{i=1}^n {f}(s_{1}^{(i)}, (a_1)_{1:n}^{(i)} +   \sum_{t=2}^\infty \gamma^t \sum_{i=1}^{n^t } {f}(s_{t}^{(i)}, (a_t)_{1:n}^{(i)} ) \Big| s_0 = s, (a_0)_{1:n}= a_{1:n} \right] \\
    &= \mathbb{E}_{(a_1)_{1:n}^{(i)}} \left[ \gamma \sum_{i=1}^n  {f}(s_{1}^{(i)}, (a_1)_{1:n}^{(i)}  )  + \mathbb{E}\left[\sum_{t=2}^\infty \gamma^t \sum_{i=1}^{n^t } {f}(s_{t}^{(i)}, (a_t)_{1:n}^{(i)}) \Big| \substack{s_0 = s, (a_0)_{1:n}= a_{1:n} \\ s_{1}^{(i)}, (a_1)_{1:n}^{(i)}} \right] \Big| \cdots   \right]\\
    &= \gamma \sum_{i=1}^n \mathbb{E}_{(a_1)_{1:n}^{(i)}} \left[ {f}(s_{1}^{(i)}, (a_1)_{1:n}^{(i)} ) + \mathbb{E}\left[\sum_{t=2}^\infty \gamma^{t-1}\sum_{j=1}^{n^{t-1}} {f}(s_{t} ^{(n(i-1) + j)}, (a_t)_{1:n}^{(n(i-1) + j)}) \Big| \cdots \right] \Big| \cdots \right]\\
    &= \gamma \sum_{i=1}^n \setV{\pi}{s_{1}^{(i)}}
\end{align*}}

Now we prove Lemma~\ref{lem:set_perf_diff_lemma}. Our proof follows the same procedure as in the proof of the standard performance difference lemma in \citet{Kakade2002ApproximatelyOA}. We first prove the following lemma: 

\begin{lem}
    Given the state visitation tree generated by policy $\pi_\theta$ and any normalized objective function $f$,  
    \begin{align}
        \label{eq: performance_difference_trick_2}
        \mathbb{E}_{\pi_\theta}\left[ \sum_{t=0}^\infty \gamma^t  \sum_{i=1}^{n^t} f(s_{t}^{(i)}, (a_t)_{1:n}^{(i)}) \right] = \frac{1}{1 - \gamma n}\mathbb{E}_{s \sim d^\sharp_{\pi_\theta} (s), a_{1:n} \sim \pi_\theta(\cdot \mid s)}\left[ f(s, a_{1:n})\right]
    \end{align}
    where $$d^\sharp_{\pi}(s) = (1 - \gamma n) \sum_{t=0}^\infty \sum_{i=1}^n \gamma^t \mathbb{P}(s_0 \rightarrow s_t^{(i)}, t, \pi_\theta).$$
\end{lem}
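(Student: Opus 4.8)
The plan is to mirror the classical derivation of the performance-difference lemma's ``discounted-visitation'' identity, but adapted to the branching tree generated by set RL. First I would expand the left-hand side by its definition and swap the sum over time with the expectation, so that the quantity becomes $\sum_{t=0}^\infty \gamma^t \, \mathbb{E}_{\pi_\theta}\big[\sum_{i=1}^{n^t} f(s_t^{(i)}, (a_t)_{1:n}^{(i)})\big]$. The crucial observation is that at depth $t$ the tree contains exactly $n^t$ nodes, and each node $s_t^{(i)}$ is reached with probability $\mathbb{P}(s_0 \to s_t^{(i)}, t, \pi_\theta)$ as written out in the excerpt; moreover, conditioned on reaching $s_t^{(i)}$, the set of actions $(a_t)_{1:n}^{(i)}$ is drawn from $\pi_\theta(\cdot \mid s_t^{(i)})$ (recalling $\pi_\theta(a_{1:n}\mid s) = \prod_i \pi_\theta(a_i\mid s)$). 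Hence the inner expectation rewrites as $\sum_{i=1}^{n^t} \sum_{s} \mathbb{P}(s_0 \to s_t^{(i)} = s, t, \pi_\theta)\, \mathbb{E}_{a_{1:n}\sim\pi_\theta(\cdot\mid s)}[f(s, a_{1:n})]$.

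Next I would interchange the order of summation: pull the sum over $t$ and over the $n^t$ node-indices $i$ inside, collecting them against the factor $\gamma^t$. Define $\mu(s) := \sum_{t=0}^\infty \sum_{i=1}^{n^t} \gamma^t\, \mathbb{P}(s_0 \to s_t^{(i)} = s, t, \pi_\theta)$. Then the left-hand side equals $\sum_s \mu(s)\, \mathbb{E}_{a_{1:n}\sim\pi_\theta(\cdot\mid s)}[f(s,a_{1:n})]$. The remaining task is purely bookkeeping about the normalization constant: the total mass is $\sum_s \mu(s) = \sum_{t=0}^\infty \sum_{i=1}^{n^t} \gamma^t = \sum_{t=0}^\infty (\gamma n)^t = \tfrac{1}{1-\gamma n}$, using $\gamma n < 1$ (which holds since $\gamma \in (0, 1/n)$ as assumed in the excerpt) and the fact that for each fixed $t$ the probabilities $\mathbb{P}(s_0\to s_t^{(i)}=s,t,\pi_\theta)$ sum to $1$ over $s$ and over $i\in\{1,\dots,n^t\}$. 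Therefore $d^\sharp_{\pi_\theta}(s) := (1-\gamma n)\,\mu(s)$ is a genuine probability distribution, and dividing and multiplying by $(1-\gamma n)$ yields exactly the claimed identity $\frac{1}{1-\gamma n}\mathbb{E}_{s\sim d^\sharp_{\pi_\theta}, a_{1:n}\sim\pi_\theta(\cdot\mid s)}[f(s,a_{1:n})]$.

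I should note one small discrepancy to reconcile: the lemma statement defines $d^\sharp_\pi(s) = (1-\gamma n)\sum_{t=0}^\infty\sum_{i=1}^n \gamma^t \mathbb{P}(s_0\to s_t^{(i)},t,\pi_\theta)$ with the inner sum written as $\sum_{i=1}^n$ rather than $\sum_{i=1}^{n^t}$; I would treat this as shorthand for ``sum over all nodes at level $t$'' (the natural reading, since there are $n^t$ such nodes), and in the writeup I would simply use $\sum_{i=1}^{n^t}$ consistently so the normalization argument goes through cleanly. The main obstacle — really the only subtle point — is justifying the interchange of the infinite sum over $t$ with the expectation and the sum over $s$; this is handled by nonnegativity of all terms together with the assumption that $f$ is normalized to $[0,1]$, so Tonelli's theorem applies and the rearrangements are unconditionally valid. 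Everything else is the standard telescoping/geometric-series manipulation, identical in spirit to Kakade and Langford's original argument but with the discount $\gamma$ effectively replaced by $\gamma n$ because each node spawns $n$ children.
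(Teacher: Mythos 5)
Your proposal is correct and follows essentially the same route as the paper's proof: unroll the expectation over the tree, exchange the sums, weight each depth-$t$ node by its reaching probability with actions drawn from $\pi_\theta(\cdot\mid s)$, and multiply and divide by $(1-\gamma n)$ to identify the result as an expectation under $d^\sharp_{\pi_\theta}$. Your added checks (Tonelli for the interchange, the normalization $\sum_s d^\sharp_{\pi_\theta}(s)=1$, and reading the paper's $\sum_{i=1}^{n}$ in the definition of $d^\sharp_\pi$ as the intended $\sum_{i=1}^{n^t}$) are all consistent with what the paper implicitly does.
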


\begin{proof}
    \begin{align*}
        & \mathbb{E}\left[ \sum_{t=0}^\infty \sum_{i=1}^{n^t} \gamma^t f(s_{t}^{(i)}, (a_t)_{1:n}^{(i)}) \right] \\
        & = \sum_{t=0}^\infty \gamma^t \sum_{i=1}^{n^t} \mathbb{E}\left[ f(s_{t}^{(i)}, (a_t)_{1:n}^{(i)}) \right] \\
        &= \sum_{t=0}^\infty \gamma^t \sum_{i=1}^{n^t} \sum_{s_{t}^{(i)}} \mathbb{P}(s_0 \rightarrow s_{t}^{(i)}, t, \pi_\theta) \mathbb{E}_{a_{1:n}(s_{t}^{ (i)})}\left[ f(s_{t}^{(i)}, (a_t)_{1:n}^{(i)}) \right] \\
        &= \frac{1}{1 - \gamma n} (1 - \gamma n) \sum_{t=0}^\infty \gamma^t \sum_{i=1}^{n^t}  \sum_{s_{t}^{(i)}} \mathbb{P}(s_0 \rightarrow s_{t}^{(i)}, t, \pi_\theta) \mathbb{E}_{a_{1:n}(s_{t, i})}\left[ f(s_{t}^{(i)}, (a_t)_{1:n}^{(i)}) \right] \\
        &=  \frac{1}{1 - \gamma n}\mathbb{E}_{s \sim d^\sharp_{\pi_\theta} (s), a_{1:n} \sim \pi_\theta(\cdot \mid s)}\left[ f(s, a_{1:n} \right].
    \end{align*}    
\end{proof}

\textbf{Lemma 3.2 (restated).} Given any two policies $\pi_\theta$ and $\pi_\beta$, 
\begin{align}
    \label{eq: performance_difference_trick}
    \setV{\pi_\theta}{s} - \setV{\pi_\beta}{s} = \frac{1}{1 - \gamma n}\mathbb{E}_{s \sim d^\sharp_{\pi_\theta}(\cdot), a_{1:n} \sim \pi_\theta(\cdot \mid s)}\left[ \setA{\pi_\beta}{s}{a_{1:n}}\right].     
\end{align}

\begin{proof}
We first, show that 
\begin{align}
    \label{eq: logical_step_in_poly_perf_diff_lemma}
    \setV{\pi_\theta}{s} - \setV{\pi_\beta}{s} = \mathbb{E}\left[ \sum_{t=0}^\infty \sum_{i=1}^{n^t} \setQ{\pi_\beta}{s_{t}^{(i)}}{(a_t)_{1:n}^{(i)}} - \setV{\pi_\beta}{s_{t}^{(i)}} \mid s_0 = s\right].
\end{align}
We show this by the following simplification: 

{\allowdisplaybreaks 
\begin{align*}
    &\setV{\pi_\theta}{s} - \setV{\pi_\beta}{s} \\[6pt]
    &= \mathbb{E}_{\pi_\theta}\Biggl[
        \sum_{t=0}^\infty \gamma^t\sum_{i=1}^{n^t} 
        {f}\bigl(s_{t}^{(i)}, (a_t)_{1:n}^{(i)} \bigr)
        \Biggm| s_0 = s
    \Biggr] - \setV{\pi_\beta}{s} \\[6pt]
    &= \mathbb{E}_{\pi_\theta}\Biggl[
        \sum_{t=0}^\infty \gamma^t \sum_{i=1}^{n^t} 
        {f}\bigl(s_{t}^{(i)},  (a_t)_{1:n}^{(i)} \bigr)
        \Biggm| s_0 = s
    \Biggr] - \setV{\pi_\beta}{s} \\[6pt]
    &\quad + \mathbb{E}_{\pi_\theta}\Biggl[
        \sum_{t=0}^\infty \gamma^t \sum_{i=1}^{n^t}\gamma  \sum_{j=1}^n 
        \setV{\pi_\beta}{s_{t+1}^{(n(i-1)+j)}} \Biggm| s_0 = s
    \Biggr] - \mathbb{E}_{\pi_\theta}\Biggl[
        \sum_{t=0}^\infty \gamma^t \sum_{i=1}^{n^t} \gamma  \sum_{j=1}^n 
        \setV{\pi_\beta}{s_{t+1}^{(n(i-1)+j)}} \Biggm| s_0 = s
    \Biggr] \\[6pt]
    &= \mathbb{E}_{\pi_\theta}\Biggl[
        \sum_{t=0}^\infty \gamma^t \sum_{i=1}^{n^t} 
        \Bigl(
            {f}\bigl(s_{t}^{(i)},  (a_t)_{1:n}^{(i)} \bigr) 
            + \gamma  \sum_{j=1}^n 
              \setV{\pi_\beta}{s_{t+1}^{(n(i-1)+j)}}
        \Bigr)
    \Biggr] - \mathbb{E}_{\pi_\theta}\Biggl[
        \sum_{t=0}^\infty \gamma^t \sum_{i=1}^{n^t} 
        \setV{\pi_\beta}{s_{t}^{(i)}} \Biggm| s_0^{(1)} = s \Biggr].
\end{align*}}

Now, the first term on the right hand side can be simplified (we suppress the condition that $s_0 = s$ in terms of notation):
{\allowdisplaybreaks
\begin{align*}
    & \mathbb{E}_{\pi_\theta}\Biggl[\sum_{t=0}^\infty \gamma^t\sum_{i=1}^{n^t}\Bigl( {f}\bigl(s_{t}^{(i)}, (a_t)_{1:n}^{(i)}  \bigr) + \gamma \sum_{j=1}^n  \setV{\pi_\beta}{s_{t+1}^{(n(i-1)+j)}} \Bigr)\Biggr] \\
    &= \sum_{t=0}^\infty \gamma^t \sum_{i=1}^{n^t} \mathbb{E}_{\pi_\theta}\left[ {f}\bigl(s_{t}^{(i)}, (a_t)_{1:n}^{(i)} \bigr) + \gamma \sum_{j=1}^n \setV{\pi_\beta}{s_{t+1}^{(n(i-1)+j)}} \right] \\
    &= \sum_{t=0}^\infty \gamma^t \sum_{i=1}^{n^t} \mathbb{E}_{s_{t}^{(i)}, (a_t)_{1:n}^{(i)} }\left[ \mathbb{E} \left[ {f}\bigl(s_{t}^{(i)}, (a_t)_{1:n}^{(i)} \bigr) + \gamma \sum_{j=1}^n \setV{\pi_\beta}{s_{t+1}^{(n(i-1)+j})}  \Bigg| s_{t}^{(i)}, (a_t)_{1:n}^{(i)} \right] \right]\\
    &= \sum_{t=0}^\infty \sum_{i=1}^{n^t} \mathbb{E}_{s_{t}^{(i)}, (a_t)_{1:n}^{(i)} }\left[ \setQ{\pi_\beta}{s_{t}^{(i)}}{(a_t)_{1:n}^{(i)}} \right]\\
    &= \mathbb{E}_{\pi_\theta} \left[ \sum_{t=0}^\infty \gamma^t \sum_{i=1}^{n^t} \setQ{\pi_\beta}{s_{t}^{(i)}}{(a_t)_{1:n}^{(i)}}   \right]
\end{align*}}

Therefore, 
\begin{align*}
    &\setV{\pi_\theta}{s} - \setV{\pi_\beta}{s} \\[6pt]
    &= \mathbb{E}_{\pi_\theta} \Biggl[ \sum_{t=0}^\infty \gamma^t \sum_{i=1}^{n^t} \setQ{\pi_\beta}{s_{t}^{(i)}}{(a_t)_{1:n}^{(i)}}  -
    \setV{\pi_\beta}{s_{t}^{(i)}} \Bigg| s_0 = s\Biggr].
\end{align*}

Then, using \cref{eq: performance_difference_trick_2}, the statement follows.

\end{proof}

\subsection{Proof of Proposition~\ref{prop:n_action_entropy_collapse_in_n_sample_rl}}

We follow the set-up in \citet{cui2025entropymechanismreinforcementlearning}. We parameterize the policy as a softmax distribution: $$\pi_\theta(a \mid s) = \frac{\exp\left( z_{sa}\right)}{\sum_{a'} \exp\left( z_{sa'}\right)}$$ where $z_{sa}$ is the output logit of action $a$ from state $s$. We also have the following derivative: $$\frac{\partial}{\partial z_{sa'}}\log \pi_\theta(a \mid s) = 1 \{a' = a \} - \pi_\theta(a' \mid s).$$ First, we prove the following that shows how the output logit changes after one-step parameter update in the set RL paradigm.

\begin{lem}
\label{lem: change_in_output_logit_set_RL}
    Given the softmax policy $\pi_\theta$ is updated using \cref{eq: gradient_set_RL} using a step-size $\alpha$, the change in the output logit $z_{sa}$ after one step update is given by $$z_{sa}^{k+1} - z_{sa}^k = \alpha \mathbb{E}_{a_{1:n} \sim \pi_\theta^k(\cdot \mid s)}\left[ f(s, a_{1:n}) \sum_{i=1}^n 1\{a_i= a\}\right] - \alpha n \pi_\theta^k(a \mid s) \mathbb{E}_{a_{1:n} \sim \pi_\theta^k(\cdot \mid s)}\left[ f(s, a_{1:n})\right].$$
\end{lem}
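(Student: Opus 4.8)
The plan is to compute the per-logit gradient of the set-RL objective in the single-state setting and then read off the gradient-ascent update. First I would specialize the set-RL policy gradient of \cref{eq: gradient_set_RL} to a single state $s$ with horizon $H = 1$: each trajectory $\tau_i$ then collapses to one action $a_i \sim \pi_\theta(\cdot \mid s)$ and the inner time-sum $\sum_{t=0}^T$ has a single term, so differentiating with respect to the logit $z_{sa}$ gives
\[
\frac{\partial}{\partial z_{sa}}\, \mathbb{E}_{a_{1:n}\sim\pi_\theta(\cdot\mid s)}\!\big[f(s,a_{1:n})\big] \;=\; \mathbb{E}_{a_{1:n}}\!\Big[\big(f(s,a_{1:n}) - \hat f(s)\big)\sum_{i=1}^n \frac{\partial}{\partial z_{sa}}\log\pi_\theta(a_i\mid s)\Big],
\]
where $\hat f(s) = \mathbb{E}_{a_{1:n}}[f(s,a_{1:n})]$ is the shared baseline. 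Substituting the softmax score identity $\frac{\partial}{\partial z_{sa}}\log\pi_\theta(a_i\mid s) = 1\{a_i = a\} - \pi_\theta(a\mid s)$ turns the bracketed sum into $\sum_{i=1}^n 1\{a_i = a\} - n\,\pi_\theta(a\mid s)$.

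The only step that requires care is checking that the baseline drops out. Because $\hat f(s)$ does not depend on the sampled tuple $a_{1:n}$, and because $\mathbb{E}_{a_{1:n}}\big[\sum_{i=1}^n 1\{a_i = a\}\big] = n\,\pi_\theta(a\mid s)$ by independence of the $a_i$, the contribution $\hat f(s)\cdot\mathbb{E}_{a_{1:n}}\big[\sum_{i=1}^n 1\{a_i = a\} - n\,\pi_\theta(a\mid s)\big]$ equals zero; this is precisely where the \emph{shared} (rather than leave-one-out) baseline of set RL keeps the expression clean. Distributing the remaining factor $f(s,a_{1:n})$ over the two terms then yields
\[
\frac{\partial}{\partial z_{sa}}\,\mathbb{E}_{a_{1:n}}[f(s,a_{1:n})] \;=\; \mathbb{E}_{a_{1:n}}\!\Big[f(s,a_{1:n})\sum_{i=1}^n 1\{a_i = a\}\Big] \;-\; n\,\pi_\theta(a\mid s)\,\mathbb{E}_{a_{1:n}}[f(s,a_{1:n})].
\]

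Finally, evaluating at the current iterate $\pi_\theta^k$ and applying one gradient-ascent step $z_{sa}^{k+1} = z_{sa}^k + \alpha\,\partial_{z_{sa}}\mathbb{E}_{a_{1:n}}[f]\big|_{\theta^k}$ produces exactly the claimed identity. I do not anticipate a genuine obstacle here: the argument is a routine score-function computation, and the only things to watch are the bookkeeping of the two nested expectations and the baseline cancellation described above.
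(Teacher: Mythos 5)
Your proposal is correct and follows essentially the same route as the paper: a score-function computation in which the joint log-probability of the independently sampled $a_{1:n}$ factors into a sum, the softmax identity $\partial_{z_{sa}}\log\pi_\theta(a_i\mid s) = 1\{a_i = a\} - \pi_\theta(a\mid s)$ is applied, and the two resulting terms are read off. The only cosmetic difference is that you carry the shared baseline $\hat f(s)$ from \cref{eq: gradient_set_RL} and show its contribution vanishes (via $\mathbb{E}[\sum_i 1\{a_i=a\}] = n\,\pi_\theta(a\mid s)$), whereas the paper differentiates $\mathbb{E}_{a_{1:n}}[f]$ directly, so the baseline never appears; both yield the identical identity.
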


\begin{proof}
    This can be proven by elementary properties of expectation: 
    \begin{align*}
        z_{sa}^{k+1} - z_{sa}^k &= \alpha \frac{\partial }{\partial z_{sa}^k} \mathbb{E}_{a_{1:n} \sim \pi_\theta^k(\cdot \mid s)} \left[ f(s, a_{1:n}) \right] \\
        &= \alpha \mathbb{E}_{a_{1:n} \sim \pi_\theta^k(\cdot \mid s)} \left[ \frac{\partial }{\partial z_{sa}^k} \log (\mathbb{P}(a_{1:n} \mid s))f(s, a_{1:n}) \right] \\ 
        &= \alpha \mathbb{E}_{a_{1:n} \sim \pi_\theta^k(\cdot \mid s)} \left[ \sum_{i=1}^n  \frac{\partial }{\partial z_{sa}^k} \log (\pi_\theta^k(a_{i} \mid s))f(s, a_{1:n}) \right] \\ 
        &= \alpha \mathbb{E}_{a_{1:n} \sim \pi_\theta^k(\cdot \mid s)} \left[ \sum_{i=1}^n  \left( 1\{a_i = a \} - \pi_\theta^k(a \mid s)\right)f(s, a_{1:n}) \right] \\ 
    \end{align*}
\end{proof}

Now we prove the main result: 

\textbf{Proposition 5.1 (restated).} Consider the set-RL setup at state $s$. After one update to the policy, the change in entropy, $\Delta = \mathcal{H}\left( \pi_\theta^{k+1} \mid s \right) - \mathcal{H}\left( \pi_\theta^{k} \mid s \right)$, is given by 
\begin{align*}
    \Delta \approx - \alpha \mathrm{Cov}_{{a}_{1:n}} \left( \frac{1}{n} \sum_{i=1}^n \log \pi_\theta^k({a}_i \mid s), \mathrm{Cov}_{a'_{1:n
    }} \left( f(s, a'_{1:n}), \sum_{i, j=1}^n {1}\{ a_i = {a}_j'\}\right)\right),
\end{align*}
where both covariances are taken with respect to $\pi_\theta^k(\cdot \mid s)$. 

\begin{proof}
    We have the following first order approximation of $\Delta$ \citep{cui2025entropymechanismreinforcementlearning}: $$\Delta \approx - \mathrm{Cov}_{a \sim \pi_\theta^k(\cdot \mid s)}\left( \log \pi_\theta^k(a \mid s) , z^{k+1}_{sa} - z^k_{sa}\right).$$

    Using Lemma~\ref{lem: change_in_output_logit_set_RL}, we get that  
    \begin{align*}
        \Delta \approx & \alpha n \mathbb{E}[f(s, a_{1:n})] \mathrm{Cov}_{a \sim \pi_\theta^k(\cdot \mid s)} \left( \log \pi_\theta^k(a \mid s), \pi_\theta^k(a \mid s) \right)\\
        & - \alpha \mathrm{Cov}_{a \sim \pi_\theta^k(\cdot \mid s)}\left(\log \pi_\theta^k(a \mid s), \mathbb{E}_{a_{1:n} \sim \pi_\theta^k(\cdot \mid s)}\left[ f(s, a_{1:n}) \sum_{i=1}^n 1\{a_i = a \}\right] \right)\\
    \end{align*}

    Note that 
    \begin{align*}
        \mathrm{Cov}_{a_{1:n}} \left( f(s, a_{1:n}), \sum_{i=1}^n 1\{a_i = a \}\right) = & \mathbb{E}_{a_{1:n} \sim \pi_\theta^k(\cdot \mid s)}\left[ f(s, a_{1:n}) \sum_{i=1}^n 1\{a_i = a \}\right] \\
        & - n\pi_\theta^k(a \mid s)\mathbb{E}_{a_{1:n} \sim \pi_\theta^k(\cdot \mid s)}\left[ f(s, a_{1:n})\right] 
    \end{align*}
    Using this, we get that 
    {\allowdisplaybreaks
    \begin{align*}
        \Delta \approx & \alpha n \mathbb{E}[f(s, a_{1:n})] \mathrm{Cov}_{a \sim \pi_\theta^k(\cdot \mid s)} \left( \log \pi_\theta^k(a \mid s), \pi_\theta^k(a \mid s) \right)\\
        & - \alpha \mathrm{Cov}_{a \sim \pi_\theta^k(\cdot \mid s)}\left(\log \pi_\theta^k(a \mid s), \mathbb{E}_{a_{1:n} \sim \pi_\theta^k(\cdot \mid s)}\left[ f(s, a_{1:n}) \sum_{i=1}^n 1\{a_i = a \}\right] \right)\\
        = & \alpha n \mathbb{E}[f(s, a_{1:n})] \mathrm{Cov}_{a \sim \pi_\theta^k(\cdot \mid s)} \left( \log \pi_\theta^k(a \mid s), \pi_\theta^k(a \mid s) \right)\\ 
        & - \alpha \mathrm{Cov}_{a \sim \pi_\theta^k(\cdot \mid s)}\left(\log \pi_\theta^k(a \mid s), \mathrm{Cov}_{a_{1:n}} \left( f(s, a_{1:n}), \sum_{i=1}^n 1\{a_i = a \}\right) + \right. \\
        & \left. n\pi_\theta^k(a \mid s)\mathbb{E}_{a_{1:n} \sim \pi_\theta^k(\cdot \mid s)}\left[ f(s, a_{1:n})\right]\right. \Bigg)\\
        & = - \alpha \mathrm{Cov}_{a \sim \pi_\theta^k(\cdot \mid s)} \left( \log \pi_\theta^k(a \mid s), \mathrm{Cov}_{a_{1:n}} \left( f(s, a_{1:n}), \sum_{i=1}^n 1\{a_i = a \}\right) \right). 
    \end{align*}}
    
    All that remains to show is 
    \begin{align*}
        & \mathrm{Cov}_{{a}_{1:n}} \left( \frac{1}{n} \sum_{i=1}^n \log \pi_\theta^k({a}_i \mid s), \mathrm{Cov}_{a'_{1:n}} \left( f(s, a'_{1:n}), \sum_{i, j=1}^n {1}\{ a'_i = {a}_j\}\right)\right) \\
        & = \mathrm{Cov}_{a \sim \pi_\theta^k(\cdot \mid s)} \left( \log \pi_\theta^k(a \mid s), \mathrm{Cov}_{a_{1:n}} \left( f(s, a_{1:n}), \sum_{i=1}^n 1\{a_i = a \}\right) \right).    
    \end{align*}
    This can be seen using the linearity of covariance and the fact that each $a_i$ in $a_{1:n}$ is sampled independently: 
    \begin{align*}
        & \mathrm{Cov}_{{a}_{1:n}} \left( \frac{1}{n} \sum_{i=1}^n \log \pi_\theta^k({a}_i \mid s), \mathrm{Cov}_{a'_{1:n}} \left( f(s, a'_{1:n}), \sum_{i, j=1}^n {1}\{ a_i = {a}_j' \}\right)\right) \\
        & = \frac{1}{n} \sum_{i=1}^n \mathrm{Cov}_{{a}_{1:n}} \left(\log \pi_\theta^k({a}_i \mid s), \mathrm{Cov}_{a'_{1:n}} \left( f(s, a'_{1:n}), \sum_{j=1}^n {1}\{ a_i = {a}_j'\}\right)\right) \\
        & = \mathrm{Cov}_{a \sim \pi_\theta^k(\cdot \mid s)} \left( \log \pi_\theta^k(a \mid s), \mathrm{Cov}_{a_{1:n}} \left( f(s, a_{1:n}), \sum_{i=1}^n 1\{a_i = a \}\right) \right). 
    \end{align*}
\end{proof}

\subsection{Proof of Lemma~\ref{lem: policy_update_on_action_set_using_scaffold}}
\label{proof: policy_update_on_action_set_using_scaffold}

\textbf{Lemma 5.3 (restated).} Consider any set of actions $a_{1:n} \in \mathcal{A}^n$. The change in the log probability of sampling this set of actions after one policy update using set RL can be written as the following first-order approximation: $$\log \pi_\theta^{k+1}(a_{1:n} \mid s) \approx \log \pi_\theta^k(a_{1:n} \mid s) + \lambda \Lambda_{f}(a_{1:n}; \pi_\theta^k) - \lambda' C(\theta^k)$$ where $C(\theta^k)$ is a function independent of $a_{1:n}$.     

\begin{proof}
    This follows from using a first-order Taylor approximation of $\sum_{i=1}^n \log \pi_\theta^{k+1}(a_i \mid s)$ at $\sum_{i=1}^n \log \pi_\theta^{k}(a_i \mid s)$ and then using Lemma~\ref{lem: change_in_output_logit_set_RL}. Proceeding this way, we get that, $$C(\theta^k) = \mathrm{Cov}_{a_{1:n}' \sim \pi_\theta^k(\cdot \mid s)}\left( f(s, a_{1:n}'), \sum_{i=1}^n \pi_\theta^k (a_{i}' \mid s)\right).$$ This derivation gives us that $\lambda = \alpha I(a_{1:n})$ and $\lambda' = \alpha n$. 
\end{proof}

\subsection{Proof of Proposition~\ref{prop: homogeneous_scaffold_negative}}
\label{proof: homogeneous_scaffold_negative}

\textbf{Proposition 5.4 (restated).} Consider the polychromic objective in \cref{eq: eg_poly_objective}. For any homogeneous set $a_{1:n} = \{ a \}$ where $r(s, a) = 1$, there exists $\epsilon \in (0, 1)$ such that $\Lambda_{{f_\mathrm{poly}}}(a) < 0$ when $\pi_\theta(a \mid s) > \epsilon$. Furthermore, the scaffold values of these homogeneous sets satisfy the bound $\Lambda_{f_\mathrm{poly}}(a) \leq \sqrt{\frac{p(1-p)}{n}}$. 

\begin{proof}
    We first prove the first part of the proposition. Let $p = \pi_\theta(a \mid s)$. We will use the shorthand $\Lambda(a) = \Lambda_{f_\mathrm{poly}}(a ; \pi_\theta)$, $f = f_\mathrm{poly}$ and $\hat{f} = \mathbb{E}_{a_{1:n} \sim \pi_\theta(\cdot \mid s)}\left[ f(s, a_{1:n}) \right]$. Then, 
    {\allowdisplaybreaks
    \begin{align*}
        \Lambda(a) &= \mathrm{Cov}_{a_{1:n}' \sim \pi_\theta(\cdot \mid s)} ( \hat{f}(s, a_{1:n}'), \frac{1}{I(a)}\sum_{i,j=1}^n 1\{ a_i' = a_j\} ) \\
        &= \mathrm{Cov}_{a_{1:n}' \sim \pi_\theta (\cdot \mid s)} ( \hat{f}(s, a_{1:n}'), \frac{1}{n}\sum_{i=1}^n 1\{ a_i' = a\}) \\
        &= \sum_{j=0}^n \left( \sum_{|a_{1:n}' \cap \{a\}| = j} \pi_\theta(a_{1:n}' \mid s) (f(s, a_{1:n}') - \hat{f})(\frac{j}{n} - \mathbb{E}_{\alpha_{1:n} \sim \pi_\theta(\cdot \mid s)}\left[\frac{1}{n}\sum_{i=1}^n1\{\alpha_i = a \}\right] )\right) \\
        &= \sum_{j=0}^n \left( \sum_{|a_{1:n}' \cap \{a\}| = j} \pi_\theta(a_{1:n}' \mid s) (f(s, a_{1:n}') - \hat{f})(\frac{j}{n} - p )\right) \\
        &= \sum_{j=0}^{\floor{np}} (\frac{j}{n} - p ) \left( \sum_{|a_{1:n}' \cap \{a\}| = j} \pi_\theta(a_{1:n}' \mid s) (f(s, a_{1:n}') - \hat{f})\right) \\
        & + \sum_{j=\floor{np} +1}^{n} (\frac{j}{n} - p ) \left( \sum_{|a_{1:n}' \cap \{a\}| = j} \pi_\theta(a_{1:n}' \mid s) (f(s, a_{1:n}') - \hat{f})\right) \\
        &= \sum_{j=0}^{\floor{np}} (\frac{j}{n} - p ) \left( \sum_{|a_{1:n}' \cap \{a\}| = j} \pi_\theta(a_{1:n}' \mid s) (f(s, a_{1:n}'))\right) \\
        & + \sum_{j=\floor{np} +1}^{n} (\frac{j}{n} - p ) \left( \sum_{|a_{1:n}' \cap \{a\}| = j} \pi_\theta(a_{1:n}' \mid s) (f(s, a_{1:n}'))\right) \\
        & - \hat{f}\left( \sum_{j=0}^n \sum_{|a_{1:n}' \cap \{a\}| = j}\pi_\theta(a_{1:n}' \mid s)(\frac{j}{n} - p) \right) 
    \end{align*}
    }
    
    Now, we can simplify using properties of the binomial distribution as follows:
    \begin{align*}
        \sum_{j=0}^n \sum_{|a_{1:n}' \cap \{a\}| = j}\pi_\theta(a_{1:n}' \mid s)(\frac{j}{n} - p)  = \sum_{j=0}^n \left( \frac{j}{n} - p\right)\binom{n}{j}p^j(1-p)^{n-j} = 0.
    \end{align*}
    Therefore, the scaffold value becomes:
    {\allowdisplaybreaks 
    \begin{align*}
        \Lambda(a) &= \sum_{j=0}^{\floor{np}} (\frac{j}{n} - p ) \left( \sum_{|a_{1:n}' \cap \{a\}| = j} \pi_\theta(a_{1:n}' \mid s) (f(s, a_{1:n}'))\right) \\
        & + \sum_{j=\floor{np} +1}^{n} (\frac{j}{n} - p ) \left( \sum_{|a_{1:n}' \cap \{a\}| = j} \pi_\theta(a_{1:n}' \mid s) (f(s, a_{1:n}'))\right) \\
        & \leq \sum_{j=0}^{\floor{np}} (\frac{j}{n} - p )\frac{2j}{n^2}  \sum_{|a_{1:n}' \cap \{a\}| = j}\pi_\theta(a_{1:n}' \mid s) \\
        & + \sum_{j=\floor{np} +1}^{n} (\frac{j}{n} - p ) \frac{n-j+1}{n}\sum_{|a_{1:n}' \cap \{a\}| = j}\pi_\theta(a_{1:n}' \mid s) \\
        & = \sum_{j=0}^{\floor{np}} (\frac{j}{n} - p )\frac{2j}{n^2} \binom{n}{j}p^j(1-p)^{n-j} \\
        & + \sum_{j=\floor{np} +1}^{n} (\frac{j}{n} - p ) \frac{n-j+1}{n} \binom{n}{j}p^j(1-p)^{n-j}\\
        & = \sum_{j=0}^{\floor{np}} (\frac{j}{n} - p )\frac{2j}{n^2} \binom{n}{j}p^j(1-p)^{n-j} \\
        & + \sum_{j=\floor{np} +1}^{n-1} (\frac{j}{n} - p ) \frac{n-j+1}{n} \binom{n}{j}p^j(1-p)^{n-j}\\
    \end{align*}}
    
    Here, in the second line, we used the fact when $a_{1:n}' \cap \{a\}$ is of size $j$, then the smallest possible value of $f(s, a_{1:n}')$ is $\frac{2j}{n^2}$ since at least $2$ out of $n$ elements are unique and at least $j$ out of $n$ elements attain reward $+1$. On the other hand, we can bound it above by $\frac{n-j+1}{n}$ which happens if all elements get reward $+1$ and $n-j+1$ elements are unique. In the last line, we used the fact that when $f(s, a_{1:n}' = \{a\}) = 0$ as the diversity is 0. Now, let $\epsilon = \frac{n-1}{n}$. Then, when $p \geq \epsilon$, $\floor{np} \geq n-1$. Therefore, 
    \begin{align*}
        \Lambda(a) &= \sum_{j=0}^{\floor{np}} (\frac{j}{n} - p )\frac{2j}{n^2} \binom{n}{j}p^j(1-p)^{n-j} \leq 0.\\
    \end{align*}
    Now we prove the second part. First, define the following upper bound of the scaffold value of the homogeneous sets: \begin{align*}
        \Lambda(a) \leq \mathbb{E}_{X \sim \mathrm{Bin}(n, p)}\left[ \left( \frac{X}{n} - p\right)C_n(X)\right] =: B_a(n)
    \end{align*} 
    where $C_n(x) = \begin{cases}
        \frac{2x}{n^2}\quad x \leq \floor{np} \\
        \frac{n-x+1}{n} \quad x \in [\floor{np} + 1, n-1] \\
        0, \quad x =n
    \end{cases}.$
    
    Now, $\mathbb{E}[\frac{X}{n}-p] = 0$ and $\mathbb{E}[(\frac{X}{n} - p)^2] = \mathrm{Var}(\frac{X}{n}) = \frac{1}{n^2}\mathrm{Var}(X) = \frac{p(1-p)}{n}$. On the other hand, we can bound $C_n(X)$ as follows: when $x \leq \floor{np}$, $C_n(x) = \frac{2x}{n^2} \leq \frac{2\floor{np}}{n^2} \leq \frac{2}{n}$. On the other hand, when $j \in [\floor{np} + 1, n-1]$, we have $C_n(x) = \frac{n-x+1}{n} \leq 1$. Combining, we have that $\mathbb{E}[C_n(X)^2 ] \leq 1$. Therefore, using the Cauchy–Schwarz inequality:
    \begin{align*}
        B_a(n) &= \mathbb{E}_{X \sim \mathrm{Bin}(n, p)}\left[ \left( \frac{X}{n} - p\right)C_n(X)\right] \\
        & \leq \sqrt{\mathbb{E}\left[ \left( \frac{X}{n} - p\right)^2\right] \mathbb{E}\left[ C_n(X)^2\right]} \\
        & \leq \sqrt{\frac{p(1-p)}{n}}. 
    \end{align*}
\end{proof}

\subsection{Proof of Proposition~\ref{prop: heterogeneous_scaffold_positive}}
\label{proof: heterogeneous_scaffold_positive}

\textbf{Proposition 5.5 (restated).} Suppose $a_{1:n}$ is heterogeneous where each $a_i$ is unique with probability $\frac{p}{n}$ where $p \in (0, \frac{1}{n})$. Suppose exactly $q$ of the $n$ actions satisfy $r(s,a_i) = 1$, and that any other action $a' \notin a_{1:n}$ with $\pi_\theta(a' \mid s) > 0$ yields $r(s,a') = 0$. Then, the scaffold value of $a_{1:n}$ satisfies $\Lambda_{f_\mathrm{poly}}(a_{1:n}) > \frac{qp^n(1-p)}{n} \left( 1 - \left( 1 - \frac{1}{n}\right)^n\right)$. 

\begin{proof}
    This can be proven using very similar techniques. Let $P_j = \binom{n}{p}p^j (1 - p)^{n-j}$. Ag ain, we use the shorthand $f = f_\mathrm{poly}$. Then, 
    {\allowdisplaybreaks\begin{align*}
        \Lambda(a_{1:n}) & = \sum_{j=0}^n P_j (\frac{j}{n} - p) \mathbb{E}_{|a_{1:n}' \cap a_{1:n}| = j}\left[ f(s, a_{1:n}')\right] \\
        & = P_0 (0 - p)\cdot 0 + \sum_{j=1}^{\floor{np}} P_j (\frac{j}{n} - p) \mathbb{E}_{|a_{1:n}' \cap a_{1:n}| = j}\left[ f(s, a_{1:n}')\right] \\
        & + \sum_{j=\floor{np} + 1}^{n} P_j (\frac{j}{n} - p) \mathbb{E}_{|a_{1:n}' \cap a_{1:n}| = j}\left[ f(s, a_{1:n}')\right] \\
        & \geq P_n (\frac{n}{n} - p) \mathbb{E}_{|a_{1:n}' \cap a_{1:n}| = n}\left[ f(s, a_{1:n}')\right] \\
        & = P_n(1 - p)\mathbb{E}_{|a_{1:n}' \cap a_{1:n}| = n}\left[ f(s, a_{1:n}')\right] \\
    \end{align*}}
    All that remains to show is that $$\mathbb{E}_{|a_{1:n}' \cap a_{1:n}| = n}\left[ f(s, a_{1:n}')\right] = \frac{q}{n}\left( 1 - \left( 1 - \frac{1}{n} \right)^n\right).$$ Note that, since the intersection's cardinality is $n$ and the sampling is done independently, this is equivalent to drawing $n$ items uniformly with replacement from $a_{1:n}$. Also note that $\mathbb{E}\left[ \frac{1}{n}\sum_i r(s, a_i') \cdot d(s, a_{1:n}')\right] = \frac{1}{n}\sum_i \mathbb{E}[r(s, a_i')\cdot d(s, a_{1:n}')]$. Since this is a uniform distribution, this becomes equal to $\mathbb{E}[r(s, a_i')]\mathbb{E}[d(s, a_{1:n}')] = \frac{q}{n}\mathbb{E}[d(s, a_{1:n}')]$. 

    On the other hand, to compute $\mathbb{E}[d(s, a_{1:n}')]$, we use indicator variables. Let $1_k = 1 \{a_k \in a_{1:n}' \}$. Then, $\mathbb{E}\left[1_k \right] = 1 - \left( 1 - \frac{1}{n}\right)^n$. Now, 

    \begin{align*}
        \mathbb{E}_{ \left| a_{1:n}' \cap a_{1:n} \right| = n}[d(s, a_{1:n}')] & = \mathbb{E}_{ \left| a_{1:n}' \cap a_{1:n} \right| = n} [ \frac{1}{n}\sum_k 1_k ] \\
        & = 1 - \left( 1 - \frac{1}{n}\right)^n 
    \end{align*}
\end{proof}

\end{document}